\newcommand{\Unif}{\mathcal{U}}
\newcommand{\vep}{\varepsilon}
\newcommand{\Ind}{\mathbf{1}}
\newcommand{\bb}[1]{\left[#1\right]}
\newcommand{\bp}[1]{\left(#1\right)}
\newcommand{\bc}[1]{\left\{#1\right\}}
\renewcommand{\P}{\mathbb{P}}
\newcommand{\E}{\mathbb{E}}
\newcommand{\hmu}{\hat{\mu}}
\def\eps{\epsilon}
\def\bmu{\bm{\mu}}
\def\betaF{{\beta_{\mathcal{F}}}}
\newcommand\IR{\mathbb{R}}
\newcommand\BR{\text{BR}}
\newcommand{\Normal}[3][xxx]{\mathcal{N}\utilsparen{#1}{#2,#3}}
\newcommand{\Bern}[2][xxx]{\mathcal{B}\utilsparen{#1}{#2}}
\begin{document}
	
	
	
	\RUNTITLE{
		Greedy Algorithms for Many Armed Bandit
	}
	
	\TITLE{
		The Unreasonable Effectiveness of Greedy Algorithms in Multi-Armed Bandit with Many Arms
	}
	
	
	\ARTICLEAUTHORS{%
		\AUTHOR{Mohsen Bayati}\AFF{Graduate School of Business, Stanford University,\EMAIL{bayati@stanford.edu}}
		\AUTHOR{Nima Hamidi}\AFF{Department of Statistics, Stanford University, \EMAIL{hamidi@stanford.edu}}
		\AUTHOR{Ramesh Johari}\AFF{Department of Management Science and Engineering, Stanford University, \EMAIL{rjohari@stanford.edu}}
		\AUTHOR{Khashayar Khosravi}\AFF{Department of Electrical Engineering, Stanford University, NYC, \EMAIL{khosravi@stanford.edu}}
	} 
	
	\ABSTRACT{
We investigate a Bayesian $k$-armed bandit problem in the \emph{many-armed} regime, where $k \geq \sqrt{T}$ and $T$ represents the time horizon. Initially, and aligned with recent literature on many-armed bandit problems, we observe that subsampling plays a key role in designing optimal algorithms; the conventional UCB algorithm is sub-optimal, whereas a subsampled UCB (SS-UCB), which selects $\Theta(\sqrt{T})$ arms for execution under the UCB framework, achieves rate-optimality. However, despite SS-UCB's theoretical promise of optimal regret, it empirically underperforms compared to a greedy algorithm that consistently chooses the empirically best arm. This observation extends to contextual settings through simulations with real-world data. Our findings suggest a new form of \emph{free exploration} beneficial to greedy algorithms in the many-armed context, fundamentally linked to a tail event concerning the prior distribution of arm rewards. This finding diverges from the notion of free exploration, which relates to covariate variation, as recently discussed in contextual bandit literature. Expanding upon these insights, we establish that the subsampled greedy approach not only achieves rate-optimality for Bernoulli bandits within the many-armed regime but also attains sublinear regret across broader distributions. Collectively, our research indicates that in the many-armed regime, practitioners might find greater value in adopting greedy algorithms.}
	
\KEYWORDS{Greedy algorithms, free exploration, multi-armed bandit}
	

	
	\maketitle

	

\section{Introduction}\label{sec:intro}

In this paper, we consider the standard stochastic multi-armed bandit (MAB) problem, in which the decision-maker takes actions sequentially over $T$ time periods (the {\em horizon}).
At each time period, the decision-maker chooses one of $k$ arms (or decisions), and receives an uncertain reward. The goal is to maximize cumulative rewards attained over the horizon.  Crucially, in the typical formulation of this problem, the set of arms $k$ is assumed to be ``small'' relative to the time horizon $T$; in particular, in standard asymptotic analysis of the MAB setting, the horizon $T$ scales to infinity while $k$ remains constant.
In practice, however, there are many situations where the number of arms is large relative to the time horizon of interest.  For example, drug development typically considers many combinations of basic substances; thus MABs for adaptive drug design inherently involve a large set of arms.  Similarly, when MABs are used in recommendation engines for online platforms, the number of choices available to users is enormous: this is the case in e-commerce (many products available); media platforms (many content options); online labor markets (wide variety of jobs or workers available); dating markets (many possible partners); etc.


Formally, we say that an MAB instance is in the \emph{many-armed regime} where $k\ge \sqrt{T}$.  In our theoretical results, we show that the threshold $\sqrt{T}$ is in fact the correct point of transition to the many-armed regime, at which behavior of the MAB problem becomes qualitatively different than the regime where $k < \sqrt{T}$.  Throughout our paper, we consider a Bayesian framework, i.e., where the arms' reward distributions are drawn from a prior.

In \S \ref{subsec:low}, we first use straightforward arguments to establish a fundamental lower bound of $\Omega(\sqrt{T})$ on Bayesian regret in the many-armed regime.
We note that prior Bayesian lower bounds for the stochastic MAB problem require $k$ to be fixed while $T \to \infty$ (see, e.g., \citealp{kaufmann2018bayesian, lai1987adaptive, lattimore2018bandit}), and hence, are not applicable in the many-armed regime.

Our first observation (see \S \ref{subsec:upp}), aligned with recent \citep{katzsamuels2020true,ren2019exploring} and concurent literature \citep{zhu2020onregret}, is the importance of subsampling.  The standard UCB algorithm can perform quite poorly in the many-armed regime, because it over-explores arms: even trying every arm once leads to a regret of $\Omega(k)$. Instead, we see that the $\Omega(\sqrt{T})$ bound is achieved (up to logarithmic factors) by a subsampled upper confidence bound (SS-UCB) algorithm, where we first select $m$ (e.g., $m=\sqrt{T}$) arms uniformly at random, and then run  a standard UCB algorithm \citep{lai1985asymptotically,auer2002finite} with just these arms.

Given the primary goal of this paper, which is to examine empirical performance, our numerical investigation unveils intriguing behaviors. In Figure \ref{fig:alph_75}, we simulate several different algorithms across 400 simulations for two pairs of $T, k$ in the many-armed regime.\footnote{Our code is available at \href{http://github.com/khashayarkhv/many-armed-bandit}{http://github.com/khashayarkhv/many-armed-bandit}.} Notably, the greedy algorithm (Greedy)---an algorithm that pulls each arm once and thereafter pulls the empirically best arm for all remaining times—performs exceptionally well. This is despite the well-known fact that the Greedy algorithm can suffer linear regret in the standard Multi-Armed Bandit (MAB) problem, as it might too early fixate on a suboptimal arm. Consistent with the initial observation, subsampling enhances the performance of all algorithms, including UCB, Thompson sampling (TS), and Greedy. Specifically, the subsampled greedy algorithm (SS-Greedy) surpasses all other algorithms in performance.
\begin{figure*}[htpb]
  \centering
  \subfigure{\includegraphics[width=0.47\textwidth]{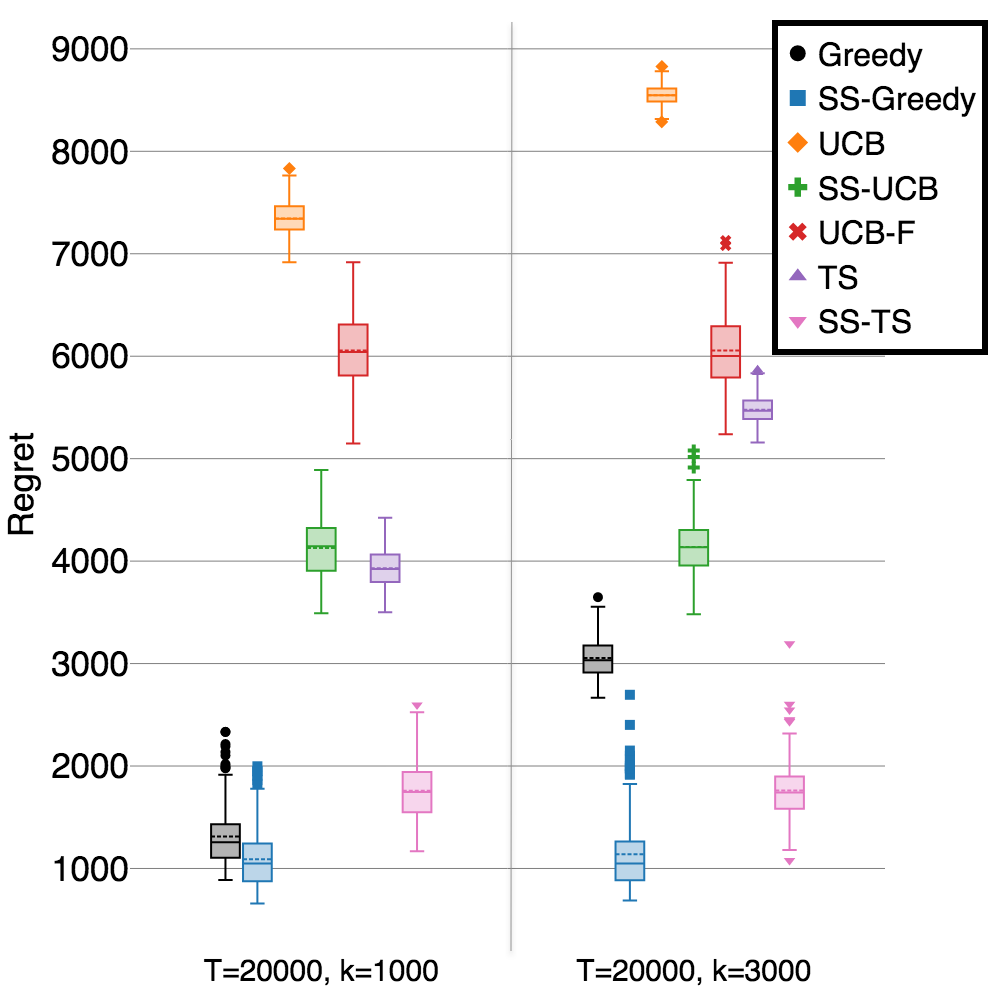}}
\hfill
  \subfigure{\includegraphics[width=0.47\textwidth]{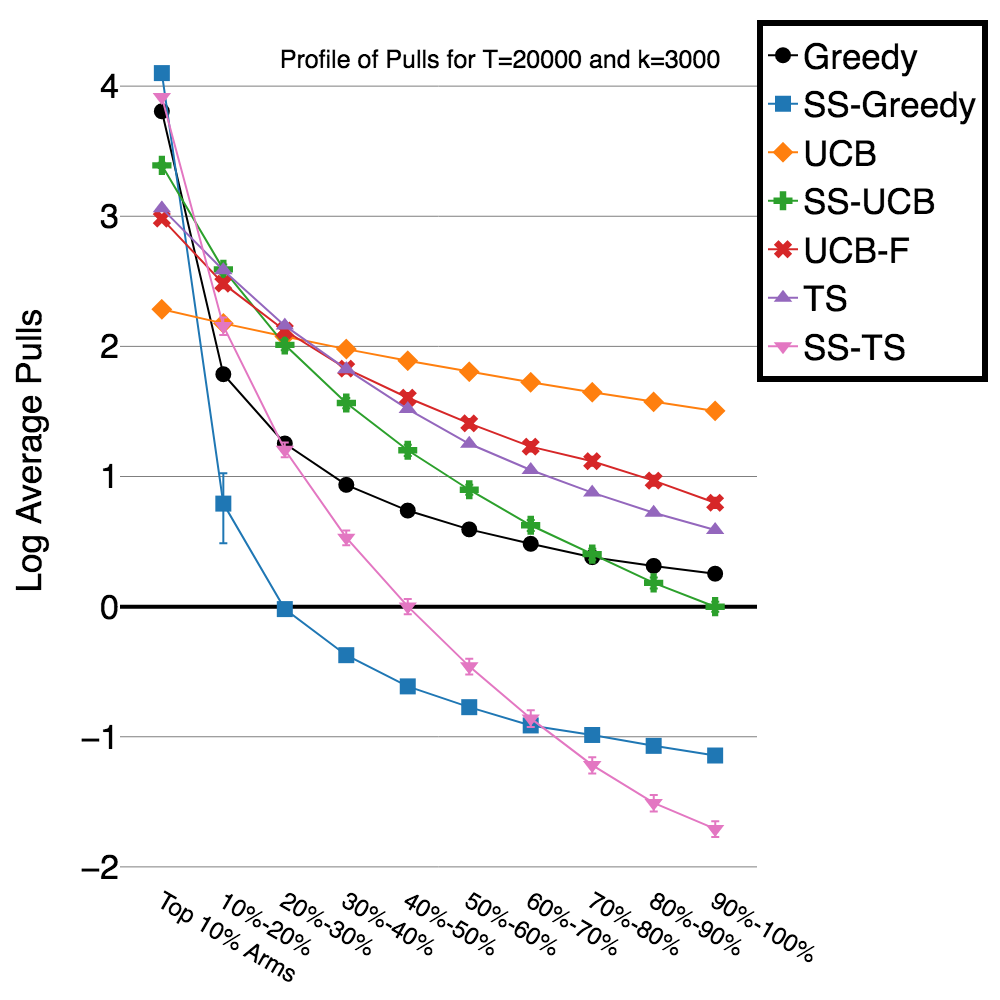}}
  \caption{
  	Distribution of the per-instance regret (on left) and profile of arm pulls in logarithmic scale based on arms index (on right). Rewards are generated according to $\mathcal{N}(\mu_i,1)$, with $\mu_i$ are iid uniform samples from $[0,1]$. The list of algorithms included is as follows. (1) UCB: Algorithm \ref{alg:ucb-asymp}, (2) SS-UCB: Algorithm \ref{alg:subs-ucb} with $m = \sqrt{T}$,
  	(3) Greedy: Algorithm \ref{alg:greedy}, (4) SS-Greedy: Algorithm \ref{alg:subs-greedy} with $m = T^{2/3}$ (see Theorem \ref{thm:greedy-reg}),
  	(5) UCB-F: UCB-F algorithm of \cite{wang2009algorithms} with the choice of confidence set $\mathcal{E}_t = 2 \log (10 \log t)$,
  	(6) TS: Thompson Sampling algorithm \cite{thompson1933likelihood,russo2014learning,agrawal2012analysis}, and (7) SS-TS: subsampled TS with $m = \sqrt{T}$.
}
  \label{fig:alph_75}
\end{figure*}

The right panel in Figure \ref{fig:alph_75} shows that Greedy and SS-Greedy benefit from a novel form of \emph{free exploration}, that arises due to the availability of a large number of near-optimal arms. This free exploration helps the greedy algorithms to quickly discard sub-optimal arms that are substantially over-explored by
algorithms with ``active exploration'' (i.e., UCB, TS, and their subsampled versions).  We emphasize that this source of free exploration is distinct from that observed in recent literature on contextual bandits (see, e.g., \citealp{bastani2017mostly, kannan2018smoothed, raghavan2018externalities, hao2019adaptive}), where free exploration arises due to diversity in the context distribution.
Our extensive simulations in \S \ref{sec:simulations} and in Appendix \ref{app:add-sim} show that these insights are robust to varying rewards and prior distributions. Indeed, similar results are obtained with Bernoulli rewards and general beta priors.
Further, using simulations, we also observe that the same phenomenon arises in the contextual MAB setting, via simulations with synthetic and real-world data.

Motivated by these observations, in \S \ref{sec:greedy} and \S \ref{sec:generalizations} we embark on a theoretical analysis of Greedy in the many-armed regime to complement our empirical investigation and clarify the source of free-exploration. We demonstrate that, with high probability, one of the arms on which Greedy focuses is likely to have a high mean reward (as also observed in the right panel of Figure \ref{fig:alph_75}). Our proof technique employs the Lundberg inequality to relate the likelihood of this event to the distribution of the ruin event of a random walk, which may be of independent interest for analyzing the performance of greedy algorithms in other contexts. Using this result, we show that for Bernoulli rewards, the regret of Greedy is ${O}\big(\max(k, T/k)\big)$; notably, for $k \geq \sqrt{T}$, SS-Greedy is optimal (and for $k = \sqrt{T}$, Greedy is optimal). For more general reward distributions, we demonstrate that, under a mild condition, an upper bound on the regret of Greedy is ${O}\big(\max(k,T/\sqrt{k})\big)$. Thus, theoretically, for general reward distributions in the many-armed regime, Greedy achieves \emph{sublinear}, though not optimal, regret.

Our theoretical findings shed light on why Greedy and SS-Greedy algorithms exhibit strong performance in our numerical experiments, attributed to the novel form of free exploration identified in the many-armed regime. Although our theoretical analyses do not prove the universal rate optimality of SS-Greedy, the discrepancy between regret bounds and empirical outcomes suggests that regret bounds do not capture the entire narrative. Indeed, the robust empirical performance of Greedy and SS-Greedy algorithms underscores, from a practical perspective, the potential preference for greedy algorithms in the many-armed regime, as supported by both our empirical and theoretical insights. This recommendation becomes even more compelling when considering contextual settings, where such algorithms likely benefit further from free exploration due to the diversity of contexts, as previously mentioned.



%
\subsection{Related Work}\label{sec:related}

The literature on stochastic MAB problems with a finite number of arms is vast; we refer the reader to recent monographs by \cite{lattimore2018bandit} and \cite{slivkins2019intro} for a thorough overview.  Much of this work carries out a frequentist regret analysis. In this line, our work is most closely related to work on the {\em infinitely} many-armed bandit problem, first studied by \cite{berry1997bandit} for Bernoulli rewards. They provided algorithms with $O(\sqrt{T})$ regret, and established a $\sqrt{2 T}$ lower-bound in the Bernoulli setting while a matching upper bound is proved by \cite{bonald2013two}. In \citep{wang2009algorithms}, the authors studied more general reward distributions and proposed an optimal (up to logarithmic factors) algorithm called UCB-F that is constructed based on the UCB-V algorithm of \cite{audibert2007tuning}. In fact, our results in \S \ref{sec:LB-and-opt} also leverage ideas from \citep{wang2009algorithms}. The analysis of the infinitely many-armed bandit setting was later extended to simple regret \citep{carpentier2015simple} and quantile regret minimization \citep{chaudhuri2018quantile}. Adjustments of confidence bounds in the setting where number of arms scales as $k = T^{\zeta}$ was studied in \citep{chan2019optimal}. In a related work, \cite{russo2018satisficing} proposed using a variant of Thompson Sampling for finding ``satisficing'' actions in the complex settings where finding the optimal arm is difficult.

Recent works by \cite{katzsamuels2020true}, \cite{ren2019exploring}, and \cite{zhu2020onregret} have studied the concept of subsampling in many-armed settings. \cite{katzsamuels2020true} demonstrated the advantage of integrating subsampling with the UCB algorithm to enhance the sample complexity for identifying high-performing arms in multi-armed bandits with a large number of arms. Similarly, \cite{ren2019exploring} investigated sampling from a potentially infinite set of arms, combined with confidence bounds, to identify a fixed number of arms with rewards exceeding a specific quantile threshold of the underlying reward distribution, while aiming to reduce the sample complexity. The concurrent paper of \cite{zhu2020onregret}, akin to the first part of our paper, examined a bandit problem with a large number of arms that grows with the time horizon and focused on subsampling arms to concentrate the bandit problem on them. However, the focus diverges thereafter. \cite{zhu2020onregret} investigated a frequentist setting and merged subsampling with the MOSS algorithm from \cite{audibert2009minimax} to derive adaptive algorithms that are agnostic to the hardness of the problem. In contrast, our paper studies a Bayesian setting and focuses on the advantages of greedy algorithms.

Our results complement the existing literature on Bayesian regret analysis of the stochastic MAB.  The literature on the Bayesian setting goes back to
index policies of \cite{gittins1979bandit} that are optimal for the infinite-horizon discounted reward setting. Bayesian bounds for a similar problem like ours, but when $k$ is fixed and $T \rightarrow \infty$ were established in \cite{kaufmann2018bayesian}; their bounds generalized the earlier results of \cite{lai1987adaptive}, who obtained similar results under more restrictive assumptions. 

Several other papers provide fundamental bounds in the fixed $k$ setting. Bayesian regret bounds for the Thompson Sampling algorithm were provided in \citep{russo2014learning} and information-theoretic lower bounds on Bayesian regret for fixed $k$ were established in \citep{russo2016information}. 
Finally, \cite{russo2014inf} proposed to choose policies that maximize information gain, and provided
regret bounds based on the entropy of the optimal action distribution. 

Very recently, and building on our results, \cite{Jedor2021greedy}  provide additional support for our main recommendation that employing greedy algorithms in practice may be preferred when the number of arms is large.


%
\section{Problem Setting and Notation}\label{sec:model}

We consider a Bayesian $k$-armed stochastic bandit setting where a decision-maker sequentially pulls from a set of unknown arms, and aims to maximize the expected cumulative reward generated.  In this section we present the technical details of our model and problem setting.  Throughout, we use the shorthand that $[n]$ denotes the set of integers $\{1,\ldots,n\}$. We also use the notations $O(\cdot)$, $\Theta(\cdot)$, and $\Omega(\cdot)$ to represent asymptotic scaling of different parameters \citep{de1981asymptotic}. And we use $\tilde{O}(\cdot)$ instead of $O(\cdot)$ when the asymptotic relationship holds up to logarithmic factors.

\paragraph{Time.}  Time is discrete, denoted by $t = 1, \ldots, T$; $T$ denotes the time horizon. 
Motivated by recent industrial experimentation platforms that the sample size is fixed, we assume $T$ is known \citep{cooprider2023science}.

dThroughout this paper we assume $T$ is known. 

\paragraph{Arms.}  At each time $t$, the decision-maker chooses an arm $a_t$ from a set of $k$ arms.  

\paragraph{Rewards.}  Each time the decision maker pulls an arm, a random reward is generated.  We assume a Bayesian setting that arm rewards have distributions with parameters drawn from a common prior. Let $\mathcal{F} = \{P_\mu: \mu \in [0,1]\}$ be a collection of reward distributions, where each $P_\mu$ has mean $\mu$.  Further, let $\Gamma$ be a prior distribution on $[0,1]$ for $\mu$; we assume $\Gamma$ is absolutely continuous w.r.t.~Lebesgue measure in $\IR$, with density $g$. For example, $\mathcal{F}$ might be the family of all binomial distributions with parameters $\mu\in[0,1]$, and $\Gamma$ might be the uniform distribution on $[0,1]$.\footnote{
Our results can be extended to the case where the support of $\Gamma$ is a bounded interval $[a,b]$.} The following definition adapted from the infinitely-many armed bandit literature (see, e.g. \citealp{wang2009algorithms, carpentier2015simple}) is helpful in our analysis.
\begin{definition}[$\beta$-regular distribution]\label{def:beta-prior}
Distribution $Q$ defined over $[0,1]$ is called $\beta$-regular if 
\[
\P_Q[\mu > 1- \eps] = \Theta(\eps^\beta)
\] 
when $\eps$ goes to $0$. Equivalently, there exists positive constants $c_0<C_0$ such that
\begin{equation*}
	c_0 \eps^\beta \leq \P_Q (\mu > 1-\eps) \leq \P_Q(\mu \geq 1-\eps) \leq C_0 \eps^\beta\,.
\end{equation*}
\end{definition}
For simplicity, throughout the paper, we assume that $\Gamma$ is $1$-regular. We discuss how our results can be generalized to an arbitrary $\beta$ in \S \ref{sec:generalizations} and \ref{app:gen-prior-details}. It is also noteworthy that, in practical settings where companies conduct a large number of experiments across their business \citep{kohavi2020trustworthy}, access to an estimate of the prior distribution is available, allowing for the estimation of the parameter $\beta$.

\begin{assumption}\label{ass:prior}
	The distribution $\Gamma$ is $1$-regular.
\end{assumption}
Assumption \ref{ass:prior} puts a constraint on $\mathbb{P}[\mu \geq 1-\epsilon]$, which quantifies how many arms are $\epsilon$-optimal. The larger number of $\epsilon$-optimal arm means it is more likely that Greedy concentrates on an $\epsilon$-optimal arm which is one of main components of our theoretical analysis (see Lemma \ref{lem:greedy-gen}). 

We also assume that the reward distributions are $S^2$-subgaussian as defined below and for notation simplicity we assume $S=1$, but generalization of our results to any $S$ is straightforward.
\begin{assumption}\label{ass:reward}
Every $P_\mu \in \mathcal{F}$ is $1$-subgaussian: for any $\mu \in [0,1]$ and any $t$, if $Z_\mu$ is distributed according to $P_\mu$, then 
\[
\E\left[\exp\Big(t(Z_\mu - \mu)\Big)\right] \leq \exp(t^2/2)\,.
\]
\end{assumption}
Given a realization $\bmu = (\mu_1, \mu_2, \ldots, \mu_k)$ of reward means for the $k$ arms, let $Y_{it}$ denote the reward upon pulling arm $i$ at time $t$.  Then $Y_{it}$ is distributed according to $P_{\mu_i}$, independent of all other randomness; in particular, $\E[Y_{it}] = \mu_i$. Note that $Y_{a_t, t}$ is the actual reward earned by the decision-maker.  As is usual with bandit feedback, we assume the decision-maker only observes $Y_{a_t,t}$, and not $Y_{it}$ for $i \neq a_t$.

\paragraph{Policy.}  Let $H_t = (a_1, Y_{a_1,1}, \ldots, a_{t-1}, Y_{a_{t-1}, t-1})$
denote the history of selected arms and their observed rewards up to time $t$. Also, let $\pi$ denote the decision-maker's policy (i.e., algorithm) mapping $H_t$ to a (possibly randomized) choice of arm $a_t \in [k]$.  In particular, $\pi(H_t)$ is a distribution over $[k]$, and $a_t$ is distributed according to $\pi(H_t)$, independently of all other randomness.

\paragraph{Goal.}  Given a horizon of length $T$, a realization of $\bmu$, and the realization of actions and rewards, the realized {\em regret} is 
\[
T \max_{i = 1}^k \mu_i - \sum_{t = 1}^T Y_{a_t, t}\,.
\]
We define $R_T$ to be the expectation of the above quantity with respect to randomness in the rewards and the actions, given the policy $\pi$ and the mean reward vector $\bmu$,
\begin{equation*}
 R_T(\pi \mid \bmu) = T\, \max_{i=1}^k \mu_i - \sum_{t=1}^T \E[\mu_{a_t} | \pi ]\,.
\end{equation*}
Here the notation $\E[\cdot | \pi]$ is shorthand to indicate that actions are chosen according to the policy $\pi$, as described above; the expectation is over randomness in rewards and in the choices of actions made by the policy.  In the sequel, the dependence of the above quantity on $k$ will be important as well; we make this explicit as necessary.

The decision-maker's goal is to choose $\pi$ to minimize her Bayesian expected regret, i.e., where the expectation 
is taken over the prior as well as the randomness in the policy.  In other words, the decision-maker chooses $\pi$ to minimize
\begin{equation}
\BR_{T,k}(\pi) = \E\left[R_T(\pi \mid \bmu)\right]\,.
\end{equation}


%
%
%
%

\paragraph{Many arms.}  In this work, we are interested in the setting where $k$ and $T$ are comparable. In particular, we focus on the scaling of $\BR_{T,k}$ in different regimes for $k$ and $T$.



%

\section{Lower Bound and an Optimal Algorithm}\label{sec:LB-and-opt}

In this section, we first show that Bayesian regret of any policy is larger than the minimum of $k$ and $\sqrt{T}$ up to constants, in \S \ref{subsec:low}. Then, in \S \ref{subsec:upp}, we show how this lower bound can be achieved, up to logarithmic factors, by integrating subsampling into the UCB algorithm.

\subsection{Lower Bound}\label{subsec:low}

Our lower bound on $\BR_{T,k}$ is stated next.
\begin{theorem}\label{thm:lb}
    Consider the model described in \S \ref{sec:model}. Suppose that Assumption \ref{ass:prior} holds. Then, there exist absolute constants $c_D$ and $c_L$ such that for any policy $\pi$ and $T, k \geq c_D$, we have
    \begin{equation*}
        \BR_{T,k}(\pi) \geq c_L \min(\sqrt{T},k) \,.
     \end{equation*}
\end{theorem}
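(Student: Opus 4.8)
The plan is to prove the two branches of the bound $c_L \min(\sqrt{T}, k)$ separately, since they arise from genuinely different mechanisms. The $\Omega(k)$ branch (relevant when $k < \sqrt{T}$) is the easier part: under Assumption \ref{ass:prior}, the prior $\Gamma$ is $1$-regular, so a constant fraction of the arms drawn from $\Gamma$ have means bounded away from the best possible value $1$. More precisely, I would argue that with $k$ arms the expected optimal mean $\E[\max_i \mu_i]$ is $1 - \Theta(1/k)$, and that \emph{any} policy must ``pay'' to discover which arm is best. A clean way to get the linear-in-$k$ term is to observe that with constant probability a typical arm is $\Omega(1)$-suboptimal, and any policy suffers constant regret on each of the time steps before it can have plausibly distinguished the good arms; pulling each arm even once already forces $\Omega(k)$ regret when the gaps are $\Omega(1)$. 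I would formalize this by conditioning on a high-probability event that the means are spread out on the scale $1/k$, and then lower-bounding the per-step regret.

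The more substantial branch is $\Omega(\sqrt{T})$, which should hold whenever $k \geq \sqrt{T}$ (equivalently $\sqrt{T} = \min(\sqrt{T},k)$). Here the natural approach is an information-theoretic / change-of-measure argument in the style of \cite{lai1985asymptotically} and the infinitely-many-armed analyses of \cite{wang2009algorithms, bonald2013two}. First I would reduce to the regime $k \geq \sqrt{T}$ and note that because $\Gamma$ is $1$-regular, $\P_\Gamma[\mu > 1-\eps] = \Theta(\eps)$, so the number of $\eps$-optimal arms among $k$ draws is on the order of $k\eps$. Setting a threshold scale $\eps \asymp 1/\sqrt{T}$, one sees that in expectation there are $\Theta(k/\sqrt{T}) = \Omega(1)$ arms whose mean lies within $O(1/\sqrt{T})$ of optimal, and crucially the statistical cost of \emph{identifying} such an arm is large: distinguishing an arm with gap $\Delta \asymp 1/\sqrt{T}$ from a typical arm requires on the order of $1/\Delta^2 \asymp T$ pulls by a standard KL / Pinsker argument, yet the horizon is only $T$. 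The idea is then to show that over the horizon the policy cannot simultaneously afford to explore enough arms to find a near-optimal one \emph{and} exploit it, so it accrues regret $\Omega(\sqrt{T})$.

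Concretely, I would set up a two-point (or prior-averaged) hypothesis-testing bound: consider the contribution to Bayesian regret from arms at gap scale $\Delta$. An arm at gap $\Delta$ contributes regret $\Delta$ per pull unless the policy has played it enough times to confirm it is good; but confirming requires $\Omega(1/\Delta^2)$ samples, during which the incurred regret is already $\Omega(\Delta \cdot 1/\Delta^2) = \Omega(1/\Delta)$. Optimizing the trade-off — either the policy pays $\Omega(\Delta \cdot T)$ by never locking onto a good arm, or it pays $\Omega(1/\Delta)$ in exploration cost — and balancing against the availability of good arms from the $1$-regular prior yields the optimal $\Delta \asymp 1/\sqrt{T}$ and hence regret $\Omega(\sqrt{T})$. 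I would make this rigorous by lower-bounding $BR_{T,k}$ with the expected regret on the event that the best mean among the $k$ arms exceeds $1 - c/\sqrt{T}$ (which has probability $\Omega(1)$ under $1$-regularity for $k \ge \sqrt T$), and then applying a KL-divergence lower bound on the probability that the policy fails to allocate most of its pulls to a near-optimal arm.

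The main obstacle I anticipate is making the change-of-measure argument clean in the \emph{many-armed Bayesian} setting rather than the classical fixed-$k$ frequentist one. In particular, the standard lower bounds (e.g.\ \cite{kaufmann2018bayesian, lai1987adaptive}) assume $k$ fixed and $T \to \infty$ and do not apply here, so I cannot quote them directly; instead I must carefully track how the $1$-regularity of $\Gamma$ controls both the density of near-optimal arms and the prior-averaged KL costs, and ensure the constants $c_D, c_L$ are genuinely absolute (independent of $k, T$). Managing the interaction between the randomness of $\bmu \sim \Gamma^{\otimes k}$ and the information-theoretic indistinguishability of arms at the critical scale $\Delta \asymp 1/\sqrt{T}$ — without the argument degenerating when $k$ is much larger than $\sqrt{T}$ — is the delicate part; I expect the cleanest route is to borrow the machinery of \cite{wang2009algorithms}, who handle precisely this density-of-good-arms phenomenon in the infinitely-many-armed regime.
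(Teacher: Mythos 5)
Your proposal diverges from the paper's proof in a way that matters, and the central step of your main branch would fail. Your $\Omega(\sqrt{T})$ argument rests on the claim that distinguishing an arm with gap $\Delta \asymp 1/\sqrt{T}$ from a typical arm requires $\Omega(1/\Delta^2) \asymp T$ pulls ``by a standard KL / Pinsker argument.'' But Theorem \ref{thm:lb} assumes \emph{only} Assumption \ref{ass:prior} on the prior and nothing about the reward distributions --- the paper explicitly notes that ``this theorem does not require any assumption on reward distributions.'' In particular $P_\mu$ may be a point mass at $\mu$, in which case a single pull reveals an arm's mean exactly, the relevant KL divergences are infinite rather than $\Theta(\Delta^2)$, and there is no confirmation cost of $1/\Delta^2$ pulls to exploit. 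Even for Bernoulli rewards the scaling you invoke is wrong precisely where it matters: the near-optimal arms have means near $1$, where the variance of $\mathcal{B}(\mu)$ vanishes, so separating $1-\eps$ from $1-2\eps$ costs $\Theta(1/\eps)$ pulls, not $\Theta(1/\eps^2)$. The true mechanism behind the lower bound in the many-armed regime is not statistical indistinguishability at all; it is that $\eps$-optimal arms are \emph{scarce} under a $1$-regular prior (a $\Theta(\eps)$ fraction), while each fresh arm tried costs $\Theta(1)$ expected regret. A change-of-measure proof, even if completed, would establish a strictly weaker theorem requiring extra hypotheses on $\mathcal{F}$.

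The paper's proof is instead purely combinatorial. By exchangeability of the arms under the prior, $BR_{T,k}(\pi) = BR_{T,k}(\sigma \circ \pi)$ for any permutation $\sigma$, so one may assume the policy tries new arms in index order. The proof then exhibits an event $A$ of constant probability ($\geq 0.03$) on which: (i) the global best mean is at least $1 - 1/(c_0 k)$; (ii) the first $m = \lfloor \sqrt{T} \rfloor$ arms are all at least $1/(2C_0\sqrt{T})$ below $1$; and (iii) the average mean of those $m$ arms is at most $1 - 1/(8C_0)$. On $A$, every policy faces a dichotomy: either it never leaves the first $m$ arms, paying $\Omega(1/\sqrt{T})$ per period and hence $\Omega(\sqrt{T})$ in total by (i)--(ii), or it pulls all $m$ of them at least once, paying $\Omega(m) = \Omega(\sqrt{T})$ by (i) and (iii). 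This structure also repairs the secondary flaw in your sketch: your $\Omega(k)$ branch asserts that ``pulling each arm even once already forces $\Omega(k)$ regret,'' but no policy is forced to pull each arm once --- a policy may explore only a few arms, and then the cost comes from the $\Omega(1/k)$ gap of its best explored arm over the remaining horizon, not from exploration. The paper sidesteps this entirely by deducing the $\Omega(k)$ branch from the $\Omega(\sqrt{T})$ branch via monotonicity of regret in the horizon: truncate to $r \asymp (c_0 k/(6C_0))^2 \leq T$, for which $k \gtrsim \sqrt{r}$, and conclude $BR_{T,k}(\pi) \geq BR_{r,k}(\pi) = \Omega(\sqrt{r}) = \Omega(k)$. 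If you wish to salvage your plan, the piece to formalize is this scarcity-plus-sampling-cost dichotomy under the symmetrization reduction, not a KL bound.
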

This theorem shows that the Bayesian regret of an optimal algorithm should scale as $\Omega(k)$ when $k < \sqrt{T}$ and as $\Omega(\sqrt{T})$ if $k > \sqrt{T}$. The proof idea is to show that for any policy $\pi$, there is a class of ``bad arm orderings" that occur with constant probability for which a regret better than $\min(\sqrt{T},k)$ is not possible. The complete proof is provided in \S \ref{app:prelim-lb}. Interestingly, this theorem does not require any additional assumption on reward distributions beyond Assumption \ref{ass:prior} on the prior distribution of reward means.

\subsection{An Optimal Algorithm}\label{subsec:upp}
In this section we describe an algorithm that achieves the lower bound of \S \ref{subsec:low}, up to logarithmic factors.  Recall that we expect to observe two different behaviors depending on whether $k < \sqrt{T}$ or $k > \sqrt{T}$; Theorems \ref{thm:upp-small-k} and \ref{thm:upp-large-k} state our result for these two cases, respectively. In particular, Theorem \ref{thm:upp-large-k} shows that subsampling is a necessary step in the design of optimal algorithms in the many-armed regime. Proofs of these theorems are provided in \S \ref{app:prelim-up}. Note that for Theorem \ref{thm:upp-small-k}, instead of Assumption \ref{ass:prior}, we require the density $g$ (of $\Gamma$) to be bounded from above.

We require several definitions.  For $i \in [k]$, define:
\[
N_i(t) := \sum_{s = 1}^t \mathbf{1}(a_s = i)~~~ \text{and}~~~ \hat{\mu}_i(t) := \frac{\sum_{s = 1}^t Y_{is} \mathbf{1}(a_s = i)}{N_i(t)}\,,
\]
where $\mathbf{1}(\cdot)$ is the indicator function. Thus $N_i(t)$ is the number of times arm $i$ is pulled up to time $t$, and $\hat{\mu}_i(t)$ is the empirical mean reward on arm $i$ up to time $t$.  (We arbitrarily define $\hat{\mu}_i(t) = 1$ if $N_i(t) = 0$.) Also define, 
\[
f(t) := 1 + t \log^2(t)\,.
\]

\subsubsection{Case $k < \sqrt{T}$.}

In this case, we show that the UCB algorithm (see, e.g., Chapter 8 of \citep{lattimore2018bandit}) is optimal, up to logarithmic factors. For completeness, this algorithm is restated as Algorithm \ref{alg:ucb-asymp}.
\begin{theorem}\label{thm:upp-small-k}
Consider the setting described in \S \ref{sec:model}. Suppose that Assumptions \ref{ass:reward} holds and that there exists $D_0$ such that for all $x \in [0,1], g(x) \leq D_0$. Then, Bayesian regret of Algorithm \ref{alg:ucb-asymp} satisfies
\begin{equation*}
    \BR_{T,k}(\text{UCB})
\leq k \left[1 + D_0 + D_0\Big(10 + 18\log f(T)\Big) (2+ 2 \log k + \log T)\right]\,.
\end{equation*}
\end{theorem}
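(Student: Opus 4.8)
The plan is to combine the classical instance-dependent analysis of UCB with an averaging argument over the prior $\Gamma$, the crucial point being that the hypothesis $g \le D_0$ controls the law of the reward gaps near zero.

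First, I would fix a realization $\bmu$, write $\mu^\star = \max_i \mu_i$ and $\Delta_i = \mu^\star - \mu_i$, and decompose the instance regret as $R_T(\mathrm{UCB}\mid\bmu) = \sum_{i:\Delta_i>0}\Delta_i\,\E[N_i(T)\mid\bmu]$. Under the $1$-subgaussian Assumption \ref{ass:reward}, the standard pull-count bound for the asymptotically optimal UCB index at confidence level $f(t)=1+t\log^2 t$ (Chapter 8 of \citealp{lattimore2018bandit}) can be instantiated as $\E[N_i(T)\mid\bmu]\le 1 + (10+18\log f(T))/\Delta_i^2$; this is the step that pins down the constants in the statement. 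Writing $C_T := 10 + 18\log f(T)$ and combining with the trivial bound $\E[N_i(T)\mid\bmu]\le T$ gives the per-arm estimate $\Delta_i\,\E[N_i(T)\mid\bmu]\le \Delta_i + \min(\Delta_i T,\ C_T/\Delta_i)$.

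Next, I would take the Bayesian expectation and split the two pieces. Since every $\Delta_i\le 1$, the first piece obeys $\E[\sum_i\Delta_i]\le k$. For the second, exchangeability of the arms under the i.i.d.\ prior reduces the sum to a single term, $\E[\sum_i\min(\Delta_i T, C_T/\Delta_i)] = k\,\E[\min(\Delta_1 T, C_T/\Delta_1)]$, where $\Delta_1 = (W-\mu_1)^+$ and $W:=\max_{j\ge 2}\mu_j$. The key step is to bound the density of $\Delta_1$ on $(0,\infty)$: since $\mu_1$ and $W$ are independent, this density at $\delta>0$ equals $\int_0^{1-\delta} g(x)\,f_W(x+\delta)\,dx \le D_0\int f_W \le D_0$, where $f_W$ is the density of $W$; the atom of $\Delta_1$ at $0$ contributes nothing because the integrand $\min(\delta T, C_T/\delta)$ vanishes as $\delta\to 0$.

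Finally, I would integrate against this density: $\E[\min(\Delta_1 T, C_T/\Delta_1)]\le D_0\int_0^1\min(\delta T, C_T/\delta)\,d\delta$, which splits at the crossover $\delta_0=\sqrt{C_T/T}$ into $\tfrac12 C_T + \tfrac12 C_T\log(T/C_T)$, and I would bound this logarithmic factor crudely by $2+2\log k+\log T$. Collecting the pieces (and absorbing lower-order terms into the $k(1+D_0)$ prefactor) yields the stated bound $BR_{T,k}(\mathrm{UCB})\le k[1+D_0+D_0\,C_T\,(2+2\log k+\log T)]$. I expect the density-of-gap estimate to be the main obstacle: it is precisely where $g\le D_0$ enters essentially --- a prior with heavier mass near the top of its support would create too many near-optimal arms and destroy the logarithmic integral --- and matching the stated constants requires the nonasymptotic UCB pull bound rather than its asymptotic form.
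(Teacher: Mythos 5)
Your proposal is correct, but it takes a genuinely different route from the paper's proof. The paper sorts the arms and works with the order-statistic gaps $\Delta_{(i)} = \mu_{(1)} - \mu_{(i)}$: it handles the runner-up arm separately via the truncation $\min\bigl(T\Delta_{(2)},\, \cdot/\Delta_{(2)}\bigr)$ together with a density bound $g_{U}(u)\le D_0 k$ for $U=\Delta_{(2)}$, while for $i\ge 3$ it invokes a dedicated order-statistics lemma (Lemma \ref{lem:Deltas}) giving $\E[1/\Delta_{(i)}]\le D_0 k/(i-2)$, and the harmonic sum over ranks produces the $\log k$ term in the statement. You instead symmetrize first: by exchangeability of the i.i.d.\ prior, the Bayesian regret reduces to $k$ times a single-arm quantity involving $\Delta_1=(W-\mu_1)^+$ with $W=\max_{j\ge 2}\mu_j$, and your whole argument rests on one observation --- the law of $\Delta_1$ restricted to $(0,\infty)$ has density $\int_0^{1-\delta}g(x)f_W(x+\delta)\,dx\le D_0$ (with $f_W$ the density of $W$), since $g\le D_0$ and $f_W$ integrates to at most one. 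This single estimate replaces both Lemma \ref{lem:Deltas} and the paper's separate treatment of the second-best arm, and after the truncated integral $\int_0^1\min(\delta T, C/\delta)\,d\delta$ you obtain a bound of the form $k\bigl[1+D_0(10+18\log f(T))(1+\log T)\bigr]$, which contains no $\log k$ term at all and therefore implies the stated inequality. What the paper's heavier machinery buys is per-rank information ($\E[1/\Delta_{(i)}]\lesssim k/(i-2)$), which it reuses for the $\beta$-regular generalizations in Appendix \ref{app:generalizations}; for this theorem in isolation, your argument is shorter and slightly sharper.

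One correction to an intermediate constant: Theorem 8.1 of \cite{lattimore2018bandit}, instantiated at $\epsilon=\Delta_i/2$ as the paper does, yields $\E[N_i(T)\mid\bmu]\le 1+(20+36\log f(T))/\Delta_i^2$, i.e.\ \emph{twice} your $C_T=10+18\log f(T)$, not $C_T$ itself. This is harmless for your argument: carrying $2C_T$ through the crossover integral gives $\int_0^1\min(\delta T,\, 2C_T/\delta)\,d\delta\le C_T(1+\log T)$, so the factor of two is recovered exactly by the integration, and your final bound --- hence the theorem's --- still follows.
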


\begin{remark}
The assumption that density of $g$ is bounded by a constant $D_0$ is motivated by the fact that the regret of the UCB algorithm is inversely proportional to the gaps between the true mean reward of the top arm and the remaining arms. Smaller gaps lead to worse regret results. Thus, the assumption does not rule out difficult cases. However, by excluding cases with unbounded $g$, the Bayesian analysis becomes cleaner.
\end{remark}

	\begin{algorithm}[H]
		\centering
		\caption{Asymptotically Optimal UCB}\label{alg:ucb-asymp}
		\begin{algorithmic}[1]%
			\FOR {$t \leq k$}
			\STATE Pull $a_t = t$
			\ENDFOR
			\FOR {$t \geq k+1$}
			\STATE Pull $a_t = \argmax_{i\in[k]} \left[\hmu_i(t-1) + \sqrt{\frac{2 \log f(t)}{N_i(t-1)}}\,\right]$.
			\ENDFOR
		\end{algorithmic}
	\end{algorithm}

\subsubsection{Case $k > \sqrt{T}$.}

For large $k$, because of the first $k$ time periods that each arm is pulled once, UCB incurs $\Omega(k)$ regret which is not optimal. This fact suggests that subsampling arms is a required step of any optimal algorithm. In fact, we show that in this case the subsampled UCB algorithm (SS-UCB) is optimal (up to logarithmic factors). 
\begin{theorem}\label{thm:upp-large-k}
Consider the setting described in \S \ref{sec:model}. Let assumptions \ref{ass:prior} and \ref{ass:reward} hold. Then, Bayesian regret of the subsampled UCB (Algorithm \ref{alg:subs-ucb}), when executed with subsampling paramter $m$ equal to $\lceil{ \sqrt{T} \rceil}$, satisfies
\begin{equation*}
	\BR_{T,k} (\text{SS-UCB}) \leq 2 + \sqrt{T} \left[\frac{\log T}{c_0} + \frac{C_0\log^2 T}{c_0^2} + C_0\Big(20+36 \log f(T)\Big)\Big(5 + \log (\sqrt{T} c_0) - \log \log T\Big) \right]\,.
\end{equation*}
\end{theorem}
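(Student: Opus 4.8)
The plan is to decompose the regret into the loss from restricting attention to the subsampled arms and the loss from running UCB on that subset, routing every estimate through the two-sided tail bound $c_0\eps \le \P_\Gamma(\mu > 1-\eps)\le C_0\eps$ of Assumption~\ref{ass:prior}. A key structural point is that, unlike in Theorem~\ref{thm:upp-small-k}, here we may \emph{not} assume a pointwise bound on the density $g$, so all control of near-optimal arms must come from the integrated tail. Write $S$ for the set of $m=\lceil\sqrt T\rceil$ subsampled arms; since the $k$ means are i.i.d.\ from $\Gamma$ and $S$ is chosen uniformly, the means $\{\mu_i\}_{i\in S}$ are i.i.d.\ from $\Gamma$, and I set $\mu^*=\max_{i\in[k]}\mu_i$ (global best) and $\mu^*_S=\max_{i\in S}\mu_i$ (best subsampled mean).

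First I would fix a threshold $\tau=1-\delta$ with $\delta=\tfrac{\log T}{c_0 m}$ and let $E$ be the event that some subsampled arm exceeds $\tau$. The lower tail bound gives $\P(E^c)=(1-\P_\Gamma(\mu>\tau))^m\le(1-c_0\delta)^m\le e^{-c_0 m\delta}=1/T$; on $E^c$ the regret is at most $T$, contributing at most $1$ to $BR_{T,k}$ and accounting for the additive constant. I then split the per-period regret as $\mu^*-\mu_{a_t}=(\mu^*-\tau)+(\tau-\mu_{a_t})$. Since $\mu^*\le 1$, the first piece is at most $\delta$, so summed over the horizon it contributes $T\delta=\sqrt T\,\tfrac{\log T}{c_0}$, which is exactly the first term of the bound; this is precisely where the logarithmic factor enters, as it is the price of guaranteeing with high probability that a near-optimal arm lies in $S$. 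The second piece is nonpositive whenever an above-threshold arm is pulled, so $\sum_t(\tau-\mu_{a_t})\le\sum_t(\tau-\mu_{a_t})^+$, i.e.\ only pulls of below-threshold arms contribute.

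The core step is to bound this threshold-regret on $E$. For a below-threshold arm $i$, with $\tilde\Delta_i:=\tau-\mu_i>0$, its gap to the best subsampled arm satisfies $\mu^*_S-\mu_i\ge\tilde\Delta_i$ on $E$; invoking the standard UCB pull-count bound for Algorithm~\ref{alg:ucb-asymp} (of the form $N_i(T)\le c_2\log f(T)/(\mu^*_S-\mu_i)^2+c_3$) together with $N_i(T)\le T$ yields a per-arm contribution $N_i(T)\tilde\Delta_i\le\psi(\tilde\Delta_i)$, where $\psi(y)=\min(Ty,\beta/y)$ and $\beta=c_2\log f(T)$. Taking expectations and using that the $m$ means are i.i.d., the threshold-regret is at most $m\,\E[\psi(\tau-\mu)\mathbf{1}(\mu<\tau)]=m\int_0^\tau\psi(\tau-x)g(x)\,dx$. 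I would evaluate this single integral by substituting $y=\tau-x$ and splitting at the crossover $y^*=\sqrt{\beta/T}$: on $[0,y^*]$ bound $\psi$ by $Ty\le Ty^*=\sqrt{T\beta}$ and control the remaining mass by $\P_\Gamma(\mu>\tau-y^*)\le C_0(\delta+y^*)$; on $[y^*,\tau]$ integrate $\beta/y$ by parts using $v(x)=-\P_\Gamma(\mu>x)$ and the upper tail bound $\P_\Gamma(\mu>x)\le C_0(1-x)$. The surviving $\int_{y^*}^{1/2}dy/y$ produces the factor $\log(1/y^*)\approx\tfrac12\log(T/\log T)$, which after multiplying by $m=\sqrt T$ and $\beta=c_2\log f(T)$ yields the remaining $C_0$-dependent terms and matches, up to the explicit constants, the second and third terms of the statement.

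I expect the integration-by-parts step to be the main obstacle. The difficulty is that the integrand $1/\tilde\Delta_i$ blows up for arms whose means sit just below the threshold, and since Assumption~\ref{ass:prior} supplies only an integrated tail bound rather than a pointwise density bound, the singularity must be tamed by the $\min(Ty,\cdot)$ truncation near $y=0$ and by transferring all derivatives onto the tail function $\P_\Gamma(\mu>x)\le C_0(1-x)$. Carefully tracking $c_0$ and $C_0$ through the crossover $y^*$ and the boundary terms of the integration by parts --- in particular verifying that the near-threshold region contributes at the claimed $C_0\sqrt T\log^2 T/c_0^2$ scale rather than something larger --- is the delicate part; the remaining pieces (the $E^c$ event, the single initial pull of each subsampled arm, and the $\mu^*-\tau$ term) are routine.
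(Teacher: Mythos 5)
Your proposal is correct and its skeleton coincides with the paper's proof: the same decomposition of regret into a subsampling loss plus the regret of UCB on the $m$ sampled arms, the same threshold $\delta = \log T/(c_0 m)$ and high-probability event that some sampled mean exceeds $1-\delta$ (failure probability $1/T$, yielding the additive constant), the same $T\delta = \sqrt{T}\log T/c_0$ leading term, and the same use of the standard pull-count bound for Algorithm \ref{alg:ucb-asymp} with the suboptimality gap lower-bounded by the distance to the threshold. The one place you genuinely depart from the paper is in bounding the key expectation over the prior, $\E_\Gamma\bb{\Ind(\mu < 1-3\delta)\min\bp{1+\tfrac{1}{1-2\delta-\mu},\, T(1-\mu)}}$. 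The paper isolates this as Lemma \ref{lem:1gaps} and proves it by discretizing $[0,1]$ into intervals of width $\delta$, bounding the cumulative interval masses via $p_1+\cdots+p_j \le C_0 (j\delta)^\beta$, and applying the rearrangement inequality against the decreasing weights $1+1/((j-3)\delta)$; your continuous integration by parts against $v(x) = -\P_\Gamma(\mu > x)$, with the $\min(Ty,\cdot)$ truncation taming the singularity at the threshold, is the continuous analogue of that summation-by-parts argument and is equally valid, since both use only the integrated tail bound of Assumption \ref{ass:prior} and never a pointwise bound on $g$ --- the structural point you correctly flag as essential. The paper's discrete version has the side benefit of being stated once for all $\beta$-regular priors (it is reused verbatim in Appendix \ref{app:generalizations}), whereas your computation is tailored to $\beta=1$; conversely, your split at the crossover $y^* = \sqrt{\beta/T}$ produces slightly different (in fact marginally sharper) logarithmic factors in the lower-order terms, roughly $C_0\sqrt{T}\log^{3/2}T/c_0$ where the paper's near-optimal-arm term gives $C_0\sqrt{T}\log^2 T/c_0^2$, so your argument establishes the theorem's $\tilde{O}(\sqrt{T})$ content and structure but reproduces the displayed constants only up to such substitutions.
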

%
	\begin{algorithm}[H]
		\centering
		\caption{Subsampled UCB (SS-UCB)}\label{alg:subs-ucb}
		\begin{algorithmic}[1]
			\STATE \textbf{Input:} $m$: subsampling size
			\STATE Let $\mathcal{S}$ be a set of $m$ arms, selected uniformly at random without replacement from $[k]$
			\STATE Run UCB (Algorithm \ref{alg:ucb-asymp}) on arms in set $\mathcal{S}$
		\end{algorithmic}
	\end{algorithm}
%
Finally, note that when $m=k$, Algorithms \ref{alg:ucb-asymp} and \ref{alg:subs-ucb} are equivalent. Therefore, Algorithm \ref{alg:subs-ucb} with $m=\min(\lceil\sqrt{T}\rceil,k)$ is asymptotically optimal for both of the above cases. 
%
%

\section{A Greedy Algorithm}\label{sec:greedy}

While in \S \ref{sec:LB-and-opt} we showed that SS-UCB enjoys the so called ``rate optimal'' performance, the simulations of Figure \ref{fig:alph_75} in \S \ref{sec:intro} suggest that a {\em greedy} algorithm and its subsampled version are superior. Motivated by this observation, in this section we embark on analyzing this greedy algorithm to shed some light on its empirical performance.
The greedy algorithm pulls each arm once and from then starts pulling the arm with the highest estimated mean; the formal definition is shown in Algorithm \ref{alg:greedy}.  We can also define a subsampled greedy algorithm that randomly selects $m$ arms and executes Algorithm \ref{alg:greedy} on these arms. This subsampled form of greedy is stated as Algorithm \ref{alg:subs-greedy}. As in previous section, we denote the estimated mean of arm $i$ at time $t$ by $\widehat{\mu}_{i}(t)$.
%
	\begin{algorithm}[H]
		\centering
		\caption{Greedy}\label{alg:greedy}
		\begin{algorithmic}[1]
			\FOR {$t \leq k$}
			\STATE Pull arm $a_t = t$
			\ENDFOR
			\FOR {$t \geq k+1$}
			\STATE Pull arm $a_t = \argmax_{i\in[k]} \hmu_i(t-1)$
			\ENDFOR
		\end{algorithmic}
	\end{algorithm}
	\begin{algorithm}[H]
		\centering
		\caption{Subsampled Greedy (SS-Greedy)}\label{alg:subs-greedy}
		\begin{algorithmic}[1]
			\STATE \textbf{Input:} $m$: subsampling size

			\STATE Let $\mathcal{S}$ be a set of $m$ arms, selected uniformly at random without replacement from $[k]$
			\STATE Run Greedy (Algorithm \ref{alg:greedy}) on arms in set $\mathcal{S}$
		\end{algorithmic}
	\end{algorithm}

Our road map to analyze the above algorithms is as follows. First, in
\S \ref{subsec:upper-bd-greedy} , we prove a general upper bound on regret of Algorithm \ref{alg:greedy} that depends on probability of a tail event for a certain random walk, with steps that are distributed according to the arms' reward distribution $P_\mu$. This result opens the door to prove potentially sharper regret bounds for specific families of reward distributions, if one can prove tight bounds for the aforementioned tail event. We provide two examples of this strategy in \S \ref{subsec:bernoulli}-\ref{subsec:upper-bd-greedy}. In \S \ref{subsec:bernoulli}, we consider Bernoulli reward functions and prove a rate-optimal regret bound of $\tilde{O}(\sqrt{T})$ for Greedy when $k= \Theta(\sqrt{T})$ and for SS-Greedy when $k\ge \sqrt{T}$. In the remaining parts we obtain sub-linear regrets for more general family of distributions.

\subsection{A General Upper Bound on Bayesian Regret of Greedy}\label{subsec:upper-bd-greedy}

We require the following definition.  For a fix $\theta$ and $\mu > \theta$, let  $\bc{X_i}_{i=1}^\infty$ be a sequence of i.i.d.~random variables with distribution $P_\mu$.  Let $M_n := \sum_{i = 1}^n X_i/n$, and define $q_\theta(\mu)$ as the probability that the sample average always stays above $\theta$,
\begin{equation}\label{eqn:max-cross}
q_\theta(\mu) := \P \left[M_n > \theta \text{ for all } n\right]\,.
\end{equation}
In other words, $q_\theta(\mu)$ is the probability of the tail event that the random walk with i.i.d.~samples drawn from $P_\mu$ never crosses $n\theta$.

The following lemma (proved in \S \ref{app:greedy-general-upper}) gives a general characterization of the Bayesian regret of Greedy, under the assumption that rewards are subgaussian (Assumption \ref{ass:reward}).
\begin{lemma}[Generic bounds on Bayesian regret of Greedy]\label{lem:greedy-gen}
	Let Assumption \ref{ass:reward} hold. For any positive number $\delta$ that is less than $1/3$, Bayesian regret of Greedy (Algorithm \ref{alg:greedy}) satisfies
	\begin{multline}\label{eqn:bayes-greedy}
	\BR_{T,k}(\text{Greedy}) \leq  T \left(
	1 - \E_\Gamma \Big[\Ind \bp{\mu \geq 1-\delta} q_{1-2\delta}(\mu)\Big]\right)^k
	+3T \delta\\ 
+ k\, \E_\Gamma \bb{\Ind \bp{\mu < 1-3\delta} \min \bp{1+ \frac{3}{C_1(1-2\delta-\mu)}\,, T(1-\mu)}} \,.
	\end{multline}
In addition, for SS-Greedy (Algorithm \ref{alg:subs-greedy}) the same upper bound holds, with $k$ on the right hand side is replaced with $m$.
\end{lemma}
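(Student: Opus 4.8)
The plan is to condition on a favorable event guaranteeing the existence of a single arm that Greedy can safely ``lock onto,'' and to pay separately for (i) the absence of such an arm, (ii) pulls of near-optimal arms, and (iii) over-pulling of genuinely bad arms. Throughout write $\mu^\ast=\max_i\mu_i$, and for each arm $i$ introduce its i.i.d.\ reward stream $X_{i,1},X_{i,2},\ldots\sim P_{\mu_i}$, so that the $n$-th pull of arm $i$ yields $X_{i,n}$ and, after $n$ pulls, $\hmu_i=M_n^{(i)}:=\tfrac1n\sum_{j=1}^n X_{i,j}$. Call arm $i$ a \emph{super arm} if $\mu_i\ge1-\delta$ and $M_n^{(i)}>1-2\delta$ for every $n\ge1$. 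Conditioning on $\mu_i=\mu$, the stream probability that $M_n>1-2\delta$ for all $n$ equals $q_{1-2\delta}(\mu)$ whenever $\mu>1-2\delta$, so $\P(\text{arm }i\text{ super})=\E_\Gamma\bb{\Ind(\mu\ge1-\delta)q_{1-2\delta}(\mu)}=:p$. Let $E$ be the event that at least one super arm exists; by independence across arms $\P(E^c)=(1-p)^k$, and on $E^c$ I bound the per-step regret by $\mu^\ast-\mu_{a_t}\le1$, giving $\E[\mathsf{regret}_T\,\Ind(E^c)]\le T(1-p)^k$, the first term.

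Next I work on $E$ and fix a super arm $s$. After the initial round-robin (each arm pulled once), $\hmu_s(t)>1-2\delta$ at every later time. The crux is the following domination argument: Greedy can pull arm $i$ at time $t+1$ only if $\hmu_i(t)\ge\hmu_s(t)>1-2\delta$, while an arm that is \emph{not} pulled keeps a frozen empirical mean; hence once the running average of a fixed arm falls to or below $1-2\delta$ it is excluded forever. Writing $\mathsf{regret}_T=\sum_i N_i(\mu^\ast-\mu_i)$ and splitting the arms, the ``near-optimal'' arms with $\mu_i\ge1-3\delta$ satisfy $\mu^\ast-\mu_i\le1-\mu_i\le3\delta$, and since $\sum_i N_i=T$ their total contribution is at most $3\delta T$ — the second term.

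For ``bad'' arms ($\mu_i<1-3\delta$) the domination argument yields $N_i\le\min(T,\tau_i)$ on $E$, where $\tau_i:=\inf\{n\ge1:M_n^{(i)}\le1-2\delta\}$ depends only on arm $i$'s own stream. The subgaussian Chernoff bound $\P(M_n^{(i)}-\mu_i>\eps)\le e^{-n\eps^2/2}$ with $\eps:=1-2\delta-\mu_i$, combined with $\{\tau_i>n\}\subseteq\{M_n^{(i)}>1-2\delta\}$ and $\sum_{n\ge1}e^{-n\eps^2/2}\le 2/\eps^2$, gives $\E[\tau_i\mid\mu_i]\le 1+2/\eps^2$. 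Multiplying by the per-pull regret $\mu^\ast-\mu_i\le1-\mu_i=\eps+2\delta<3\eps$ (using $\delta<\eps$ for bad arms) cancels one power of $\eps$ and produces a per-arm regret bound of the form $\min\!\big(T(1-\mu_i),\,1+3/(C_1\eps)\big)$ for an absolute constant $C_1$. Bounding $\Ind(E)\le1$ and using that the arms are i.i.d.\ (so each bad-arm bound depends only on its own mean and stream) turns $\sum_i$ into the factor $k$, yielding the third term. For SS-Greedy the $m$ subsampled means are again i.i.d.\ $\Gamma$, the domination argument is identical, and the global benchmark enters only through $\mu^\ast\le1$; hence every occurrence of $k$ is replaced by $m$.

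The delicate step is the domination claim: I must ensure that a bad arm, once its empirical mean dips below $1-2\delta$, is permanently excluded, which hinges on (i) an unpulled arm's average being frozen and (ii) the \emph{perpetual} guarantee $\hmu_s>1-2\delta$ encoded by $q_{1-2\delta}$ — it is not enough that $s$ merely look good after a few pulls. The second subtlety is obtaining the $1/\eps$ (rather than the naive $1/\eps^2$) scaling in the bad-arm term, which requires carefully pairing the $O(1/\eps^2)$ pull bound with the $O(\eps)$ per-pull regret; this is precisely where Assumption~\ref{ass:reward} and the gap $\mu_i<1-3\delta$ are used.
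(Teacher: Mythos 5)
Your proof is correct and follows essentially the same route as the paper's: the same good event (some arm with $\mu \geq 1-\delta$ whose running mean never crosses $1-2\delta$, with probability governed by $q_{1-2\delta}$), the same three-way split into no-good-arm / near-optimal / bad arms, the same subgaussian-tail-plus-geometric-sum control of bad-arm pulls paired with the gap bound $1-\mu_i \leq 3(1-2\delta-\mu_i)$, and the same benchmark-$1$ trick to get the SS-Greedy claim. The only differences are cosmetic (your stopping-time $\tau_i$ formulation and the constant $2/\eps^2$ versus the paper's $1/(C_1\eps^2)$, which in fact yields a slightly sharper constant that still implies the stated bound).
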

Lemma \ref{lem:greedy-gen} is the key technical result in the analysis of Greedy and SS-Greedy. This bound depends on several components, in particular, the choice of $\delta$ and the scaling of $q_{1-2\delta}(\cdot)$. To ensure sublinear regret, $\delta$ should be small, but
that increases the first term as $P(\mu \geq 1-\delta)$ decreases. The scaling of $q_{1-2\delta}(\cdot)$
is also important; in particular, the shape of $q(\cdot)$ will dictate the quality of the upper bound obtained.

Observe that $q_{1-2\delta}(\mu)$ is the only term in \eqref{eqn:bayes-greedy} that depends on the family of reward distributions $\mathcal{F}$. In the remainder of this section, we provide three upper bounds on Bayesian regret of Greedy and SS-Greedy. The first one is designed for Bernoulli rewards; here $q_\theta(\mu)$ has a constant lower bound, leading to optimal regret rates. The second result requires $1$-subgaussian rewards (Assumption \ref{ass:reward}); this leads to a $q_{1-2\delta}(\cdot)$ which is quadratic in $\delta$. The last bound makes an additional (mild) assumption on the reward distribution (covers many well-known rewards, including Gaussians); this leads to a $q_{1-2\delta}(\cdot)$ that is linear in $\delta$ which leads to a better bound on regret compared to $1$-subgaussian rewards.

The bounds that we establish on $q(\cdot)$ rely on Lundberg's inequality, which bounds the ruin probability of random walks and is stated below. For more details on this inequality, see Corollary 3.4 of \citep{asmussen2010ruin}.

\begin{proposition}[Lundberg's Inequality]\label{prop:lundberg}
	Let $X_1, X_2, \ldots$ be a sequence of i.i.d.~samples from distribution $Q$. Let $S_n = \sum_{i=1}^n X_i$ and $S_0 = 0$. For $u>0$ define the stopping time\footnote{Note that in risk theory, usually the random walk $u-S_n$ is considered and $\eta(u)$ is probability of hitting $0$, i.e., ruin.}
	\[
	\eta(u) = \inf \bc{n \geq 0: S_n > u}
	\]
	and let $\psi(u)$ denote the probability $\psi(u) = \P[\eta(u) < \infty]$. Let $\gamma > 0$ satisfy $\E[\exp(\gamma X_1)] = 1$
	and that $S_n \to -\infty$, almost surely, on the set $\bc{\eta(u) = \infty}$. Then, we have
	\[
	\psi(u) \leq \exp(-\gamma u)\,.
	\]
\end{proposition}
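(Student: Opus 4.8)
The plan is to use the classical exponential-martingale argument. Define $M_n = \exp(\gamma S_n)$ for $n \geq 0$, and let $\mathcal{F}_n = \sigma(X_1, \ldots, X_n)$ be the natural filtration. Since the $X_i$ are i.i.d.\ and $\gamma$ satisfies $\E[\exp(\gamma X_1)] = 1$ by hypothesis, I would first verify
\begin{equation*}
\E[M_n \mid \mathcal{F}_{n-1}] = M_{n-1}\,\E[\exp(\gamma X_n)] = M_{n-1},
\end{equation*}
so that $(M_n)_{n \geq 0}$ is a nonnegative martingale with $M_0 = 1$. This reduces the whole statement to controlling this martingale at the first-passage time.

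Next I would apply the optional stopping theorem to the \emph{bounded} stopping time $\eta(u) \wedge n$. Because it is bounded by $n$, optional stopping applies directly and gives $\E[M_{\eta(u) \wedge n}] = \E[M_0] = 1$ for every $n$. I would then discard the non-ruin contribution and exploit the overshoot on the ruin event: since $M \geq 0$, restricting the expectation to $\bc{\eta(u) \leq n}$ yields
\begin{equation*}
1 = \E[M_{\eta(u) \wedge n}] \geq \E\bb{M_{\eta(u)}\,\Ind(\eta(u) \leq n)}.
\end{equation*}
On $\bc{\eta(u) \leq n}$ we have $S_{\eta(u)} > u$ by definition of $\eta(u)$, hence $M_{\eta(u)} = \exp(\gamma S_{\eta(u)}) \geq \exp(\gamma u)$ because $\gamma > 0$, and therefore $1 \geq \exp(\gamma u)\,\P[\eta(u) \leq n]$.

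Finally I would let $n \to \infty$. The events $\bc{\eta(u) \leq n}$ increase to $\bc{\eta(u) < \infty}$, so by monotone convergence $\P[\eta(u) \leq n] \uparrow \psi(u)$, and rearranging gives $\psi(u) \leq \exp(-\gamma u)$, as claimed. The hypothesis that $S_n \to -\infty$ on $\bc{\eta(u) = \infty}$ plays a background role rather than entering the chain of inequalities explicitly: it is the negative-drift condition ensuring that the adjustment coefficient $\gamma > 0$ is well defined and that $M_n \to 0$ on the non-ruin event, so no mass is lost in the limit. The one point requiring genuine care — and the main obstacle — is the application of optional stopping: one must stop at the bounded time $\eta(u) \wedge n$ rather than at $\eta(u)$ directly (which may be infinite) and only \emph{afterward} pass to the limit, thereby sidestepping any need to establish uniform integrability of the unstopped martingale. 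Everything else is a direct computation.
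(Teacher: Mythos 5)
Your proof is correct. Note, however, that the paper itself never proves this proposition: it is imported as a black box, with a pointer to Corollary 3.4 of \cite{asmussen2010ruin}, and is then invoked in the proofs of Lemmas \ref{lem:bern-crossing}, \ref{lem:subg-crossing}, and \ref{lem:max-crossing}. So there is nothing in the paper to compare against step by step; what you have supplied is the standard self-contained argument (due to Gerber) that the cited reference encapsulates. Each step checks out: $M_n = \exp(\gamma S_n)$ is integrable and a martingale precisely because $\E[\exp(\gamma X_1)] = 1$; optional stopping at the bounded time $\eta(u)\wedge n$ is legitimate for any martingale; discarding the nonnegative contribution on $\{\eta(u) > n\}$ and using the overshoot $S_{\eta(u)} > u$ gives $1 \geq \exp(\gamma u)\,\P[\eta(u) \leq n]$; and continuity from below passes to $\psi(u)$. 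You are also right, and it is worth being explicit about it, that the hypothesis $S_n \to -\infty$ on $\{\eta(u) = \infty\}$ is never used in the chain of inequalities --- the one-sided bound $\psi(u) \leq \exp(-\gamma u)$ holds without it. That condition (together with negative drift) matters for the existence of the adjustment coefficient $\gamma$ and for the sharper Cram\'er--Lundberg asymptotics, which is why it appears in the statement as inherited from \cite{asmussen2010ruin}; in the paper's downstream applications the authors do verify it (via the SLLN) even though, as your argument shows, only the existence of $\gamma > 0$ is strictly needed for the bound. Your proof could be dropped into the paper as a proof of Proposition \ref{prop:lundberg} with no changes.
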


\subsection{Optimal Regret for Bernoulli Rewards}\label{subsec:bernoulli}

In this case, we can prove that there exists a constant lower bound on $q(\cdot)$.
\begin{lemma}\label{lem:bern-crossing}	Suppose $P_\mu$ is the Bernoulli distribution with mean $\mu$, and fix $\theta > 2/3$.  Then, for $\mu \geq (1+\theta)/2$,
	\[
	q_\theta(\mu) \geq \frac{\exp(-0.5)}{3}\,.
	\]
\end{lemma}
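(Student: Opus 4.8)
The plan is to recast $q_\theta(\mu)$ as a random-walk survival probability and then separate an initial ``buffer-building'' phase from a tail that is controlled by Lundberg's inequality. Writing $X_i \sim P_\mu$ i.i.d.\ with $P_\mu = \mathrm{Bernoulli}(\mu)$ and $S_n = \sum_{i=1}^n (X_i - \theta)$, the event $\{M_n > \theta \text{ for all } n\}$ is exactly $\{S_n > 0 \text{ for all } n \geq 1\}$, a walk with positive drift $\mu - \theta \geq (1-\theta)/2 > 0$. The difficulty is that one cannot simply union-bound over $n$: for small $n$ a single failure already pushes $M_n$ below $\theta$, so the early terms of $\{S_n \le 0\}$ have probability close to one. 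The whole point is that $\theta > 2/3$ forces a clean buffer structure that tames these early terms.

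First I would lower-bound $q_\theta(\mu)$ by restricting to the event that the first $L := \lfloor \theta/(1-\theta) \rfloor$ draws are all successes. On this event (which has probability $\mu^L$ by independence) we have $S_j = j(1-\theta) > 0$ for every $1 \le j \le L$, so the prefix never violates positivity, and the walk sits at height $b := L(1-\theta)$. Because $\theta > 2/3$ and $L > \theta/(1-\theta) - 1$, this height satisfies $b > 2\theta - 1 > 1/3$. By the strong Markov property the conditional survival probability equals $\P[\tilde S_m > -b \text{ for all } m \ge 0]$, where $\tilde S$ is an independent copy of the walk; hence $q_\theta(\mu) \ge \mu^L \, \P[\tilde S_m > -b \ \forall m]$.

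It then remains to bound the two factors by $e^{-1/2}$ and $1/3$ respectively. For the prefix, $\mu \ge (1+\theta)/2$ and $L \le \theta/(1-\theta)$ give $\mu^L \ge \big(\tfrac{1+\theta}{2}\big)^{\theta/(1-\theta)}$, and the elementary inequality $\big(\tfrac{1+\theta}{2}\big)^{\theta/(1-\theta)} \ge e^{-1/2}$ (equivalently $\tfrac{\theta}{1-\theta}\ln\tfrac{2}{1+\theta} \le \tfrac12$, checked by a short series argument and tight as $\theta \to 1$) yields $\mu^L \ge e^{-1/2}$. For the tail, I would apply Lundberg's inequality (Proposition \ref{prop:lundberg}) to the \emph{negative}-drift walk $W_m = -\tilde S_m = \sum (\theta - X_i)$: with $\gamma > 0$ the adjustment coefficient solving $\E[e^{\gamma(\theta - X_1)}] = 1$, it gives $\P[\exists m: W_m > u] \le e^{-\gamma u}$, and letting $u \uparrow b$ bounds $\P[\exists m: W_m \ge b] \le e^{-\gamma b}$, so the conditional survival probability is at least $1 - e^{-\gamma b}$. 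Using $b > 1/3$ together with a lower bound $\gamma \ge 3\ln(3/2)$ on the adjustment coefficient makes $\gamma b > \ln(3/2)$, hence $1 - e^{-\gamma b} > 1/3$. Multiplying the two estimates gives $q_\theta(\mu) \ge e^{-1/2}/3$.

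The main obstacle is this last step: establishing a clean, uniform lower bound on the adjustment coefficient $\gamma$ over the whole range $\theta \in (2/3,1)$ and $\mu \ge (1+\theta)/2$ (the worst case being the smallest drift $\mu = (1+\theta)/2$ with $\theta \to 1$, where $\gamma$ is smallest). A secondary technical point is the careful handling of Lundberg's inequality, which controls strict crossings $\{W_m > u\}$ rather than $\{W_m \ge b\}$, so the boundary/overshoot case must be absorbed by the limit $u \uparrow b$. Everything else --- the buffer count $L$, the bound $b > 1/3$ (which is exactly where the hypothesis $\theta > 2/3$ enters), and the inequality $\mu^L \ge e^{-1/2}$ --- is elementary.
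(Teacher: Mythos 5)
Your architecture is in fact the same as the paper's --- an all-success buffer phase followed by Lundberg's inequality for the post-buffer walk --- and your prefix estimate is sound: the inequality $\tfrac{\theta}{1-\theta}\ln\tfrac{2}{1+\theta} \leq \tfrac12$ does hold for all $\theta \in (0,1)$ (e.g.\ via $-\ln(1-x) \leq x/(1-x)$ with $x = (1-\theta)/2$), so $\mu^L \geq e^{-1/2}$ is fine. The genuine gap is exactly the step you flag as ``the main obstacle'': the uniform bound $\gamma \geq 3\ln(3/2)$ on the adjustment coefficient is asserted, never proved, and in your formulation it cannot be dismissed as routine. For Bernoulli rewards $\gamma$ solves $\mu e^{\gamma(\theta-1)} + (1-\mu)e^{\gamma\theta} = 1$; along your worst-case curve $\mu = (1+\theta)/2$, $\theta = 1-\epsilon \to 1$, the root converges to the solution $\gamma^*$ of $e^{\gamma} = 1 + 2\gamma$, which is $\gamma^* \approx 1.2564$, while $3\ln(3/2) \approx 1.2164$. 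So your claimed bound is true with only about a $3\%$ margin, and establishing it requires solving the Bernoulli Lundberg equation essentially exactly over the whole range $(\theta,\mu)$ --- which is precisely the content of the paper's Lemma \ref{lem:bern-gamma}, where the substitution $x = e^{\gamma/m}$ reduces the root equation to $x + x^2 + \cdots + x^{m-1} = \mu/(1-\mu)$ and yields an explicit lower bound on $\gamma$. Without that (or an equivalent) computation, your proof is incomplete at its central quantitative step.

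The reason the paper never faces this razor-thin inequality is a small but consequential difference in the buffer: it conditions on $m = \lceil 1/(1-\theta)\rceil$ initial successes (one more than your $L = \lfloor \theta/(1-\theta)\rfloor$), so the accumulated buffer is $m(1-\theta) \geq 1$ rather than $> 1/3$. With a buffer of size $1$, the tail only needs $1 - e^{-\gamma}$ bounded below by a constant like $1/3$, i.e.\ $\gamma \gtrsim \ln(3/2) \approx 0.41$, which the explicit bound of Lemma \ref{lem:bern-gamma} covers with a large margin; the price is a single extra factor of $\mu$ in the prefix probability, which is harmless. If you want to rescue your version, either adopt the larger buffer, or keep the exact value $b = L(1-\theta)$ (whose infimum over $\theta \in (2/3,1)$ is actually $1/2$, attained as $\theta \to 3/4^-$, not $1/3$) and then verify $\gamma b > \ln(3/2)$ using an explicit solution of the Lundberg equation --- in either case the explicit computation of $\gamma$ that you hoped to avoid is unavoidable, and your handling of the strict-versus-weak crossing via $u \uparrow b$ is the only part of the tail argument that is genuinely elementary.
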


The preceding lemma reveals that for $\delta < 1/6$ and $\mu \geq 1-\delta$, for the choice $C_{\text{Bern}} = \exp(-0.5)/3$ we have $q_{1-2\delta}(\mu) \geq C_{\text{Bern}}$.
We can now state our theorem.
\begin{theorem}\label{thm:greedy-opt-bern}
	Suppose that $P_\mu$ is Bernoulli distribution with mean $\mu$ and the prior distribution on $\mu$ satisfies Assumption \ref{ass:prior}. Then, for $k \geq (30 \log T)/c_0$
	\[
	\BR_{T,k}(\text{Greedy}) \leq 1 + \frac{15T \log T}{kc_0} + \frac{3C_0 k}{C_1} \left[5+ \log(\frac{c_0\, k}{5}) - \log \log T\right] \,.
	\]
	Furthermore, Bayesian regret of SS-Greedy when executed with $m = \Theta(\sqrt{T})$ is $\tilde{O}(\sqrt{T})$.
\end{theorem}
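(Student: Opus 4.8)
The plan is to instantiate the generic bound of Lemma~\ref{lem:greedy-gen} at the single well-chosen value $\delta = 5\log T/(k c_0)$ and then control each of its three terms separately. The hypothesis $k \ge 30\log T / c_0$ is exactly what guarantees $\delta \le 1/6$, which in turn lets us invoke Lemma~\ref{lem:bern-crossing} with $\theta = 1-2\delta$: since $\theta > 2/3$ and the threshold $(1+\theta)/2 = 1-\delta$, we get $q_{1-2\delta}(\mu) \ge C_{\mathrm{Bern}} := \exp(-1/2)/3$ for every $\mu \ge 1-\delta$. Combining this with the lower tail estimate $\P_\Gamma(\mu \ge 1-\delta) \ge \P_\Gamma(\mu > 1-\delta) \ge c_0\delta$ from Assumption~\ref{ass:prior} yields $\E_\Gamma[\Ind(\mu\ge 1-\delta)\, q_{1-2\delta}(\mu)] \ge C_{\mathrm{Bern}} c_0 \delta$.

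For the first term of \eqref{eqn:bayes-greedy}, I would apply $1-x \le e^{-x}$ to obtain $T(1 - C_{\mathrm{Bern}} c_0\delta)^k \le T\exp(-C_{\mathrm{Bern}} c_0 \delta k) = T^{\,1 - 5 C_{\mathrm{Bern}}}$, where the exponent comes from $\delta k = 5\log T/c_0$. The arithmetic fact that $5 C_{\mathrm{Bern}} = 5\exp(-1/2)/3 > 1$ is precisely what forces this term to be at most $1$; this is the step that rules out linear-in-$T$ regret, and it is a knife's-edge computation (the product exceeds one by only about a percent), so the exact constant in $\delta$ and the exact lower bound in Lemma~\ref{lem:bern-crossing} are what make the argument close. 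The middle term is exactly $3T\delta = 15 T\log T/(k c_0)$, matching the second summand of the claimed bound with no further work.

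The remaining work, and the main obstacle, is the third term $k\,\E_\Gamma[\Ind(\mu < 1-3\delta)\min(1 + \tfrac{3}{C_1(1-2\delta-\mu)}, T(1-\mu))]$. I would discard the $\min$ in favour of its first argument and pass to $\eps = 1-\mu$, so the fraction becomes $3/(C_1(\eps-2\delta))$ on the range $\eps > 3\delta$, where $\eps - 2\delta > \delta > 0$ keeps the integrand finite. Writing $F(t)=\P_\Gamma(\mu\ge 1-t)$ and using the upper tail bound $F(t) \le C_0 t$ of Assumption~\ref{ass:prior}, an integration by parts turns $\int_{3\delta}^1 (\eps-2\delta)^{-1}\,dF(\eps)$ into a bounded boundary term plus $\int_{3\delta}^1 F(\eps)(\eps-2\delta)^{-2}\,d\eps \le C_0\int_{3\delta}^1 \eps(\eps-2\delta)^{-2}\,d\eps$. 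The substitution $w = \eps-2\delta$ evaluates this elementary integral to a logarithm, producing the factor $\log(1/\delta) = \log(k c_0/5) - \log\log T$ together with $O(1)$ constants; collecting everything against the prefactor $3k/C_1$ gives the stated $\tfrac{3 C_0 k}{C_1}\bp{5 + \log(k c_0/5) - \log\log T}$. The only delicate point is bookkeeping the additive constants absorbed into the ``$5$'', which come from the boundary term and the discarded $\min$.

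Finally, for SS-Greedy I would simply invoke the second assertion of Lemma~\ref{lem:greedy-gen}, which states that the identical bound holds with $k$ replaced by the subsample size $m$ (the loss from comparing against the global best over all $k$ arms is already absorbed into the lemma, since the $m$ subsampled means are themselves i.i.d.\ draws from $\Gamma$). Taking $m = \Theta(\sqrt T)$, which satisfies $m \ge 30\log T/c_0$ for all large $T$, the three terms become $1$, then $15 T\log T/(m c_0) = \Theta(\sqrt T \log T)$, and finally $\tfrac{3C_0 m}{C_1}(5 + \log(m c_0/5) - \log\log T) = \Theta(\sqrt T \log T)$, so $BR_{T,k}(\text{SS-Greedy}) = \tilde O(\sqrt T)$.
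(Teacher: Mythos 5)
Your proposal is correct and follows the same route as the paper: the same choice $\delta = 5\log T/(kc_0)$, the same invocation of Lemma \ref{lem:bern-crossing} (with the same observation that $(1+\theta)/2 = 1-\delta$ when $\theta = 1-2\delta$), the bound $1-x \leq e^{-x}$ for the first term, and the second part of Lemma \ref{lem:greedy-gen} for SS-Greedy. Your remark that the argument closes only because $5\exp(-1/2)/3 > 1$ is exactly right, and is in fact stated more carefully than in the paper, which rounds $\exp(-0.5)/3$ to $0.2$ and uses $5 \cdot 0.2 = 1$.

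The one place you genuinely diverge is the third term. The paper does not re-derive this bound inside the proof of Theorem \ref{thm:greedy-opt-bern}; it cites Lemma \ref{lem:1gaps}, an auxiliary lemma proved by partitioning $[0,1]$ into $\delta$-width intervals, bounding the interval masses $p_1 + \cdots + p_j \leq C_0(j\delta)^\beta$, and applying the rearrangement inequality (Lemma \ref{lem:rearg-ineq}). You instead give a self-contained continuous argument: drop the second argument of the $\min$, pass to $\eps = 1-\mu$, and integrate by parts against the tail CDF $F(t) = \P_\Gamma(\mu \geq 1-t) \leq C_0 t$, reducing everything to the elementary integral $\int_{3\delta}^1 \eps(\eps - 2\delta)^{-2}\,d\eps = O(1) + \log(1/\delta)$. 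Both routes produce the same $\tfrac{3C_0}{C_1}\bp{O(1) + \log(1/\delta)}$ bound with the constants absorbed into the ``$5$'' (note that, like the paper, you also need the trivial rescaling $1 + \tfrac{3}{C_1 x} \leq \tfrac{3}{C_1}\bp{1 + \tfrac{1}{x}}$, valid since $3/C_1 > 1$). Your version is arguably more transparent for the $1$-regular case at hand; what the paper's discretized lemma buys is uniformity over all $\beta$-regular priors, which is reused verbatim in Theorems \ref{thm:upp-large-k}, \ref{thm:greedy-reg}, \ref{thm:greedy-reg-1sg} and throughout Appendix \ref{app:generalizations}, where the $\beta \neq 1$ cases require the $T(1-\mu)$ arm of the $\min$ that you discard.
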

This theorem shows that for $k=\Theta(\sqrt{T})$, Greedy is optimal (up to log factors). Similarly, for $k \geq \sqrt{T}$, SS-Greedy is optimal. The proofs of both results is in Appendix \ref{app:greedy-bern}.

\subsection{Sublinear Regret for Subgaussian Rewards}\label{subsec:sub-gaussian}

Here we show that the generic bound of Lemma \ref{lem:greedy-gen} can be used to obtain a sublinear regret bound in the general case of $1$-subgaussian reward distributions. Specifically, first we can prove the following bound  on $q_\theta(\mu)$.
\begin{lemma}\label{lem:subg-crossing}
	If $P_\mu$ is $1$-subgaussian, then for $\theta < \mu$ we have
	\[
	q_\theta(\mu) \geq 1 - \exp (-(\mu-\theta)^2/2) \geq e^{-1}(\mu-\theta)/2\,.
	\]
\end{lemma}
Next, from this lemma we obtain
\[
\inf_{\mu \geq 1-\delta} q_{1-2\delta}(\mu) \geq 1-\exp\left(-\frac{\delta^2}{2}\right)
\geq \frac{e^{-1} \delta^2}{2}\,.
\]
Combining this and Lemma \ref{lem:greedy-gen}, we can prove the following result for Greedy and SS-Greedy.
\begin{theorem}\label{thm:greedy-reg-1sg}
	Let assumptions \ref{ass:prior} and \ref{ass:reward} hold. Suppose that the Greedy algorithm (Algorithm \ref{alg:greedy}) is executed. Then, for $k \geq (54e \log T)/(c_0)$
	\begin{align*}
		\BR_{T,k}(\text{Greedy}) \leq 1 + 3T \bb{\frac{2e \log T}{kc_0}}^{1/3} + \frac{C_0 k}{C_1} \left[15+\log(\frac{kc_0}{2e}) - \log \log T\right] \,.
	\end{align*}
In other words, Bayesian regret of Greedy is $\tilde{O}(Tk^{-1/3}+k)$.
	Furthermore, Bayesian regret of SS-Greedy (Algorithm \ref{alg:subs-greedy}) when executed $m = \Theta \bp{T^{3/4}}$ is $\tilde{O}\bp{T^{3/4}}$.
\end{theorem}
Proofs of Lemma \ref{lem:subg-crossing} and Theorem \ref{thm:greedy-reg-1sg} are provided in \S \ref{app:greedy-subg}.
While this upper bound on regret is appealing -- in particular, it is sublinear regret when $k$ is large --- we are motivated by the empirically strong performance of Greedy and SS-Greedy (cf.~Figure \ref{fig:alph_75}) to see if a stronger upper bound on regret is possible.

\subsection{Tighter Regret Bound for Uniformly Upward-Looking Rewards}\label{subsec:unif-upward}

To this end, we make progress by showing that the above results are further improvable for a large family of subgaussian reward distributions, including Gaussian rewards. The following definition describes this family of reward distributions.
\begin{definition}[Upward-Looking Rewards]\label{def:upward-looking}
Let $Q$ be the reward distribution, and by abuse of notation, let $Q$ be also a random variable with the same distribution. Also, assume $\E[Q] = \mu$ and that $Q - \mu$ is $1$-subgaussian. Let $\bc{X_i}_{i=1}^\infty$ be a sequence of i.i.d.~random variables distributed according to $Q$ and $S_n = \sum_{i=1}^n X_i$. For $\theta < \mu$ define $R_n(\theta) = S_n - n\theta$ and
\[
\tau(\theta) = \inf \{n \geq 1: R_n(\theta) < 0 \text{~or~} R_n (\theta) \geq 1\}\,.
\]
We call the distribution $Q$ {\it upward-looking with parameter $p_0$} if for any $\theta < \mu$ one of the following conditions hold:
\begin{enumerate}
	\item $\P[R_{\tau(\theta)}(\theta) \geq 1] \geq p_0$
	\item $\E[(X_1-\theta) \Ind(X_1 \geq \theta)] \geq p_0$.
\end{enumerate}

More generally, a reward family $\mathcal{Q} = \bc{Q_\mu: \mu \in [0,1]}$ with $\E[Q_\mu] = \mu$ is called {\it uniformly upward-looking with parameters $(p_0, \delta_0)$} if for $\mu \geq 1-\delta_0, Q_\mu$ is upward-looking with parameter $p_0$.
\end{definition}
In \S \ref{subsubsec:examples} below, we show that a general class of reward families is uniformly upward-looking. In particular, class of reward distributions $\mathcal{F}$ that for all $\mu \geq 1-\delta_0$ satisfy $\E[(X_\mu-\mu) \Ind(X_\mu \geq \mu)] \geq c_0$ are uniformly upward looking with parameters $(c_0, \delta_0)$. This class includes the Gaussian rewards.

The preceding discussion reveals that many natural families of reward distributions are upward-looking. The following lemma shows that for such distributions, we can sharpen our regret bounds.
\begin{lemma}\label{lem:max-crossing}
Let $Q$ be upward-looking with parameter $p_0$ which satisfies $\E[Q]=\mu$. Let $\bc{X_i}_{i=1}^{\infty}$ be an iid sequence distributed as $Q$, $S_n = \sum_{i=1}^n X_i$ and $M_n= S_n / n$. Then for any $\delta \leq 0.05$,
\[
\P \bb{\exists~n: M_n < \mu - \delta } \leq \exp (-p_0 \delta/4)\,.
\]
\end{lemma}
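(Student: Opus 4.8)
The plan is to translate the statement into a first-passage (ruin) problem for a random walk and then invoke Lundberg's inequality (Proposition \ref{prop:lundberg}), using the upward-looking property only to repair the fact that the naive ruin bound is vacuous at the starting level. Fix $\theta = \mu - \delta < \mu$ and consider the recentered walk $R_n(\theta) = S_n - n\theta = \sum_{i=1}^n (X_i - \theta)$, whose increments have positive mean $\delta$. Since $\bc{\exists n: M_n < \mu - \delta} = \bc{\exists n: R_n(\theta) < 0}$, it suffices to lower bound the survival probability $p := \P\bb{R_n(\theta) \geq 0 \text{ for all } n}$ and to show $1 - p \leq \exp(-p_0\delta/4)$.

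First I would pin down the Lundberg exponent for the \emph{negated} increments $\theta - X_i$ (which have mean $-\delta$). Because $X_i - \mu$ is $1$-subgaussian, $\E[\exp(s(\theta - X_i))] \leq \exp(s^2/2 - s\delta)$, and convexity of $s \mapsto \E[\exp(s(\theta - X_i))]$ together with its negative slope at $0$ shows that the positive root $\gamma$ of $\E[\exp(\gamma(\theta - X_1))] = 1$ satisfies $\gamma \geq 2\delta$. Proposition \ref{prop:lundberg} then yields the crucial \emph{restart} bound: from any height $h > 0$, the probability that $R_n(\theta)$ ever drops below $0$ is at most $\exp(-\gamma h) \leq \exp(-2\delta h)$.

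The obstacle is that this restart bound is vacuous at the starting height $0$, so the walk must first be shown to reach a strictly positive level \emph{before} going negative --- which is exactly what the upward-looking property supplies, in two cases. In \textbf{Case 1} ($\P[R_{\tau(\theta)}(\theta) \geq 1] \geq p_0$), with probability at least $p_0$ the walk reaches level $\geq 1$ before dropping below $0$; conditioning at $\tau(\theta)$ via the strong Markov property and applying the restart bound from height $1$ gives $p \geq p_0\bp{1 - \exp(-\gamma)} \geq p_0\bp{1 - \exp(-2\delta)}$. In \textbf{Case 2} ($\E[(X_1-\theta)\Ind(X_1 \geq \theta)] \geq p_0$) I would instead condition on the first increment: on $\bc{X_1 \geq \theta}$ the walk sits at height $R_1 = X_1 - \theta \geq 0$ and survives thereafter with probability $\geq 1 - \exp(-\gamma(X_1-\theta))$, so that $p \geq \E\bb{\Ind(X_1 \geq \theta)\bp{1 - \exp(-\gamma(X_1-\theta))}}$; lower bounding $1 - e^{-y} \geq \tfrac12\min(y,1)$, using $\gamma \geq 2\delta$, and comparing to $\E[(X_1-\theta)\Ind(X_1\geq\theta)] \geq p_0$ produces a bound of the same order.

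In either case the survival probability is at least a fixed multiple of $p_0\delta$; since $\delta \leq 0.05$, one checks that $p_0\bp{1 - \exp(-2\delta)} \geq 1.9\, p_0\delta \geq 1 - \exp(-p_0\delta/4)$ using $1 - e^{-x} \leq x$, which closes the argument (the loose factor $1/4$ absorbs the constants arising in both cases). I expect the main difficulty to be precisely this passage from the trivial level-$0$ ruin bound to a positive starting height --- reconciling the ladder/overshoot behavior at $0$ with Lundberg's inequality --- together with the bookkeeping in Case 2 when the increments are unbounded, where the truncation built into $1 - e^{-y} \geq \tfrac12\min(y,1)$ must be controlled so as not to lose the lower bound $\E[(X_1-\theta)\Ind(X_1\geq\theta)] \geq p_0$.
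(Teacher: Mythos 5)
Your proposal is correct and follows essentially the same route as the paper's proof: the same reduction to a ruin problem for $R_n(\theta)=S_n-n\theta$, the same Lundberg exponent bound $\gamma \geq 2\delta$ from subgaussianity, and the same two-case use of the upward-looking property (restart from level $1$ after $\tau(\theta)$ in Case 1; conditioning on the first increment and restarting from height $X_1-\theta$ in Case 2). The one step you defer---controlling the truncation so that $\E\bb{Z_1 \Ind\bp{0 \leq Z_1 \leq \tfrac{1}{2\delta}}} \geq \tfrac12 \E\bb{Z_1 \Ind(Z_1 \geq 0)}$---is exactly the subgaussian tail estimate the paper carries out (and is precisely where the hypothesis $\delta \leq 0.05$ is used), so your plan closes in the same way.
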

From this lemma, for $\mu \geq 1-\delta$ we have
\[
q_{1-2\delta}(\mu) \geq 1 - \exp(-p_0 \delta/4) \geq (p_0 e^{-1}/4) \delta\,.
\]
The following theorem shows that in small $\delta$ regime, this {\em linear} $q(\cdot)$ yields a strictly sharper upper bound on regret than a {\em quadratic} $q(\cdot)$.
%
%
\begin{theorem}\label{thm:greedy-reg}
Let Assumptions \ref{ass:prior} and \ref{ass:reward} hold. Suppose that $\mathcal{F}$ is $(p_0, \delta_0)$ uniformly upward-looking. Then for any $k$ that satisfies the inequality,
\[
k\ge \frac{4e \max (400, 1/\delta_0^2)\log T}{c_0 p_0}\,,
\]
Bayesian regret of Greedy satisfies,
\begin{align*}
\BR_{T,k}(\text{Greedy}) \leq 1 + 3T \bb{\frac{4e \log T}{kc_0p_0}}^{1/2} + \frac{3C_0 k}{2C_1} \left[10+\log\left(\frac{kc_0p_0}{4e}\right) - \log \log T\right] \,.
\end{align*}
Furthermore, Bayesian regret of SS-Greedy when executed with $m=\Theta(T^{2/3})$ is $\tilde{O}(T^{2/3})$.
\end{theorem}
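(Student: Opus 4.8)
The plan is to feed the linear lower bound on the crossing probability $q$ into the generic regret decomposition of Lemma~\ref{lem:greedy-gen} and then optimize over the slack parameter $\delta$. From Lemma~\ref{lem:max-crossing}, together with the elementary inequality $1-e^{-x}\ge e^{-1}x$ for $x\in[0,1]$, I have, for every $\mu\ge 1-\delta$ with $\delta\le 0.05$ and $\delta\le\delta_0$,
\[
q_{1-2\delta}(\mu)\;\ge\;1-\exp(-p_0\delta/4)\;\ge\;\frac{p_0 e^{-1}}{4}\,\delta \;=:\;\alpha\delta.
\]
First I would record that the hypothesis $k\ge 4e\log T\,\max(400,1/\delta_0^2)/(c_0 p_0)$ is exactly what makes the eventual choice $\delta=(4e\log T/(c_0 p_0 k))^{1/2}$ satisfy both $\delta\le 0.05$ (the factor $400=1/0.05^2$) and $\delta\le\delta_0$ (the factor $1/\delta_0^2$), so that the bound above --- and hence the upward-looking property of $P_\mu$ for $\mu\ge 1-\delta\ge 1-\delta_0$ --- is legitimately in force; note also $\delta\le 0.05<1/3$, so Lemma~\ref{lem:greedy-gen} (which also uses Assumption~\ref{ass:reward}) applies.

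Next I would bound the three terms of \eqref{eqn:bayes-greedy}. For the first term, combining the display above with the lower-regularity bound $\P_\Gamma[\mu>1-\delta]\ge c_0\delta$ (Assumption~\ref{ass:prior}) gives $\E_\Gamma[\Ind(\mu\ge 1-\delta)q_{1-2\delta}(\mu)]\ge \alpha c_0\delta^2$, so that the first term is at most $T(1-\alpha c_0\delta^2)^k\le T\exp(-\alpha c_0\delta^2 k)$. Choosing $\delta^2=4e\log T/(c_0 p_0 k)=\log T/(\alpha c_0 k)$ makes this exactly $Te^{-\log T}=1$, which produces the leading ``$1+$'' in the statement; the same choice turns the second term $3T\delta$ into $3T(4e\log T/(kc_0 p_0))^{1/2}$, matching the claimed bound.

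The bulk of the work is the third term, which after dropping the $\min$ in favor of its first argument and substituting $x=1-\mu$ becomes (up to a benign additive constant, using $1-2\delta-\mu\le 1$ on the range) a multiple of $\int_{3\delta}^1 (x-2\delta)^{-1}\,dG(x)$, where $G(x)=\P_\Gamma[\mu\ge 1-x]$ is the CDF of $1-\mu$. I would use $x-2\delta\ge x/3$ for $x\ge 3\delta$, integrate by parts, discard the negative boundary term at $x=3\delta$, and invoke the upper-regularity bound $G(x)\le C_0 x$ (Assumption~\ref{ass:prior}). This reduces the estimate to $\int_{3\delta}^1 G(x)x^{-2}\,dx\le C_0\int_{3\delta}^1 x^{-1}\,dx=C_0\log(1/(3\delta))$, so the third term is $O\!\big(C_0 k\,\log(1/\delta)/C_1\big)$; since $\log(1/\delta^2)=\log(kc_0 p_0/(4e))-\log\log T$ for our $\delta$, careful tracking of the constants recovers $\tfrac{3C_0 k}{2C_1}\big(10+\log(kc_0 p_0/(4e))-\log\log T\big)$. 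Summing the three contributions yields the stated bound for Greedy.

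For SS-Greedy, Lemma~\ref{lem:greedy-gen} gives the identical bound with $k$ replaced by the subsample size $m$, so it suffices to substitute $m=\Theta(T^{2/3})$ (which for large $T$ meets the same lower-bound requirement on the number of arms). The corresponding $\delta=\Theta((\log T)^{1/2}T^{-1/3})$ makes the first term at most $1$, the second term $3T\delta=\tilde O(T^{2/3})$, and the third term $\tfrac{3C_0 m}{2C_1}(\cdots)=\tilde O(T^{2/3}\log T)$, giving $BR_{T,k}(\text{SS-Greedy})=\tilde O(T^{2/3})$. The one genuine obstacle is the third-term integral: the quantity $\E_\Gamma[(1-2\delta-\mu)^{-1}\Ind(\mu<1-3\delta)]$ is only of order $\log(1/\delta)$ because the upper-regularity of $\Gamma$ tames the distribution exactly where the integrand is singular, so obtaining the clean logarithmic dependence (and the precise constants) hinges on pairing the change of variables with the integration-by-parts estimate above rather than bounding the density pointwise.
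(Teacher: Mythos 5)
Your proposal follows the paper's proof almost step for step: the same linear bound $q_{1-2\delta}(\mu)\geq (p_0e^{-1}/4)\delta$ from Lemma \ref{lem:max-crossing}, the same substitution into Lemma \ref{lem:greedy-gen}, the same choice $\delta=\bp{4e\log T/(kc_0p_0)}^{1/2}$, and the same observation that the hypothesis on $k$ is exactly what guarantees $\delta\leq\min(0.05,\delta_0)$; the SS-Greedy corollary is also handled identically. The one place you diverge is the third term: the paper simply invokes its Lemma \ref{lem:1gaps}, which bounds $\E_\Gamma\bb{\Ind(\mu<1-3\delta)\min\bp{1+\tfrac{1}{1-2\delta-\mu},T(1-\mu)}}$ by $C_0(5+\log(1/\delta))$ via a discretization of $[0,1]$ into $\delta$-intervals and the rearrangement inequality, whereas you re-derive the logarithmic estimate by integrating by parts against the tail function $G(x)=\P_\Gamma[\mu\geq 1-x]$ and using $G(x)\leq C_0x$. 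That route is legitimate and arguably more transparent, but as executed it does not recover the constants claimed in the theorem: the preliminary step $x-2\delta\geq x/3$ converts $\int_{3\delta}^1(x-2\delta)^{-1}\,dG(x)$ into $3\int_{3\delta}^1 x^{-1}\,dG(x)$, so after integration by parts the coefficient multiplying the $k\log(\cdot)$ term becomes $9C_0/C_1$ rather than the $3C_0/C_1$ that the statement (equivalently, Lemma \ref{lem:1gaps}) delivers; ``careful tracking of the constants'' in your argument therefore produces a bound that is a factor of $3$ weaker in the dominant term, not the stated one. The fix is cheap: integrate by parts directly against $(x-2\delta)^{-1}$ and split
\begin{equation*}
\int_{3\delta}^1\frac{C_0\,x}{(x-2\delta)^2}\,dx
= C_0\int_{3\delta}^1\frac{dx}{x-2\delta}+2C_0\delta\int_{3\delta}^1\frac{dx}{(x-2\delta)^2}
\leq C_0\bp{\log(1/\delta)+2}\,,
\end{equation*}
which, together with the boundary term $G(1)/(1-2\delta)$, restores the coefficient $C_0$ on the logarithm and then yields the theorem's constants after multiplying by $3/C_1$ and plugging in $\log(1/\delta)=\tfrac12\bp{\log(kc_0p_0/4e)-\log\log T}$. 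Everything else in your proposal is correct and matches the paper's argument at the level of both structure and rates.
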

Proofs of Lemma \ref{lem:max-crossing} and Theorem \ref{thm:greedy-reg} are given in \S \ref{app:greedy-unif-upw}.

It is worth noting that in the case $k > T$ where subsampling is inevitable, the results presented on SS-Greedy in Theorems \ref{thm:greedy-opt-bern} and \ref{thm:greedy-reg} are still valid. The main reason is that our proof technique presented in Lemma \ref{lem:greedy-gen} bounds the regret with respect to the ``best'' possible reward of $1$ which as stated in Lemma \ref{lem:greedy-gen} allows for an immediate replacement of $k$ with the subsampling size $m$.

\subsubsection{Examples of Uniformly Upward-Looking Distributions}\label{subsubsec:examples}

Here we show that two general family of distributions are uniformly upward-looking.

First, we start by showing that the Gaussian family is uniformly upward looking.
Suppose $\mathcal{F}^{\text{gaussian}} = \bc{\mathcal{N}(\mu,1): \mu \in [0,1]}$. Then, for any $\delta_0 \leq 1$ the family $\mathcal{F}^{\text{gaussian}}$ is uniformly upward-looking with parameters $(1/\sqrt{\pi}, \delta_0)$. To see this, note that for any $\theta < \mu $ the random variable $X_\mu - \theta$ is distributed according to $\mathcal{N}(\mu-\theta,1)$. Hence, using the second condition in Definition \ref{def:upward-looking}, for $X_1 \sim \mathcal{N}(\mu,1)$ we have
	\begin{equation*}
		\E[(X_1-\theta) \Ind (X_1 \geq \theta)] \geq \frac{\E_{Z \sim \mathcal{N}(0,1)}|Z|}{2} = \frac{1}{\sqrt{\pi}}\,.
	\end{equation*}
More generally, suppose that $Z$ is $1$-subgaussian and consider the family of reward distributions $\mathcal{F} = \mu + Z$. In other words, the family $\mathcal{F}$ is shift-invariant, meaning that $P_{\mu'} = P_\mu + (\mu' - \mu)$. The same argument as the above (Gaussian) case shows that for any choice of $\delta_0 \leq 1$, $\mathcal{F}$ is uniformly upward-looking with parameters $(p_0, \delta_0)$ with $p_0 = \E[Z \Ind(Z\geq 0)]$.

Second, we show that a family of reward distributions $\mathcal{F} = \bc{P_\mu: \mu \in [0,1]}$  satisfying the following: there exist constants $0 < \delta_0 < 1, p_0>0$ and $\theta < 1$ such that for any $\mu \geq \theta$ integer $m$ exists such that:
\begin{equation}\label{eqn:power-m}
	\P[X_\mu \geq \theta + 1/m]^m \geq p_0\,.
\end{equation}
Then, it is easy to observe that $\mathcal{F}$ is upward-looking with parameters $(p_0,\delta_0)$. For example, this also can be used to prove that the Gaussian rewards are uniformly upward-looking: take $m=1$, then the above condition translates to
\begin{equation*}
	\P[\mathcal{N}(\mu,1) \geq \theta  + 1] \geq \P[\mathcal{N}(0,1) \geq 1] \geq 1-\Phi(1)\,,
\end{equation*}
where $\Phi$ is the CDF of the standard Gaussian distribution.

The proof of the above claim is as follows. The idea is to show that for any family satisfying Eq. \eqref{eqn:power-m} the first condition for the uniformly upward-looking distribution is satisfied. Indeed, consider $\theta < \mu$ and let $\bc{X_i}_{i=1}^\infty \sim P_\mu$. Then, we claim that the random walk $R_n(\theta) = \sum_{i=1}^n X_i - n\theta$ crosses $1$ before crossing zero, with probability at least $p_0$. Indeed, Eq. \eqref{eqn:power-m} implies that the probability that all observations $X_1, X_2, \ldots, X_m$ are at least $\theta+1/m$ is at least $p_0$. In this case, as $\theta + 1/m > \theta$, with probability $p_0$, the random walk will not cross zero before $m$ and it crosses $1$ at $n = m$. This proves our claim.
\section{Simulations}\label{sec:simulations}


Recall Figure \ref{fig:alph_75} with results of simulations for two pairs of $T, k$ in the many-armed regime where rewards were generated according to Gaussian noise and uniform prior. These results are robust when considering a wide range of beta priors as well as both Gaussian and Bernoulli rewards (see \S \ref{app:add-sim}).
In this section, motivated by real-world applications, we consider a \emph{contextual reward} setting and show that our theoretical insights carry to the contextual setting as well.\footnote{Codes and data for reproducing all empirical results of the paper are available in this Github repository \href{https://github.com/khashayarkhv/many-armed-bandit}{https://github.com/khashayarkhv/many-armed-bandit}.}

\paragraph{Data and Setup.} We use the Letter Recognition Dataset \cite{frey1991letter} from the UCI repository. The dataset is originally designed for the letter classification task ($26$ classes) and it includes $n=20000$ samples, each presented with $16$ covariates. As we are interested in values of $k > 26$, we only use the covariates from this dataset and create synthetic reward functions as follows. We generate $k = 300$ arms with parameters $\theta_1, \theta_2, \ldots, \theta_k \in \IR^d$ ($d$ will be specified shortly) and generate reward of arm $i$ at time $t$ via
\[
Y_{it} = X_t^\top \theta_i + \varepsilon_{it}\,,
\]
where $X_t\in\IR^d$ is the context vector at time $t$ and $\varepsilon_{it}$ is iid noise.

Our goal is to show that there are two distinct sources of free exploration. The first one is due to the variation in covariates, and the other one is due to having large number of arms. Therefore, we consider two experiments with $d=2$ and $d=6$ and compare the performance of several algorithms in these two cases. As contexts are $16$-dimensional, we project them onto $d$-dimensional subspaces using SVD.

For each $d$, we generate $50$ different simulation instances, where we pick $T = 8000$ samples at random (from the original $20000$ samples) and generate the arm parameters according to the uniform distribution on the $\ell_2$-ball in $\IR^d$, i.e., $\theta_i \sim \Unif_d = \{u \in \IR^d: \|u\|_2 \leq 1 \}$.  We plot the distribution of the per-instance regret in each case, for each algorithm; note the mean of this distribution is (an estimate of) the Bayesian regret.  We study the following algorithms and also their subsampled versions (with subsampling $m = \sqrt{T}$; subsampling is denoted by ``SS''):
\begin{enumerate}
	\item Greedy,
	\item OFUL Algorithm \citep{abbasi2011improved}, and
	\item TS \citep{thompson1933likelihood, russo2014learning}.
\end{enumerate}

\paragraph{Results.}The results are depicted in Figure \ref{fig:contextual}. We can make the following observations. First, subsampling is an important concept in the design of low-regret algorithms, and indeed, SS-Greedy outperforms all other algorithms in both settings. Second, Greedy performs well compared to OFUL and TS, and it benefits from the same free exploration provided by a large number of arms that we identified in the non-contextual setting: if it drops an arm $a$ due to poor empirical performance, it is likely that another arm with parameter close to $\theta_a$ is kept active, leading to low regret. Third, we find that SS-TS actually performs reasonably well; it has a higher average regret compared to SS-Greedy, but smaller variance.
Finally, by just focusing on Greedy, OFUL, and TS (ignoring their SS versions) and comparing with Figure \ref{fig:contextual-small-k}, we see empirical evidence that the aforementioned source of free exploration is different from that observed in recent literature on contextual bandits (see, e.g., \citep{bastani2017mostly, kannan2018smoothed, raghavan2018externalities, hao2019adaptive}), where free exploration arises due to diversity in the context distribution.  For example, simulations of \cite{bastani2017mostly} show that  when $d$ is too small compared to $k$, the context diversity is small and performance of Greedy deteriorates which leads to a high variance for its regret. Figure \ref{fig:contextual-small-k} which shows results of the above simulation, but using only $k=8$ arms, underscores the same phenomena -- by reducing the number of arms, the performance of Greedy substantially deteriorates compared to TS and OFUL. But in the case of $k=300$, even with $d=2$, Greedy has the best average and median regret.
\begin{figure*}[ht]
  \centering
  \subfigure[Large number of arms.]{\includegraphics[width=0.47\textwidth]{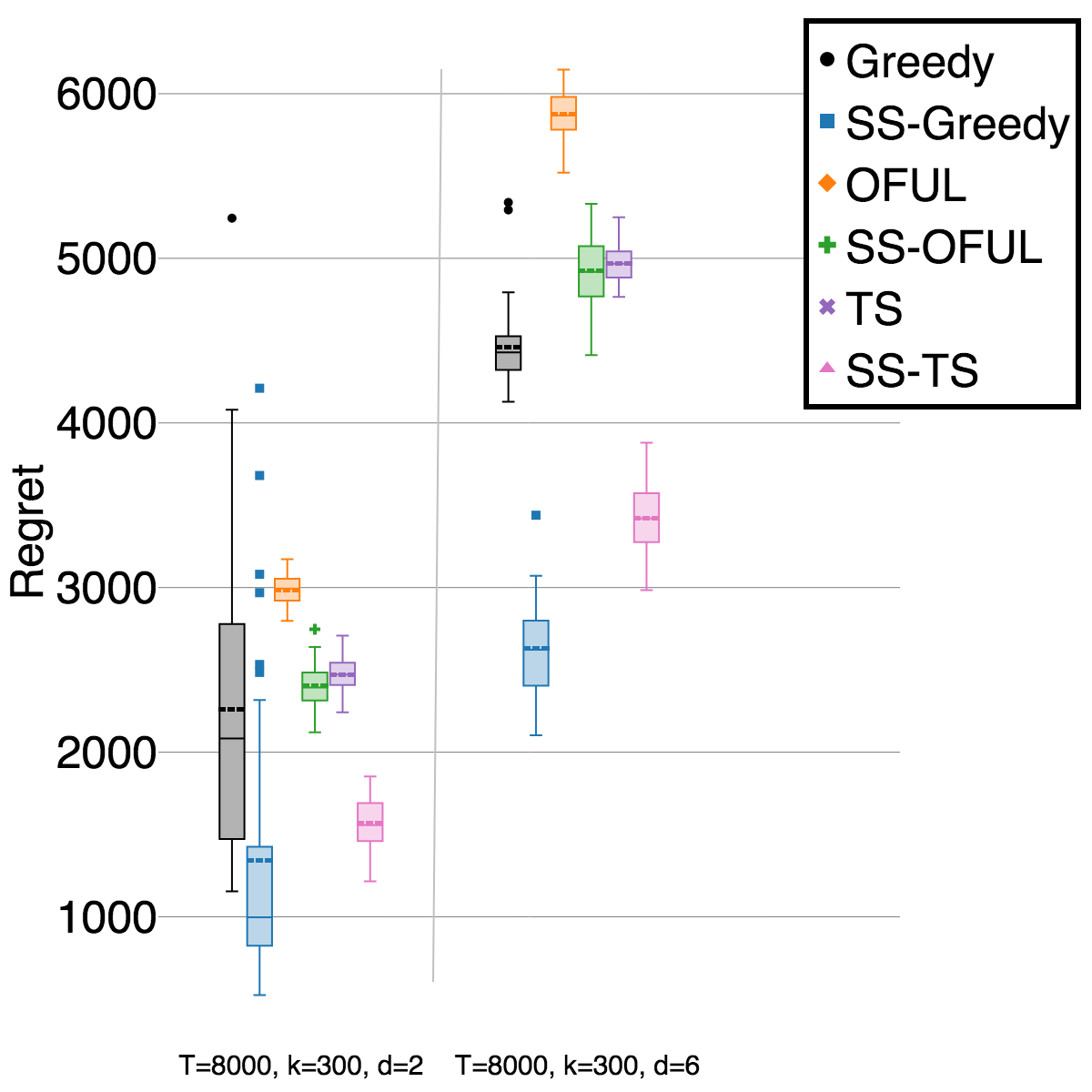}\label{fig:contextual}}
\hfill
  \subfigure[Small number of arms. ]{\includegraphics[width=0.47\textwidth]{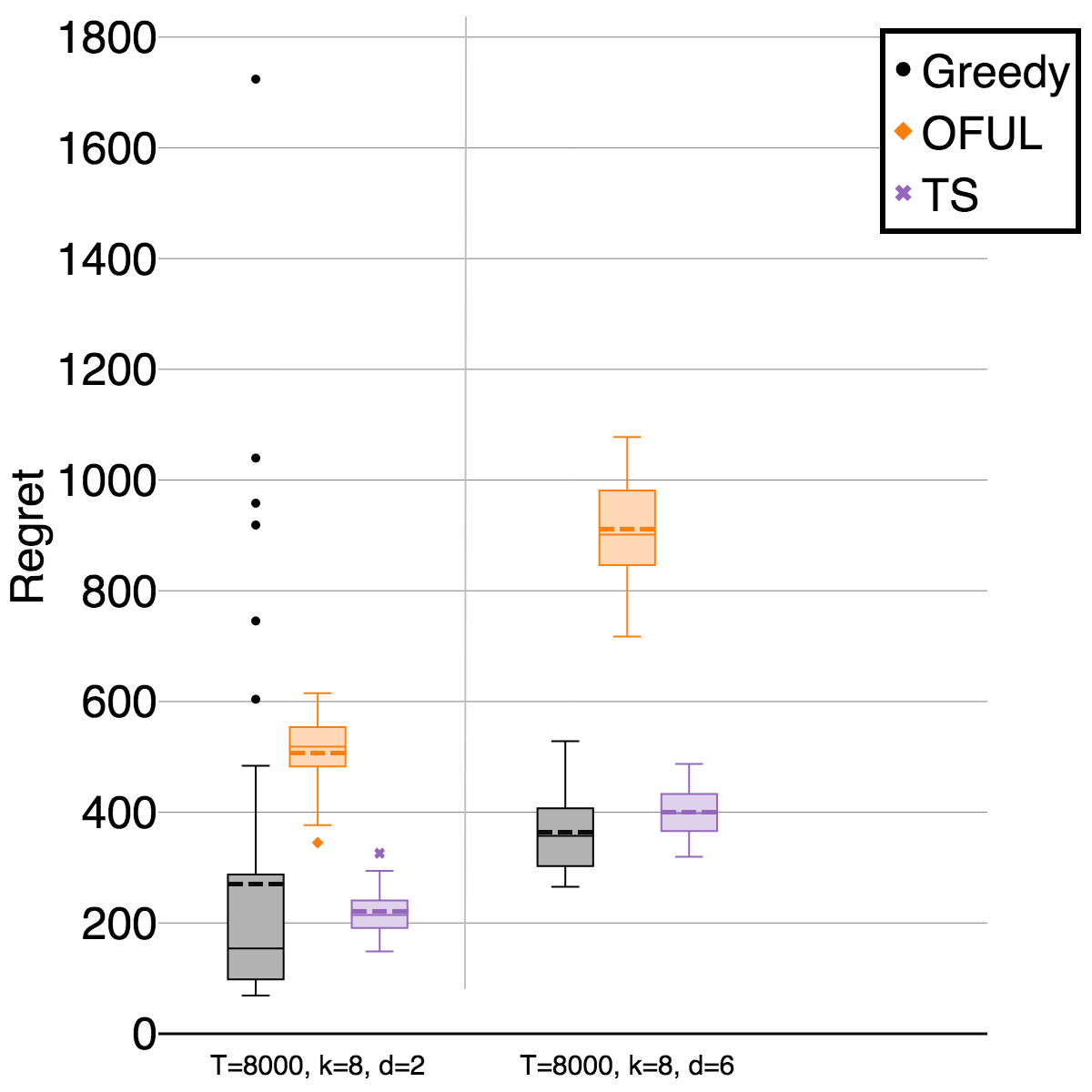}\label{fig:contextual-small-k}}
  \caption{
  	Distribution of the per-instance regret for the contextual setting with real data. For $k=8$, subsampled algorithms are omitted as subsampling leads to a poor performance. In these figures, the dashed lines indicate the average regret. 
  }
\end{figure*}

\section{Generalizations}\label{sec:generalizations}

In this section we generalize results of \S \ref{sec:LB-and-opt}-\ref{sec:greedy} in two ways. First, in \S \ref{subsec:beta-regular}, we relax Assumption \ref{ass:prior} and consider more general prior distributions. We prove that all results that appeared in \S \ref{sec:LB-and-opt}-\ref{sec:greedy} can be generalized to the broader family of $\beta$-regular prior distributions, introduced in Definition \ref{def:beta-prior}. Second, in \S \ref{subsec:seq-greedy}, we consider a more effective version of the SS-Greedy algorithm, called Seq-Greedy, and prove that our results for SS-Greedy also hold for Seq-Greedy.

\subsection{General $\beta$-regular Priors} \label{subsec:beta-regular}

Assume that the prior reward distribution is $\beta$-regular per Definition \ref{def:beta-prior}.
Before stating the results, we need to update the notion of  ``many armed regime" as well as subsampling rate of various algorithms.

\paragraph{Small versus large $k$ regime.} It can be shown (see \S \ref{app:generalizations} for details) that the transition from small to large $k$ regime, for lower bound, depends on whether $k < T^{\beta/(\beta+1)}$ or not. However, for the upper bounds, the transition point is algorithm dependent (depending on which term of the regret is dominant). For example, when $\beta = 1$ and the reward is Gaussian, we saw in \S \ref{sec:LB-and-opt}-\ref{sec:greedy} that for the lower bounds (and even some upper bounds) the transition from small to large $k$ occurs at $\sqrt{T}$. However, our result for Greedy in \S \ref{sec:greedy} would give
\[
\BR_{T,k}(\text{Greedy}) \approx k + Tk^{-1/2}\,,
\]
which means the small versus large $k$ transition for this upper bound occurs at  $T^{2/3}$.

\paragraph{Subsampling rate.} Similar to \S \ref{sec:LB-and-opt}-\ref{sec:greedy}, SS-UCB and SS-Greedy use all arms whenever $k < m$. For these algorithms the optimal values for $m$ are used, which per above discussion, is algorithm dependent and happens when the transition between small $k$ and large $k$ occurs for UCB and Greedy, respectively.

\paragraph{Conditions on the density.} Similar to Theorem \ref{thm:upp-small-k}, for $\beta \geq 1$ we require a bounded density for UCB and SS-UCB in the small $k$ regime. For $\beta < 1$, it is easy to observe that the density cannot be bounded near $1$ as $\P[\mu \geq 1-\vep] = \Theta(\vep^\beta)$. Hence, in this setting we instead assume that
\[
g(x) \leq D_0 (1-x)^{\beta-1}\,.
\]

\paragraph{A new notation for greedy algorithms.} We introduce a new notation, $\betaF$, that will be used for Greedy and SS-Greedy. To motivate the notation, recall that results in \S \ref{sec:greedy} qualitatively depend on the exponent of $\delta$ in $q_{1-2\delta}(\mu)$. On the other hand, since the term $\P \bp{ \mu \geq 1-\delta}$ by definition is of order $\Theta(\delta^\beta)$, we could argue the exponent of $\delta$ in  the expression
\begin{equation}\label{eq:first-term-of-regret}
\E_\Gamma \bb{ \Ind \bp{ \mu \geq 1-\delta} q_{1-2\delta}(\mu)}
\end{equation}
that appeared in Lemma \ref{lem:greedy-gen}, dictates the regret upper bounds. With that motivation in mind, let us define
$\betaF$ to be a lower bound on the exponent of $\delta$ in \eqref{eq:first-term-of-regret}. More concretely, we assume $\betaF$ is a constant that satisfies\footnote{One can be more precise and define $\betaF$ to be the supremum of all constants that satisfy \eqref{eq:gen-prior-delta-scaling}, but for our purposes this level of precision is not necessary.} the following:
\begin{equation}\label{eq:gen-prior-delta-scaling}
\E_\Gamma \bb{ \Ind \bp{ \mu \geq 1-\delta} q_{1-2\delta}(\mu)} = \tilde{\Omega}(\delta^\betaF)\,.
\end{equation}
Clearly, any such constant $\betaF$ must satisfy the inequality $\betaF\ge\beta$, given Definition \ref{def:beta-prior}. But for some reward distributions one may need to pick $\betaF$ strictly larger than $\beta$. In fact, we use the arguments of \S \ref{sec:greedy} and show that the constants
\begin{itemize}
\item $\beta_\mathcal{\text{Bernoulli}} = \beta$,
\item $\beta_\mathcal{\text{$1$-subgaussian}} = \beta + 2$, and
\item $\beta_\mathcal{\text{Upward-Looking}} = \beta + 1$
\end{itemize}
satisfy \eqref{eq:gen-prior-delta-scaling}.

\subsubsection{Results for $\beta$-regular priors}

First, we show that a similar argument as in \S \ref{subsec:low} can be used to show that regret of any policy is at least
\[
\min\left(k, T^{\beta/(\beta+1)} \right)\,.
\]
This highlights that from the perspective of the lower bounds small $k$ versus large $k$ is determined when $k < T^{\beta/(\beta+1)}$ versus $k \ge T^{\beta/(\beta+1)}$ respectively. For the upper bounds, the results are presented in Table \ref{tab:gen-beta} below. Specifically, Table \ref{tab:gen-beta} shows upper bounds for each of UCB and Greedy and their subsampling versions, for both small versus large $k$ and when  $\beta$ is smaller or larger than $1$. A more detailed version of all these results with corresponding proofs is deferred to \S \ref{app:generalizations}.
\begin{table*}[hptb]
    \centering
    \caption{Upper bounds for regret of various algorithms for different ranges of $\beta$ and for small versus large $k$}\label{tab:gen-beta}
    \rowcolors{5}{}{gray!10}
    \begin{tabular}{m{0.15\textwidth}m{0.13\textwidth}m{0.35\textwidth}m{0.15\textwidth}m{0.15\textwidth}}
        \toprule
        & \multicolumn{2}{c}{$\beta < 1$} &\multicolumn{2}{c}{$\beta \ge 1$} \\
        \cmidrule(lr){2-5}
         & \textbf{small $k$} & \textbf{large $k$} & \textbf{small $k$} & \textbf{large $k$} \\
          \textbf{Algorithm} &  &      &      \\
        \midrule
        			UCB & $k^{1/\beta}$ & $k T^{(1-\beta)/2}$ & $k$ & $k$  \\
  	SS-UCB & $k^{1/\beta}$ & $\sqrt{T}$ & $k$ & $ T^{\beta/(\beta+1)}$ \\
Greedy & $T k^{-1/\betaF}$ & $\min(k^{(\beta_\mathcal{F}-\beta+1)/\beta_\mathcal{F}}, k T^{(1-\beta)/2})$ & $T k^{-1/\betaF}$ & $k$ \\
SS-Greedy & $T k^{-1/\betaF}$ & $T^{(\betaF - \beta + 1)/(\betaF - \beta + 2)}$ & $T k^{-1/\betaF}$ & $T^{\betaF/(\betaF+1)}$ \\
        \bottomrule
    \end{tabular}
\end{table*}

\subsection{Sequential Greedy} \label{subsec:seq-greedy}

When $k$ is large, allocating the first $k$ (or $T^{\beta/(\beta+1)}$ arms in case of subsampling) time-periods for exploration before exploiting the good arms may be inefficient. We can design a sequential greedy algorithm, in which an arm is selected and pulled until its sample average drops below a certain threshold, say $1 - \upsilon$. Once that happens, a new arm is selected and a similar routine is performed; we denote this algorithm by Seq-Greedy and its peseudo-code is shown in Algorithm \ref{alg:seq-greedy}.
\begin{algorithm}
	\caption{Sequential Greedy Algorithm (Seq-Greedy)}\label{alg:seq-greedy}
	\begin{algorithmic}[1]
		\STATE \textbf{Input:} $\upsilon$
		\STATE Initialize $i = 1$.
		\STATE Pull arm $a_1 = i$
		\FOR {$t \geq 2$}
		\IF {$\hmu_i(t-1) \geq 1 - \upsilon$}
		\STATE Pull arm $a_t = i$ ~~\COMMENT{current arm is good}
		\ELSIF{$i < k$}
		\STATE Pull arm $a_t = i+1$ ~~\COMMENT{use a new arm}
		\STATE Update the index $i \gets i+1$
		\ELSE
		\STATE Pull arm $a_t = \argmax_{i\in[k]} \hmu_i(t-1)$ ~~\COMMENT{execute greedy}
		\ENDIF
		\ENDFOR
	\end{algorithmic}
\end{algorithm}
In this section we show that for an appropriate choice of $1-\upsilon$, Bayesian regret of Seq-Greedy is similar to that of SS-Greedy.

First, we state the following result on the regret of Seq-Greedy.
\begin{lemma}[Bayesian Regret of Seq-Greedy]\label{lem:seq-greedy-gen}
	Let Assumption \ref{ass:reward} hold and suppose that Algorithm \ref{alg:seq-greedy} is executed with parameter $\upsilon = 2 \delta$. Define $q_\theta(\mu)$ as the probability that sample average of an i.i.d. sequence distributed according to $P_\mu$ never crosses $\theta$ (see also Eq. \eqref{eqn:max-cross}).
Then, for any integer $1 \leq k_1 \leq k$, the following bound holds on the Bayesian regret of the Seq-Greedy algorithm:
\begin{align}\label{eqn:reg-seq-greedy}
		\BR_{T,k} \bp{\text{Seq-Greedy}}
		&\leq  T \left(1 - \E_\Gamma \Big[\Ind \bp{\mu \geq 1-\delta} q_{1-2\delta}(\mu)\Big]\right)^{k_1}
		+3T \delta \nonumber \\
		&+ k_1 \E_\Gamma \bb{\Ind \bp{\mu < 1-3\delta} \min \bp{1+ \frac{3}{C_1(1-2\delta-\mu)}, T(1-\mu)}}\,.
\end{align}
\end{lemma}
Note that the above result shows that the Bayesian Regret of Seq-Greedy is at most
\begin{align*}
	\BR_{T,k} \bp{\text{Seq-Greedy}}
	&\leq \min_{1 \leq k_1 \leq k} \Bigg \{ T \left(1 - \E_\Gamma \Big[\Ind \bp{\mu \geq 1-\delta} q_{1-2\delta}(\mu)\Big]\right)^{k_1}
	+3T \delta \\
	&+ k_1 \E_\Gamma \bb{\Ind \bp{\mu < 1-3\delta} \min \bp{1+ \frac{3}{C_1(1-2\delta-\mu)}, T(1-\mu)}} \Bigg \}\,,
\end{align*}
i.e., the best possible rate one can achieve by optimizing $k$ in the upper bound of regret of Greedy presented in Lemma \ref{lem:greedy-gen}.

While the above result can be used to prove similar results as those presented in Table \ref{tab:gen-beta}, we only prove this for the case $\beta = 1$ (similar to \S \ref{sec:LB-and-opt}-\ref{sec:greedy}) as the proofs of other cases are  similar.
\begin{theorem}[Bayesian Regret of Seq-Greedy]\label{thm:seq-greedy}
	Suppose that Assumptions  \ref{ass:prior} and \ref{ass:reward} hold and that Algorithm \ref{alg:seq-greedy} has been executed with parameter $\upsilon = 2 \delta $. Then,
	\begin{itemize}
		\item[(a)] For Bernoulli reward and $\delta= \tilde{\Theta} \bp{\max(k^{-1}, T^{-1/2})}$ we have
		\begin{equation*}
			\BR_{T,k}(\text{Seq-Greedy}) \leq \tilde{O} \bp{\max \bb{\sqrt{T}, \, \frac{T}{k}} }\,.
		\end{equation*}
		\item[(b)] For general $1$-subgaussian rewards and $\delta = \tilde{\Theta} \bp{\max(k^{-1/3}, T^{-1/4})}$ we have
		\begin{equation*}
			\BR_{T,k}(\text{Seq-Greedy}) \leq \tilde{O} \bp{\max \bb{T^{3/4}, \, \frac{T}{k^{1/3}} } }.
		\end{equation*}
		\item[(c)] For uniformly upward-looking distributions and $\delta = \tilde{\Theta} \bp{\max(k^{-1/2}, T^{-1/3})}$ we have
		\begin{equation*}
			\BR_{T,k}(\text{Seq-Greedy}) \leq \tilde{O} \bp{\max \bb{T^{2/3}, \, \frac{T}{k^{1/2} } } } \,.
		\end{equation*}
	\end{itemize}
\end{theorem}
Proofs of Lemma \ref{lem:seq-greedy-gen} and Theorem \ref{thm:seq-greedy} are presented in \S \ref{app:generalizations}.

\section{Conclusion}\label{sec:conclusion}

We first prove that in multi armed problems with many arms subsampling is a critical step in designing optimal policies. This is achieved by proving a lower bound for Bayesian regret of any policy and showing that it can be achieved when subsampling is integrated with a standard UCB algorithm. But surprisingly, through both empirical investigation and theoretical development we found that greedy algorithms, and a subsampled greedy algorithm in particular, can outperform many other approaches that depend on active exploration.  In this way our paper identifies a novel form of free exploration enjoyed by greedy algorithms, due to the presence of many arms.
As noted in the introduction, prior literature has suggested that in contextual settings, greedy algorithms can exhibit low regret as they obtain free exploration from diversity in the contexts.  An important question concerns a unified theoretical analysis of free exploration in the contextual setting with many arms, that provides a complement to the empirical insights we obtain in the preceding sections.  Such an analysis can serve to illuminate both the performance of Greedy and the relative importance of context diversity and the number of arms in driving free exploration; we leave this for future work.

%
%
%


\bibliographystyle{ormsv080}
\bibliography{refs}



\begin{APPENDICES}
	\section{Proofs of \S \ref{sec:LB-and-opt}}\label{sec:proofs-LB-and-opt}

\subsection{Proof of the Lower Bound}\label{app:prelim-lb}

\proof{Proof of Theorem \ref{thm:lb}.}
We start by proving that if $k \geq (6 C_0/c_0) \sqrt{T}$, the Bayesian regret of any policy is at least $c_L \sqrt{T}$ for a constant $c_L$.
	
	Note that, without loss of generality, we can only consider the policies that first pull arm $1$, then if needed arm $2$, then if needed arm $3$, and so on. This is because all arms have the same prior $\Gamma$. More precisely, for any permutation $\sigma$ on $[k]$ and any policy $\pi$, let $\sigma \circ \pi$ be the policy that pulls arm $\sigma(i)$ when policy $\pi$ pulls arm $i$. Then,  $\BR_{T,k}(\pi) = \BR_{T,k}(\sigma \circ \pi)$.
	
	With this observation in hand, we define a class of ``bad orderings'' that happen with a constant probability, for which the Bayesian regret cannot be better than $\sqrt{T}$. This would prove our lower bound. More precisely, let $m = \lfloor{\sqrt{T} \rfloor}$ and define event $A$ as the set of all realizations $\bmu$ that satisfy the following conditions:
	\begin{itemize}
		\item[(i)] $\max_{i=1}^k \mu_i \geq 1-\frac{1}{kc_0}$,
		\item[(ii)] $\max_{i=1}^m \mu_i \leq 1-\frac{1}{2C_0\sqrt{T}}$,
		\item[(iii)] $\sum_{i=1}^m \mu_i \leq \bp{1-\frac{1}{8C_0}} m$\,,
	\end{itemize}
	where $c_0$ and $C_0$ are the constants defined in Assumption \ref{ass:prior}.
	
	First, we claim that the event $A$ happens with a constant probability. Note that if $Z \sim \Gamma$, then
	\begin{equation}\label{eqn:Z_low}
	\P \bb{Z \leq 1- \frac{1}{kc_0}} = 1 - \P \bb{Z > 1 - \frac{1}{kc_0}} \leq 1-\frac{1}{k} \,,
	\end{equation}
	using the lower bound in Assumption \ref{ass:prior}. Hence,
	\begin{equation*}
	\P \bb{\max_{i=1}^k \mu_i \leq 1-\frac{1}{kc_0}} \leq \bp{1-\frac{1}{k}}^k \leq e^{-1}\,,
	\end{equation*}
	implying that event (i) happens with probability at least $1-e^{-1}$. Similar to Eq. \eqref{eqn:Z_low}, using the upper bound in Assumption \ref{ass:prior}, we have
	\begin{equation*}
	\P \bb{\max_{i=1}^m \mu_i \leq 1-\frac{1}{2C_0\sqrt{T}}} \geq \bp{1-\frac{1}{2\sqrt{T}}}^m \geq 0.5\,,
	\end{equation*}
	where the right hand side inequality follows from the generalized Bernoulli inequality; therefore event (ii) holds with probability at least $0.5$. Finally, we lower bound the probability of event (iii).  In fact, using
	\begin{align*}
	\E_{\Gamma}[\mu]
	&\leq 1 \cdot \P_\Gamma \bb{\mu \geq 1-\frac{1}{2C_0}} + \bp{1-\frac{1}{2C_0}} \P_\Gamma \bb{\mu < 1-\frac{1}{2C_0}} \leq \frac{1}{2} + \frac{1}{2} \bp{1-\frac{1}{2C_0}} \\
	&\leq 1 - \frac{1}{4C_0}\,,
	\end{align*}
	we have $\E_\Gamma[\mu_i] + 1/(8 C_0) \leq 1-1/(8C_0)$. Hence, Hoeffding's inequality implies that
	\begin{align*}
	\P \bb{\sum_{i=1}^m \mu_i \geq \bp{1-\frac{1}{8C_0}} m} \leq \P \bb{\frac{\sum_{i=1}^m (\mu_i-\E[\mu_i])}{m} \geq \frac{1}{8C_0}}
	\leq \exp\bp{- \frac{m}{32C_0^2}} \leq 0.1\,,
	\end{align*}
	when $m \geq 74 C_0^2$, i.e., if $T \geq (74 C_0^2)^2$. This means (iii) occurs with probability at least $0.9$.
A union bound implies that,
	\begin{equation*}
	    \P[A] \geq 1-e^{-1}-0.5-0.1 \geq 0.03\,.
	\end{equation*}
	In other words, the event $A$ happens at least for a constant fraction of realizations $\bmu$. We claim that on the realizations $\bmu$ that are in $A$, the expected regret of policy $\pi$ is lower bounded by $\Omega(\sqrt{T})$, i.e.,
\[
\E[R_T(\pi \mid \bmu) | \bmu \in A] \geq \Omega(\sqrt{T})\,.
\]
	
	Note that given the assumption about the policy $\pi$, the decision-maker starts by pulling arm $1$, and continues pulling that arm for some number of rounds. At some point the decision-maker pulls arm $2$ (if needed), and then pulls only arms $1$ or $2$ for some number of rounds, until at some point she pulls arm $3$ (if needed); and so on.   Although the choice of whether to try a new arm or keep pulling from the existing set of tried arms may depend on the observations $Y_{it}$, on any particular sample path 
	one of these two possibilities arise:

\begin{itemize}	
\item	{\it Case 1.} The decision-maker only pulls from the first $m$ arms during the $T$-period horizon; or
\item	{\it Case 2.} The decision-maker finishes pulling all arms in $[m]$, and starts pulling some (or all) arms in the set $[k]\setminus[m]$.
\end{itemize}
	
	We claim that in both cases, the decision-maker will incur $\Omega(\sqrt{T})$ regret.  This is argued by considering each case in turn below.
	
\paragraph{Regret in Case 1.} In this case the regret incurred in each period $t$, for $t\in[T]$, is at least $\max_{i=1}^k \mu_i - \max_{i=1}^m \mu_i$. Therefore,
	\begin{align*}
	R_T(\pi\mid\bmu)
	\geq T \bb{\bp{1-\frac{1}{c_0k}} - \bp{1-\frac{1}{2C_0 \sqrt{T}}}}
	\geq \frac{T}{2C_0\sqrt{T}} - \frac{T}{c_0 k}
	\geq \frac{\sqrt{T}}{3C_0}\,.
	\end{align*}
	where the last inequality is true once $k \geq 6 \sqrt{T} C_0/c_0$.

\paragraph{Regret in Case 2.} In this case the algorithm pulls each of arms $1,2,\ldots, m$ at least once and hence using $m \geq \sqrt{T}/2$
	\begin{align*}
	R_T(\pi\mid\bmu)
	\geq \sum_{i=1}^m \bp{\max_{j=1}^k \mu_j - \mu_i}
	%
	%
	\geq m \bb{\bp{1-\frac{1}{c_0 k}} - \bp{1-\frac{1}{8C_0}}}
	\geq \frac{\sqrt{T}}{24C_0}\,,
	\end{align*}
	which holds when $k \geq 24 C_0/c_0$.

	Thus regardless of the observations $Y_{it}$, and whether the decision-maker decides to try all arms in $[k]$ or only the  arms in $[m]$, the regret is lower bounded by $l_b \sqrt{T}$, where $l_b = 1/(24C_0)$. For finishing the proof note that we have
	\begin{align*}
	\BR_{T,k} (\pi)
	&= \int \E[R_T(\pi \mid {\bmu})] d {\bmu}
	\geq \int \E[R_T(\pi \mid {\bmu})] \Ind(\bmu \in A) d {\bmu} \\
	&\geq l_b \sqrt{T} \int \Ind(\bmu \in A) d {\bmu}
	=l_b \sqrt{T} \,\P[A]
	\geq 0.03 l_b \sqrt{T}\,.
	\end{align*}
	As a conclusion, if $T \geq \max \bp{16, (74 C_0)^2} = C_1$, then for any $k \geq 6 \sqrt{T}C_0/c_0$, the Bayesian regret of $\pi$ is at least $0.03 l_b \sqrt{T}$.
	
	Now if $k \leq 6 \sqrt{T} C_0/c_0$, then we can exploit the fact that the Bayesian regret is nondecreasing in time horizon. In other words, letting $r = \lfloor{\bp{c_0k/ (6C_0)}^2 \rfloor}$, we have $k \geq 6 \sqrt{r} C_0/c_0$ and hence
	\begin{equation*}
	\BR_{T,k}(\pi) \geq \BR_{r^2,k}(\pi) \geq 0.03 l_b \sqrt{r} = 0.03 l_b \frac{c_0 k}{12 C_0}\,,
	\end{equation*}
	as long as $r \geq  \max \bp{16, (74 C_0)^2} = C_1$, meaning $k \geq \lceil{(6C_0/c_0) \sqrt{C_1} \rceil}$. Hence, letting $c_L = (0.03 l_b c_0)/(12C_0)$ and $c_D = \max \bp{C_1, (6C_0/c_0) \sqrt{C_1}}$ implies the result.
\Halmos
\endproof

\subsection{Proofs for Upper Bound on SS-UCB}\label{app:prelim-up}

\proof{Proof of Theorem \ref{thm:upp-small-k}}
	The proof follows from the analysis of asymptotically optimal UCB algorithm which can be found in \cite{lattimore2018bandit} with slight modifications.
	
	Suppose that $\bmu = (\mu_1, \mu_2, \ldots, \mu_k)$ are drawn from $\Gamma$ and let $\mu_{(k)} \leq \mu_{(k-1)} \leq \ldots \leq \mu_{(1)}$ be a sorted version of these means. Denote $\bmu_{()}=(\mu_{(1)},\mu_{(2)},\ldots,\mu_{(k)})$ and note that conditioned on $\bmu_{()}$, we can first derive an upper bound on the expected regret of the asymptotically optimal UCB algorithm and then take an expectation to derive an upper bound on the Bayes regret of this algorithm. Denote $\Delta_{(i)} = \mu_{(1)} - \mu_{(i)}$ and decompose the expected regret as
	\begin{equation*}
	R_T(\text{UCB} \mid \bmu_{()}) = \sum_{i=2}^k \Delta_{(i)} \E[N_{(i)}(T)] \,.
	\end{equation*}
	where $N_{(i)}(T)$ is the number of pulls of arm with mean $\mu_{(i)}$ till time $T$. Theorem 8.1 of \cite{lattimore2018bandit} establishes an upper bound on $\E[N_{(i)}(T)]$. Specifically, for any $\epsilon \in (0, \Delta_{(i)})$, we have
	\begin{equation*}
	\E[N_{(i)}(T)] \leq 1 + \frac{5}{\epsilon^2} + \frac{2 \log f(T) + \sqrt{\pi \log f(T)} + 1}{(\Delta_{(i)}-\epsilon)^2}\,.
	\end{equation*}
	Take $\epsilon = \Delta_{(i)}/2$ and note that whenever $T \geq 4$, $\sqrt{\pi \log f(T)} \leq 3 \log f(T)$ and $1 \leq \log f(T)$. Hence,
	\begin{equation*}
	\E[N_{(i)}(T)] \leq 1 + \frac{20 + 36 \log f(T)}{\Delta_{(i)}^2}\,.
	\end{equation*}
	Note that as the number of pulls for each arm is upper bounded by $T$, we can write
	\begin{equation*}
	\E[N_{(i)}(T)] \leq \min \bp{1 + \frac{20 + 36 \log f(T)}{\Delta_{(i)}^2},T}\,.
	\end{equation*}
	Plugging this into the regret decomposition, we have
	\begin{align*}
	R_T(\text{UCB} \mid \bmu_{()} ) =  \sum_{i=2}^k  \min \bp{\Delta_{(i)} + \frac{20 + 36 \log f(T)}{\Delta_{(i)}},T \Delta_{(i)}}\,.
	\end{align*}
	We need to take expectation of the above expression over the distribution of $\bmu_{()}=(\mu_{(1)}, \mu_{(2)}, \cdots \mu_{(k)})$. As it can be observed from the above representation, only the distribution of $\Delta_{(2)}, \Delta_{(3)}, \ldots, \Delta_{(k)}$ matters. 
	We separate the case of $i=2$ and $i\geq 3$, and also the terms $\Delta_{(i)}$ from $1/\Delta_{(i)}$ to write
	\begin{align*}
	\BR_{T,k}(\text{UCB}) &\leq \sum_{i=2}^k \E[\Delta_{(i)}] + \sum_{i=3}^k \E\bb{\frac{20 + 36 \log f(T)}{\Delta_{(i)}}} + \E \bb {\min \bp{\frac{20 + 36 \log f(T)}{\Delta_{(2)}},T \Delta_{(2)}}} \,.
	\end{align*}
	Note that for the first term above, we can write
	\begin{equation*}
		\sum_{i=2}^k \E[\Delta_{(i)}] \leq k \E[\mu_{(1)}] - \E \bb{\sum_{i=1}^k \mu_i} \leq k \bp{\E[\mu_{(1)}] - \E_{\mu \sim \Gamma}[\mu]} \leq k\,.
	\end{equation*}
	
	Lemma \ref{lem:Deltas} shows that for any $i \geq 3$, we have $\E[\frac{1}{\Delta_{(i)}}] \leq \frac{D_0 k}{i-2}$, where $D_0$ is the upper bound on $g(x)$. Now looking at the term describing $\Delta_{(2)}$, we divide the expectation into two regimes: $\Delta_{(2)} \leq 1/\sqrt{T}$ and $\Delta_{(2)} > 1/\sqrt{T}$. As it is shown in Lemma \ref{lem:Deltas}, $U = \Delta_{(2)}$ has the following density
	\begin{align*}
	g_{U}(u) &= k(k-1) \int_0^{1-u} g(x)g(x+u) G(x)^{k-2} dx \\
	&\leq k (k-1) D_0 \int_0^{1-u} g(x) G(x)^{k-2} dx
	= k D_0 G(1-u)^{k-1}
	\leq k D_0\,,
	\end{align*}
	where we used the fact that $\frac{d}{dz} G^{k-1}(z) = (k-1) g(z) G^{k-2}(z)$ together with the fundamental theorem of calculus. Note that here $G$ is the cumulative of distribution $\Gamma$ defined according to $G(z) = \int_0^z g(x) dx$. Hence, we can write
	\begin{align*}
	\E \bb{\min \bp{\frac{20 + 36 \log f(T)}{\Delta_{(2)}},T \Delta_{(2)}}}
	&\leq D_0 k \Bigg[ \int_{0}^{1/\sqrt{T}} T z dz + \int_{1/\sqrt{T}}^{1} \bp{\frac{20 + 36 \log f(T)}{z}} dz \Bigg] \\
	&\leq \frac{D_0 k}{2} + \frac{D_0 k}{2} + D_0 k \bp{20 + 36 \log f(T)} \int_{1/\sqrt{T}}^{1} \frac{1}{z} dz \\
	&= D_0k + D_0k \bp{10+18\log f(T)} \log T\,.
	\end{align*}
	Summing up all these terms, implies the following Bayesian regret bound for UCB:
	\begin{align*}
	\BR_{T,k}(\text{UCB})
	&\leq k + D_0k (20 + 36 \log f(T))  \sum_{i=3}^k \frac{1}{i-2}  + D_0k
	+ D_0 k \bp{10+18\log f(T)} \log T \\
	&\leq k +D_0 k(20+36\log f(T))\bp{1+\log k}
	+ D_0k + D_0k(10+18\log f(T)) \log T \\
	&\leq k \bp{1 + D_0 + D_0(10 + 18\log f(T)) (2+ 2 \log k + \log T)} \,,
	\end{align*}
	as desired.
\Halmos
\endproof

\proof{Proof of Theorem \ref{thm:upp-large-k}}
	While this theorem can be proved from Theorem \ref{thm:upp-small-k}, here we provide a proof that does not require an upper bound on the density. First note that the regret on any instance $\bmu=\bp{\mu_1, \mu_2, \ldots, \mu_k}$, with subsampled arms $\bmu_m=(\mu_{i_1}, \mu_{i_2}, \cdots, \mu_{i_m})$ can be written as
	\begin{equation*}
		R_T(\text{SS-UCB} \mid \bmu) = R_T(\text{SS-UCB}\mid \bmu_m) + T(\max_{j=1}^k \mu_j - \max_{l=1}^m \mu_{i_l})\,.	
	\end{equation*}
	Taking an expectation implies that
	\begin{align*}
	\BR_{T,k}(\text{SS-UCB})
	&= \BR_{T,m}(\text{SS-UCB}) + T \bp{\E \bb{\max_{j=1}^k \mu_j} - \E \bb{\max_{l=1}^m \mu_l}} \\
	&\leq \BR_{T,m}(\text{SS-UCB}) + T \bp{1 - \bb{\max_{l=1}^m \mu_{i_l}}}\,.
	\end{align*}
	Now, as $m$ arms $i_1, i_2, \ldots, i_m$ have the same distribution as $\mu_1, \mu_2, \ldots, \mu_k$ (i.e., $\Gamma$), the first term is Bayesian regret of UCB algorithm with $m$ arms. From here we replace them with $\bmu_m=(\mu_1, \mu_2, \cdots, \mu_m)$ for simplicity. Our goal is to prove that both terms are $\tilde{O} \bp{\sqrt T}$. For the second term, let $\theta = \log T/(m c_0)$ and write
	\begin{equation*}
		\P \bp{\max_{l=1}^m \mu_l < 1-\theta} \leq \bp{1-c_0 \theta}^m \leq \exp(-c_0 \theta m) = \frac{1}{T} \,.
	\end{equation*}
	Hence,
	\begin{equation*}
		T \bp{1 - \bb{\max_{l=1}^m \mu_l}} \leq 1 + T \theta \leq 1 + \frac{\log T \sqrt{T}}{c_0}\,.
	\end{equation*}
	We now proceed to proving that the first term is also small. Note that with probability at least $1-1/T$, $\max_{l=1}^m \mu_l$ is larger than $1-2 \theta$. Let this event be $A$. Now note that for these instance in $A$ we can write
	\begin{equation*}
		R_T \bp{\text{SS-UCB} \mid \bmu_m} = \sum_{l=1}^m \Delta_i \E[N_{i_l}(T)] \leq \sum_{i: \mu_i < 1-3 \theta} \Delta_i \E[N_i(T)] + \sum_{i: \mu_i \geq 1 - 3\theta} \E[N_i(T)] 3\theta\,.
	\end{equation*}
	Furthermore, the upper bound on the pulls of UCB (see Theorem 8.1 of \cite{lattimore2018bandit}) for any arm with $\mu < 1 - 3 \theta$
	\begin{equation*}
		\Delta_i \E \bb{N_i(T)} \leq 1 + \frac{20 + 36 \log f(T)}{\Delta_i} \leq 1 + \frac{20 + 36 \log f(T)}{1-2\theta - \mu_i}\,.
	\end{equation*}
	Hence, we can write
	\begin{equation*}
		R_T \bp{\text{SS-UCB} \mid \bmu_m} \leq \Ind \bp{\bar{A}} T + \Ind \bp{A} \bp{3 \theta T \sum_{\mu_i \geq 1-3\theta} 1 + \sum_{\mu_i < 1 - 3\theta} \bp{1 + \frac{20 + 36 \log f(T)}{1-2\theta - \mu_i}}}\,.
	\end{equation*}
	Taking expectation with respect to $\bmu_m$ (and hence $A$) implies that
	\begin{equation*}
		\BR_{T,m} \bp{\text{SS-UCB}} \leq 1 + 3 \theta Tm \P \bp{\mu \geq 1-3 \theta} + \bp{20 + 36 \log f(T)} m \E \bb{\Ind \bp{\mu < 1-3\theta} \bp{1 + \frac{1}{1-2\theta-\mu}}}\,.
	\end{equation*}
	Now, note that $\P(\mu \geq 1-3\theta) \leq 3 C_0 \theta$. Hence, Lemma \ref{lem:1gaps} implies that
	\begin{equation*}
		\BR_{T,m} \bp{\text{SS-UCB}} \leq 1 + 9 C_0 \theta^2 m T + C_0 (20 + 36 \log f(T)) \bp{5 + \log(1/\theta)} m \,.
	\end{equation*}
	Replacing $\theta$ and $m$ implies the result.
\Halmos
\endproof

\section{Proofs of Section \ref{sec:greedy}}\label{app:greedy-proofs}

\subsection{Proof of Lemma \ref{lem:greedy-gen}: A General Upper Bound for Greedy and SS-Greedy}
\label{app:greedy-general-upper}

Before proceeding with the proof, we need some notations. Note that we have
\begin{equation*}
R_T \bp{\text{Greedy} \mid \bmu} = \sum_{i=1}^T \Delta_i \E[N_i(T)]\,,
\end{equation*}
where $\Delta_i = \max_{l=1}^k \mu_l - \mu_i$ is the suboptimality gap of arm $i$. Define $\tilde{R}_T(\text{Greedy}) = \sum_{i=1}^k (1-\mu_i) \E[N_i(T)]$. In other words, $\tilde{R}_T(\text{Greedy})$ is the regret of greedy when arms are compared to $1$. Note that this implies that for any realization $\bmu$ we have
\begin{equation*}
R_T \bp{\text{Greedy} \mid \bmu} \leq \tilde{R}_T(\text{Greedy} \mid \bmu)\,.
\end{equation*}
Note that $\BR_{T,k}(\text{Greedy})$ is the expectation of $R_T(\text{Greedy} \mid \bmu)$. Similarly, we can define the expectation of $\tilde{R}_T(\text{Greedy})$ as $\widetilde{\BR}_{T,k}(\text{Greedy})= \E[\tilde{R}_T(\text{Greedy})]$ which implies that
\begin{equation*}
\BR_{T,k}(\text{Greedy}) \leq \widetilde{\BR}_{T,k}(\text{Greedy})\,.
\end{equation*}
Our goal is to prove that $\widetilde{\BR}_{T,k}(\text{Greedy})$ satisfies the inequality given in the statement of Lemma \ref{lem:greedy-gen} and hence the result. Note that the replacement of $\max_{i=1}^k \mu_i$ with a constant $1$ will be helpful in proving the second part of this lemma.
	
Fix the realization $\bmu = (\mu_1, \mu_2, \cdots, \mu_k)$ and partition the interval $(0,1)$ into sub-intervals of size $\delta$. In particular, let $I_1, I_2, \ldots, I_h$ be as $I_1 = [1-\delta, 1]$ and $I_j = [1-j\delta, 1-(j-1)\delta)$, for $j \geq 2$, where $h = \lceil{1/\delta \rceil}$. For any $1 \leq j \leq h$, let $A_j$ denote the set of arms with means belonging to $I_j$. Suppose that for each arm $i \in [k]$, a sequence of i.i.d. samples from $P_{\mu_i}$ is generated. Then, using definition of function $q_{1-2\delta}$ in Eq. \eqref{eqn:max-cross}, for any arm $i \in A_1$ we have
\begin{equation*}
	\P \bb{\exists t \geq 1: \hmu_{i}(t) \leq 1-2\delta} = 1 - q_{1-2\delta}(\mu_i)\,,
\end{equation*}
where $\hmu_{i}(t) = \sum_{i=1}^t Y_{it}/t$ is the empirical average of rewards of arm $i$ at time $t$. Now, consider all arms that belong to $A_1$. As the above events are independent, this implies that for the (bad) event $\bar{G} = \cap_{i \in A_1} \bb{\exists t \geq 1: \hmu_{i}(t) \leq 1-2\delta}$ we have
\begin{equation*}
\P \bb{\bar{G}} \leq \prod_{i \in A_1} (1-q_{1-2\delta}(\mu_i))\,.
\end{equation*}
The above bounds the probability that $\bar{G}$, our bad event happens. Now note that if the (good) event $G$ happens, meaning that there exists an arm in $A_1$ such that its empirical average never drops below $1-2\delta$ we can bound the regret as follows. First note that
\begin{equation*}
\tilde{R}_T(\text{Greedy} \mid \bmu) = \sum_{i=1}^k (1-\mu_i) \E[N_i(T)]\,,
\end{equation*}
Our goal is to bound $\E[N_i(T)]$ for any arm that belongs to $\cup_{j=4}^h A_j$. Indeed, a standard argument based on subgaussianity of arm $i$ implies that
\begin{align*}
\mathbb{P}(N_i(T) \geq t + 1 \mid \bmu)
\leq
\mathbb{P}(\hmu_i(t) \geq 1- 2\delta \mid \bmu)
\leq
\exp \bp{-t(1-2\delta-\mu_i)^2/2},
\end{align*}
where we used the fact that a suboptimal arm $i$ (with mean less than $1-3\delta$) only would be pulled for the $t+1$ time if its estimate after $t$ samples is larger than $1-2\delta$ and that (centered version of) $\hmu_i(t)$ is $1/t$-subgaussian. Now note that for any discrete random variable $Z$ that only takes positive values, we have $\mathbb{E}[Z] = \sum_{t=1}^{\infty} \mathbb{P}(Z \geq t)$. Hence,
\begin{align*}
\mathbb{E}[N_i(T) \mid \bmu]
= 1 + \sum_{t \geq 1} \mathbb{P}(N_i(T) \geq t+1 \mid \bmu)
&\leq 1 + \sum_{t=1}^{\infty} \exp\bp{\frac{-t(1-2\delta-\mu_i)^2}{2}} \\
&= 1 + \frac{1}{1-\exp \bp{\frac{(1-2\delta-\mu_i)^2}{2}}}\,.
\end{align*}
Now note that for any $z \in [0,1]$ we have $\exp(-z) \leq 1 - 2C_1z$ where $C_1=(1-\exp(-1))/2$. Hence, we have
\begin{equation*}
\mathbb{E}[N_i(T) \mid \bmu] \leq 1 + \frac{1}{C_1(1-2\delta-\mu_i)^2}\,,
\end{equation*}
which implies that
\begin{align*}
(1-\mu_i) \mathbb{E}[N_i(T) \mid \bmu]
= (1-\mu_i) + \frac{1-\mu_i}{C_1(1-2\delta-\mu_i)^2}
\leq 1 + \frac{3}{C_1(1-2\delta-\mu_i)}\,,
\end{align*}
where we used the inequality $1-\mu_i \leq 3(1-2\delta-\mu_i)$ which is true as $\mu_i \leq 1-3\delta$. Note that the above is valid for any arm that belongs to $\cup_{j=4}^h A_j$. Furthermore, the expected number of pulls of arm $i$ cannot exceed $T$. Hence,
\begin{equation*}
(1-\mu_i) \mathbb{E}[N_i(T) \mid \bmu]
\leq \min \bp{1 + \frac{3}{C_1(1-2\delta-\mu_i)}, T(1-\mu_i)} \,.
\end{equation*}
As a result,
\begin{equation*}
	R_T \bp{\text{Seq-Greedy} \mid \bmu} \leq T \P (\bar{G})  + 3T \delta + \sum_{j=4}^h \sum_{i \in B_j}  \min \bp{1 + \frac{3}{C_1(1-2\delta-\mu_i)}, T(1-\mu_i)}\,.
\end{equation*}
The last step is to take expectation with respect to the prior. Note that for the first term we can also be rewrite it as $\prod_{i=1}^k \bp{1-\Ind(\mu \geq 1-\delta) q_{1-2\delta}(\mu_i)}$. Hence, taking expectation with respect to $\mu_1, \mu_2, \cdots, \mu_k \sim \Gamma$ implies
\begin{align*}
	\widetilde{\BR}_{T,k} \bp{\text{Greedy}}
	&\leq  T \bp{1 - \E_\Gamma \bb{\Ind \bp{\mu \geq 1-\delta} q_{1-2\delta}(\mu))}}^k
	+3T \delta \\
	&+ k \E_\Gamma \bb{\Ind \bp{\mu < 1-3\delta} \min \bp{1+ \frac{3}{C_1(1-2\delta-\mu)}, T(1-\mu)}}\,,
\end{align*}
as desired. The proof of second part follows from the fact that $\widetilde{\BR}_{T,k} \bp{\text{SS-Greedy}} = \widetilde{\BR}_{T, m} \bp{\text{Greedy}}$ and that $\BR_{T,k} \bp{\text{SS-Greedy}} \leq \widetilde{\BR}_{T,k} \bp{\text{SS-Greedy}}$. \Halmos

\subsection{Proofs for Bernoulli Rewards}\label{app:greedy-bern}

\subsubsection{Proof of Lemma \ref{lem:bern-crossing}.}
Here we provide a more general version of Lemma \ref{lem:bern-crossing} and prove it.

\begin{lemma}\label{lem:bern-crossing-general} Suppose $P_\mu$ is the Bernoulli distribution with mean $\mu$ and that $m \geq 4$ exists such that $1/m \leq 1 - \theta < 1/(m-1)$. Then, for any $\mu > \theta$ we have
		\begin{equation*}
		q_{\theta}(\mu) \geq \mu^m \bp{1-\bb{\frac{\mu}{(1-\mu)(m-1)}}^{-m/(m-1)}}\,.
		\end{equation*}
	Furthermore, if $\mu \geq (1+\theta)/2$ the above quantity is at least $\exp(-0.5)/3$.
\end{lemma}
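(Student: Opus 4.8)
The plan is to read $q_\theta(\mu)$ as a survival probability of a random walk and apply Lundberg's inequality (Proposition \ref{prop:lundberg}) after manufacturing an initial surplus. Write $S_n = \sum_{i=1}^n X_i$ with $X_i \sim \mathcal{B}(\mu)$, so that $q_\theta(\mu) = \P\bb{S_n > n\theta \text{ for all } n \ge 1}$. First I would condition on the first $m$ draws all being successes, an event of probability $\mu^m$. On this event $S_n = n > n\theta$ for every $n \le m$, so the constraint holds through time $m$, and the walk has accumulated a surplus $R_m = S_m - m\theta = m(1-\theta)$, where the hypothesis $1/m \le 1-\theta$ guarantees $R_m \ge 1$. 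By the strong Markov property, conditionally on this event the probability that $S_n > n\theta$ persists for all $n > m$ equals the probability that a fresh walk started from level $u = m(1-\theta)$ is never ruined. Writing $W_j = \sum_{i=1}^j(\theta - X_{m+i})$ for the walk with increments $\theta - X$ (mean $\theta - \mu < 0$), the event $S_{m+j} \le (m+j)\theta$ is exactly $\bc{W_j \ge u}$, so
\begin{equation*}
q_\theta(\mu) \ge \mu^m \bp{1 - \P\bb{\exists\, j \ge 1 : W_j \ge u}}.
\end{equation*}

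Next I would bound the ruin probability by Lundberg's inequality. Since $W_j$ has negative drift and $W_j \to -\infty$ a.s., Proposition \ref{prop:lundberg} gives $\P\bb{\exists\, j : W_j > u} \le e^{-\gamma u}$ for any $\gamma > 0$ with $\E[\exp(\gamma(\theta - X_1))] \le 1$ (the strict-versus-nonstrict crossing is immaterial and can be absorbed by a limiting argument in $\theta$). The crucial choice is the sub-critical exponent
\begin{equation*}
\gamma = \frac{1}{(m-1)(1-\theta)} \, \log \frac{\mu}{(1-\mu)(m-1)},
\end{equation*}
for which $\gamma u = \gamma\, m(1-\theta) = \tfrac{m}{m-1}\log\tfrac{\mu}{(1-\mu)(m-1)}$, so that $e^{-\gamma u} = \bb{\mu/((1-\mu)(m-1))}^{-m/(m-1)}$ is precisely the subtracted term in the statement. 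Combining the two displays yields the first claim.

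The main obstacle is verifying the moment condition $\E[\exp(\gamma(\theta - X_1))] \le 1$ for this $\gamma$. Setting $z = e^{-\gamma}$, this reduces (after multiplying through by $z^\theta$) to the elementary but delicate inequality $z^\theta \ge \mu z + (1-\mu)$, i.e.\ that $z$ lies above the smaller root of $f(z) = z^\theta - \mu z - (1-\mu)$; here one must use \emph{both} bounds $1/m \le 1-\theta < 1/(m-1)$, and the inequality is tight (it holds with only a small slack). Using the identity $z^\theta = z\,\bb{\mu/((1-\mu)(m-1))}^{1/(m-1)}$, which follows from the definition of $z$ since $\tfrac{\theta}{(m-1)(1-\theta)} = \tfrac{1}{(m-1)(1-\theta)} - \tfrac{1}{m-1}$, the condition becomes $z\bp{[\mu/((1-\mu)(m-1))]^{1/(m-1)} - \mu} \ge 1-\mu$, which I would establish by elementary estimates exploiting that $(m-1)(1-\theta)$ is close to $1$.

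Finally, for the ``furthermore'' claim I would substitute $\mu \ge (1+\theta)/2$, equivalently $1-\mu \le (1-\theta)/2$. Combined with $m-1 < 1/(1-\theta)$ this gives $(1-\mu)(m-1) < 1/2$ (so the factor $1 - \bb{\cdots}^{-m/(m-1)}$ is bounded away from $0$) and $1-\mu < 1/(2(m-1))$ (so $\mu^m \ge (1 - 1/(2(m-1)))^m$, which is increasing in $m$). Both factors are then decreasing as $\mu$ decreases to its lower bound, so the product $\mu^m(1 - \bb{\cdots}^{-m/(m-1)})$ is minimized at the smallest admissible $m = 4$; a short numerical check there shows it stays above $e^{-1/2}/3$.
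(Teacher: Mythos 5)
You take essentially the same route as the paper: condition on the first $m$ draws all being $1$ (probability $\mu^m$), note the resulting surplus $m(1-\theta)\geq 1$, and control the subsequent ruin probability of the walk with increments $\theta-X_i$ by Lundberg's inequality. The only structural difference is bookkeeping: the paper takes the unit surplus $u=1$ and lower-bounds the true Lundberg coefficient $\gamma^*$ (its Lemma \ref{lem:bern-gamma} gives $\gamma^*\geq \frac{m}{m-1}\log\frac{\mu}{(1-\mu)(m-1)}$), whereas you keep the full surplus $u=m(1-\theta)$ and pick a specific exponent whose sub-criticality must be checked; since $\gamma u$ is the same number in both cases, the two routes are equivalent. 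Your use of Proposition \ref{prop:lundberg} with $\E[\exp(\gamma Z)]\leq 1$ rather than $=1$ is harmless (apply the proposition at $\gamma^*\geq\gamma$ and use $e^{-\gamma^* u}\leq e^{-\gamma u}$), and the case $\mu\leq 1-1/m$, where your $\gamma$ is nonpositive, is exactly the case where the stated lower bound is negative and hence vacuous.

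The gap is that the step you call ``the main obstacle'' --- verifying $\E[\exp(\gamma(\theta-X_1))]\leq 1$ --- is precisely the content of the paper's Lemma \ref{lem:bern-gamma}, and you never prove it; moreover the route you suggest (``exploiting that $(m-1)(1-\theta)$ is close to $1$'') is not how it goes. The actual completion: set $B=\bb{\mu/((1-\mu)(m-1))}^{1/(m-1)}$ and $t=1/(1-\theta)$, so that your $z$ equals $B^{-t}$; assuming $B\geq 1$ (otherwise the bound is vacuous) and using only $t\leq m$, i.e.\ $1-\theta\geq 1/m$ (contrary to your remark, the upper bound $1-\theta<1/(m-1)$ is not needed in this part), it suffices to show $B-\mu\geq (1-\mu)B^m$; factoring out $(B-1)$, this reads $(1-\mu)(B+B^2+\cdots+B^{m-1})\leq\mu$, which holds because $B+\cdots+B^{m-1}\leq (m-1)B^{m-1}=\mu/(1-\mu)$. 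This is verbatim the computation behind the paper's coefficient bound, so your proposal is the paper's proof with its hard lemma deferred. The ``furthermore'' part has a second gap: monotonicity in $\mu$ only reduces each fixed $m$ to the worst case $\mu=1-\frac{1}{2(m-1)}$; to place the minimum over $m$ at $m=4$ you also need both worst-case factors, $(1-\frac{1}{2(m-1)})^m$ and $1-(2-\frac{1}{m-1})^{-m/(m-1)}$, to be increasing in $m$ --- true, but not obvious for the second, and you do not argue it. Once that is supplied, your $m=4$ evaluation $(5/6)^4\bp{1-(5/3)^{-4/3}}\approx 0.238> e^{-1/2}/3\approx 0.202$ does close the argument. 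Notably, keeping the exact exponent $-m/(m-1)$ as you do is essential here: the paper's own chain loosens it to $-1$ and then invokes $(1-\frac{1}{2(m-1)})^{m-1}\geq e^{-1/2}$, which is in fact false for every finite $m$, and its bound comes out to only $\approx 0.193$ at $m=4$; so your tighter handling of this step is what actually delivers the stated constant.
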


First note that the above result implies Lemma \ref{lem:bern-crossing} as condition $\theta > 2/3$ ensures the existence of $m \geq 4$ satisfying $1/m \leq 1 - \theta < 1/(m-1)$. Now we prove the above result.
\proof{Proof of Lemma \ref{lem:bern-crossing-general}}
The proof uses Lundberg's inequality. Note that with probability $\mu^m$ we have $X_1 = X_2 = \cdots = X_m = 1$. In this case, the ruin will not occur during the first $m$ rounds and as $\theta \leq 1-1/m$ we can write:
\begin{align*}
\P[\exists n: M_n \leq \theta]
&\leq (1-\mu^m) + \mu^m \P \bb{\exists n \geq m+1: M_n \leq 1-1/m \mid X_1= X_2 = \ldots = X_m = 1} \\
&\leq (1-\mu^m) + \mu^m \P \bb{\exists n \geq m+1: \sum_{i=m+1}^n X_i \leq  (n-m) \theta - m(1-\theta)} \\
&\leq (1-\mu^m) + \mu^m \P \bb{\exists n \geq m+1: \sum_{i=m+1}^n Z_i \geq  1} \,,
\end{align*}
where $Z_i = -X_i + \theta$. The last term above falls into the framework defined by Proposition \ref{prop:lundberg} for the sequence $\bc{Z_i}_{i=m+1}^{\infty}$ and $u=1$. It is not very difficult to see that the other conditions are satisfied. Indeed, as $\E[Z_i] = \theta - \mu < 0$ and that $Z_i$ is subgaussian, the moment generating function $M_Z(s) = \E[\exp(sZ)]$ is defined for all values of $s$. Furthermore, $M_Z(0) = 1, M'_Z(0) = \E[Z_i] < 0$. Finally, we can assume that $\lim_{s \rightarrow \infty} M_Z(s) = +\infty$, which basically means that $\P[Z>0] > 0$. Note that if we assume the contrary, then $P[Z>0] = 0$, which means that the probability that $\sum_{i=m+1}^n Z_i$ crosses $1$ is actually zero, making the claim obvious. Hence, due to continuity of $M_Z(s)$ there exists $\gamma>0$ such that $M_Z(\gamma) = 1$. Finally, note that on the set that $\eta(1) = \infty$ (meaning that crossing $1$ never happens), using the fact that $\E[Z_i] < 0$ and that all moments of $Z$ exist and are bounded, Strong Law of Large Numbers implies that $\sum_{i=m+1}^n Z_i \rightarrow -\infty$ almost surely. This shows that the conditions of Proposition \ref{prop:lundberg} are satisfied. Hence,
\begin{equation*}
\P \bb{\exists n \geq m+1: \sum_{i=m+1}^n Z_i \geq  1} \leq \exp \bp{- \gamma}\,,
\end{equation*}
where $\gamma$ is the Lundberg coefficient of the distribution $-X + \theta$. Lemma \ref{lem:bern-gamma} which is stated below provides a lower bound on $\gamma$. In particular, it shows that
\begin{equation*}
\gamma \geq \frac{m}{m-1} \log \bp{\frac{\mu}{(1-\mu)(m-1)}}\,,
\end{equation*}
Substituting this and noting that $q_{\theta}(\mu) = 1- \P[\exists n: M_n \leq \theta]$ implies the first part.

For the second part, note that we have
\begin{equation*}
q_{\theta}(\mu) \geq \mu^m - \mu^m \bp{\frac{\mu}{(1-\mu)(m-1)}}^{-\frac{m}{m-1}}\,.
\end{equation*}
Now note as $m/(m-1) > 1$ we can write:
\begin{align*}
q_{\theta}(\mu)
\geq \mu^m - \mu^m \bp{\frac{(1-\mu)(m-1)}{\mu} }
\geq \mu^{m-1} \bp{\mu-(1-\mu)(m-1)}\,.
\end{align*}
Hence, as $\mu \geq (1+\theta)/2 \geq 1 - 1/2(m-1)$ which implies that
\begin{align*}
q_{\theta}(\mu)
\geq \bp{1-\frac{1}{2(m-1)}}^{m-1} \bp{1-\frac{m}{2(m-1)}} \geq \exp(-0.5) \bp{1-\frac{2}{3}}
\geq \exp(-0.5)/3\,,
\end{align*}
as desired.

\begin{lemma}\label{lem:bern-gamma}
Let $Z$ be distributed according to $\theta - \mathcal{B}(\mu)$, with $\theta = 1-1/m < \mu$. Then, the following bound on $\gamma$, the Lundberg coefficient of distribution $Z$ holds:
\begin{equation*}
\frac{m}{m-1} \log \bp{\frac{\mu}{(1-\mu)(m-1)}} \leq \gamma \leq  \frac{2m}{m-1} \log \bp{\frac{\mu}{(1-\mu)(m-1)}}
\end{equation*}
\end{lemma}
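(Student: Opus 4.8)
The plan is to reduce the defining equation for the Lundberg coefficient to a polynomial whose relevant positive root can be pinned down by two elementary bounds on a geometric sum. Recall that here the random walk increment is $Z = \theta - X$ with $X \sim \mathcal{B}(\mu)$ and $\theta = 1 - 1/m$, so the Lundberg equation $\E[e^{\gamma Z}] = 1$ reads $(1-\mu)e^{\gamma\theta} + \mu e^{-\gamma/m} = 1$. Substituting $y = e^{\gamma/m} > 1$ and clearing the denominator gives the polynomial equation $(1-\mu)y^m - y + \mu = 0$. Since $y = 1$ is always a root, I would factor it out, writing $(1-\mu)(y^m-1) - (y-1) = (y-1)\big[(1-\mu)\sum_{i=0}^{m-1} y^i - 1\big]$, to obtain the clean relation $\sum_{i=0}^{m-1} y^i = \frac{1}{1-\mu}$ for the nontrivial root. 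Writing $K := \frac{\mu}{(1-\mu)(m-1)}$ and noting $\gamma = m\log y$, the claimed two-sided bound is exactly equivalent to $K \le y^{m-1} \le K^2$, so the whole lemma reduces to sandwiching the single power $y^{m-1}$. (That the relevant root satisfies $y>1$ follows since $\frac{1}{1-\mu} > m = \sum_{i=0}^{m-1}1$ whenever $\mu > \theta$, and the sum is increasing in $y$.)

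For the lower bound I would bound the geometric sum by its largest term: since $y > 1$, every term $y^i$ with $1 \le i \le m-1$ is at most $y^{m-1}$, hence $\frac{1}{1-\mu} = 1 + \sum_{i=1}^{m-1} y^i \le 1 + (m-1)y^{m-1}$. Rearranging gives $y^{m-1} \ge \frac{\mu}{(m-1)(1-\mu)} = K$, which after taking logarithms and using $(m-1)\log y = \frac{m-1}{m}\gamma$ is precisely $\gamma \ge \frac{m}{m-1}\log K$.

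The upper bound is the more delicate direction, and is where I expect the main obstacle. The naive estimate $\frac{1}{1-\mu} = \sum_i y^i \ge y^{m-1}$ only gives $y^{m-1} \le \frac{1}{1-\mu}$, which is too weak near the threshold $\mu = \theta$ (there $y \to 1$ and this bound is loose by a factor of order $m$). The key idea is to lower-bound the sum by AM-GM instead: $\sum_{i=0}^{m-1} y^i \ge m\big(\prod_{i=0}^{m-1} y^i\big)^{1/m} = m\,y^{(m-1)/2}$. Combined with $\sum_i y^i = \frac{1}{1-\mu}$ this yields $y^{m-1} \le \frac{1}{m^2(1-\mu)^2}$. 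Finally, the hypothesis $\mu > \theta = 1 - 1/m$ is equivalent to $m\mu \ge m-1$, i.e. $\frac{1}{m} \le \frac{\mu}{m-1}$, so $\frac{1}{m^2(1-\mu)^2} \le \frac{\mu^2}{(m-1)^2(1-\mu)^2} = K^2$. Hence $y^{m-1} \le K^2$, which is $\gamma \le \frac{2m}{m-1}\log K$.

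The single step requiring genuine care is the choice of estimate in the upper direction: both target inequalities are tight at $\mu = \theta$ (where $\gamma \to 0$ and $K \to 1$), so any bound that is loose by a constant or $m$-dependent factor fails near the threshold. Replacing the crude top-term estimate by the AM-GM lower bound on the geometric sum is exactly what supplies the missing factor and makes the constant $2$ in the exponent of $K$ work; everything else is routine rearrangement.
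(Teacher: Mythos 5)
Your proof is correct and takes essentially the same route as the paper: the substitution $y = e^{\gamma/m}$, factoring out the trivial root $y=1$ to reduce the Lundberg equation to a geometric-sum identity, the largest-term estimate for the lower bound, and AM-GM on the geometric sum for the upper bound. The only immaterial difference is in the AM-GM step: the paper applies it to the $m-1$ terms $y,\dots,y^{m-1}$ to get $y^{m/2} \leq K$ directly and then relaxes the exponent using $K>1$, whereas you apply it to all $m$ terms and instead invoke $\mu > \theta$ (via $1/m \leq \mu/(m-1)$) to absorb the factor $1/m$ into $K$.
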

\proof{Proof.}
Note that $\gamma$ is the non-zero solution of equation $\E[\exp(\gamma Z)] = 1$. Therefore, $\gamma$ satisfies
\begin{equation*}
\mu \exp(\gamma (\theta-1)) + (1-\mu) \exp(\gamma \theta) = 1 \rightarrow \mu \exp \bp{-\frac\gamma m} + (1-\mu) \exp \bp{\frac{m-1}{m} \gamma} =1 \,.
\end{equation*}
Let $x = \exp(\gamma/m)$. Then, $x$ solves $\mu + (1-\mu) x^m = x$, which implies that
\begin{equation*}
\mu - (1-\mu) \bp{x+x^2+x^3+\ldots+x^{m-1}} = 0 \rightarrow x+x^2+\ldots+x^{m-1} = \frac{\mu}{1-\mu} \,.
\end{equation*}
Note that as $\mu > \theta = 1-1/m$, it implies that $x>1$ (this could also be derived from $\gamma>0$). As a result
\begin{equation*}
x \geq \bp{\frac{\mu}{(1-\mu)(m-1)}}^{1/(m-1)}\,.
\end{equation*}	
On the other hand, the AM-GM inequality implies that $\mu/(1-\mu) \geq (m-1) x^{m/2}$. Hence,
\begin{equation*}
x \leq \bp{\frac{\mu}{(1-\mu)(m-1)}}^{2/(m-1)}\,.
\end{equation*}
This concludes the proof. \Halmos
\endproof

%

\subsubsection{Proof of Theorem \ref{thm:greedy-opt-bern}.}
	Note that using Lemma \ref{lem:bern-crossing} for any $\delta < 1/6$ and $\mu \geq 1-\delta$ we have $q_{1-2\delta}(\mu) \geq \exp(-0.5)/3 = 0.2$. Furthermore, Assumption \ref{ass:prior} implies that $\P \bp{\mu \geq 1 - \mu} \geq c_0 \delta$. Finally, Lemma \ref{lem:1gaps} shows that
	\begin{equation*}
	\E_\Gamma \bb{\Ind \bp{\mu < 1-3\delta} \min \bp{1+ \frac{3}{C_1(1-2\delta-\mu)}, T(1-\mu)}} \leq \frac{3C_0}{C_1} (5 + \log(1/\delta))\,.
	\end{equation*}
	Replacing in Lemma \ref{lem:greedy-gen}, using $1-x \leq \exp(-x)$, and choosing $\delta = 5 \log T/(kc_0)$ implies
	\begin{equation*}
	\BR_{T,k}(\text{Greedy}) \leq 1 + \frac{15T \log T}{kc_0} + \frac{3C_0 k}{C_1} \bp{5+ \log(kc_0/5) - \log \log T}\,,
	\end{equation*}
	as desired. The proof of second part follows from the second part of Lemma \ref{lem:greedy-gen}. \Halmos

\subsection{Proofs for Subgaussian Rewards}\label{app:greedy-subg}
	
\proof{Proof of Lemma \ref{lem:subg-crossing}}
The proof of this Lemma uses Lundberg's inequality. Let $\xi = \mu-\theta$ and define $Z_i = X_i - \mu + \xi/2$. Then, we are interested in bounding the probability
\begin{equation*}
\P \bb{\exists n \geq 1: \sum_{i=1}^n Z_i < - n \xi/2} \leq \P \bb{\exists n \geq 1: \sum_{i=1}^n Z_i < -\xi/2} \,.
\end{equation*}
We claim that the latter is upper bounded by $\exp(-\xi^2/2)$. To prove this, first we claim that the conditions of Lundberg's inequality stated in Proposition \ref{prop:lundberg} are satisfied for $S_n = \sum_{i=1}^n (-Z_i)$. Denote $M_Z(s) = \E[ \exp(sZ)]$ and note that as $Z_i - \xi/2$ is $1$-subgaussian, $M_Z(s)$ exists for all values of $s$. Further, without any loss in generality, we can assume $\P[Z_i < 0] > 0$, since otherwise $\P \bb{\exists n \geq 1: \sum_{i=1}^n Z_i < - n \xi/2} = 0 \leq \exp(-\xi^2/2)$ is satisfied. Hence, as $\E[Z_i] = \xi/2$, we have that $\lim_{s \rightarrow -\infty} M_z(s)$ goes to $\infty$. Furthermore, $M_Z(0) = 1$ and $M'_Z(0) = \E[Z_1] = \xi/2 > 0$. Therefore, we conclude that there exists a $\gamma > 0$ such that $M_Z(-\gamma) = 0$. Note that $Z_i$ has finite moments and therefore the Strong Law of Large Numbers (SLLN) implies that $S_n$ diverges almost surely to $-\infty$. Therefore, looking at $\tau(u) = \inf \bc{n \geq 1: S_n > u}$, $S_n$ diverges almost surely to $-\infty$ on the set $\bc{\tau(u) = \infty}$. Hence, the conditions of Lundberg's inequality are satisfied and therefore

\begin{equation*}
 \P \bb{\exists n \geq 1: \sum_{i=1}^n Z_i < -\xi/2} \leq \exp(-\gamma \xi/2)\,.
\end{equation*}
Now we claim that $\gamma \geq \xi$ which concludes the proof. Note that $\exp(-\gamma Z_1) = 1$ and $Z_1 - \xi/2$ is $1$-subgaussian. Hence,
\begin{equation*}
\E \bb{\exp \bp{-\gamma (Z_1 - \xi/2)}} = \exp(\gamma \xi /2) \leq \exp(\gamma^2/2)\,,
\end{equation*}
hence $\gamma \geq \xi$. For finishing the proof, note that $q_{\theta}(\mu) = 1 - \P \bb{\exists n \geq 1: \sum_{i=1}^n Z_i < -\xi/2} $ and that for any $0 \leq z \leq 1$ we have $\exp(-z) \geq 1 - e^{-1}z$. \Halmos
\endproof

\proof{Proof of Theorem \ref{thm:greedy-reg-1sg}}
	Note that the result of Lemma \ref{lem:subg-crossing} holds for all $1$-subgaussian distributions. Therefore, we can apply Lemma \ref{lem:greedy-gen} for the choice $q_{1-2\delta}(\mu) \geq 1 - \exp(-\delta^2/2)$. Note that for $0 \leq x \leq 1$ we have $\exp(-x) \leq 1 - e^{-1}x$ and as a result $q_{1-2\delta}(\mu) \geq (e^{-1}/2)\delta^2$. Therefore, combining this with Lemma \ref{lem:1gaps} and inequality $\exp(-x) \geq 1-x$ implies that
	\begin{equation*}
	\BR_{T,k}(\text{Greedy}) \leq T \exp\bp{-\frac{k c_0 \delta^3 e^{-1}}{2}} + 3T \delta + \frac{3k}{C_1} C_0 (5 + \log(1/\delta))\,.
	\end{equation*}
	Picking $\delta =  (2e \log T/kc_0)^{1/3}$ finishes the proof. The second part follows immediately. \Halmos
\endproof

\subsection{Proofs for Uniformly Upward-Looking Rewards}\label{app:greedy-unif-upw}

Similar to previous cases, we first prove the result stated in Lemma \ref{lem:max-crossing}, providing a lower bound on $q_{\theta}(\mu)$ for uniformly upward-looking rewards.

\subsubsection{Proof of Lemma \ref{lem:max-crossing}}

Recall that $M_n = \sum_{i=1}^n X_i/n$ and we are interested in bounding the probability that $M_n$ falls below $\mu - \delta$. In Definition \ref{def:upward-looking}, pick $\theta = \mu -\delta$ and define $Z_i = X_i - \theta$ and note that
\begin{align*}
    1 - q_{\theta}(\mu) = \P[\exists n: M_n < \mu - \delta]
    &=\P \bb{\exists n \geq 1: \frac{\sum_{i=1}^n X_i}{n} < \mu - \delta} \\
    &=\P \bb{\exists n \geq 1: \sum_{i=1}^n Z_i < 0} \,.
\end{align*}
 Note that, without any loss in generality, we can assume that $\P[Z_i < 0] > 0$, since otherwise the above probability is $0$ which is of course less than $\exp(-p_0 \delta/4)$. Note that $Q$ is $p_0$ upward-looking, it implies that one of the following holds:
 \begin{itemize}
 \item $\P[R_{\tau(\theta)}(\theta) \geq 1 ] \geq p_0$ for $R_n(\theta) = S_n - n \theta = \sum_{i=1}^n Z_i$.
 \item $\E[(X_1 - \theta)) \Ind(X_1 \geq \theta)] \geq p_0$.
 \end{itemize}

We consider each of these cases separately.

\textbf{Case 1.} In this case, the random walk $R_n(\theta) = \sum_{I=1}^n X_i - n\theta = \sum_{i=1}^n Z_i$ hits $1$ before going below $0$ with probability at least $p_0$. 
Hence, denoting $\P \bb{R_{\tau(\theta)}(\theta) \geq 1} = p$, letting $N = \tau(\theta)$, by conditioning on value $R_N(\theta)$ we can write
\begin{align*}
\P & \bb{\exists n : R_n(\theta) < 0 } \\
&\leq \P \bb{R_N(\theta) < 0 \text{~~or~~} N = \infty} + \P \bb{R_N(\theta) \geq 1} \times \P \bb{\exists n \geq 1: R_n < 0 \mid R_N(\theta) \geq 1} \\
&\leq (1-p) + p \P \bb{\exists n \geq 1: R_n(\theta) < 0 \mid R_1(\theta) \geq 0, R_2(\theta) \geq 0, \cdots, R_{N-1}(\theta) \geq 0, R_N(\theta)\geq 1} \\
&= (1-p) + p \P \bb{\exists n \geq N+1: R_N(\theta) + \sum_{i=N+1}^n Z_i < 0 \mid R_1(\theta) \geq 0, R_2(\theta) \geq 0, \cdots, R_{N-1}(\theta) \geq 0, R_N(\theta)\geq 1} \\
&\leq (1-p) + p \P \bb{\exists n \geq N+1: 1 + \sum_{i=N+1}^n  Z_i < 0 \mid R_1(\theta) \geq 0, R_2(\theta) \geq 0, \cdots, R_{N-1}(\theta) \geq 0, R_N(\theta)\geq 1} \\
&\leq  (1-p) + p \P \bb{\exists n \geq N+1: \sum_{i=N+1}^n (-Z_i) > 1}
\end{align*}
where in the above we used the fact that $R_n(\theta)$ only depends on $\bc{Z_i}_{i=1}^n$ and that the distribution of $\bc{Z_i}_{i=N+1}^{\infty}$ is independent of $\bc{Z_i}_{i=1}^N$. For the last argument we want to use Lundberg's inequality for the summation $\sum_{i=N+1}^n (-Z_i)$. First note that, $\E[-Z_i]  = \theta - \mu = -\delta < 0$. Therefore, by SLLN (note that $-Z_i$ has finite moments, as its centered version is sub-gaussian) $\lim \sum_{i=N+1}^{\infty} (-Z_i) \rightarrow -\infty$ almost surely. Hence, for using Lundberg's inequality stated in Proposition \ref{prop:lundberg}, we only need to show the existence of $\gamma > 0$ such that $\E[\exp(\gamma (-Z_1))] = 1$. For proving this, let $M(s) = \E[\exp(sZ_1)]$ be the moment generating function of $Z_1$ which exists for all values of $s$ (due to sub-gaussianity of $Z_1 - \E[Z_1]$). Note that $M(0) = 1$ and $M'(0) = \E[Z_1] = \delta > 0$. Note that as $P[Z_1<0]>0$ and as $M(s)$ is defined for all values of $s$, there exists $\gamma>0$ such that $M(-\gamma) = 1$. In other words, given the condition $P[Z_1<0]>0$, $\lim_{s \rightarrow -\infty} M(s)$ goes to $+\infty$. Hence, due to the continuity of $M(s)$, there exists $\gamma>0$ such that $\E[\exp(-\gamma Z_1)] = 1$. Hence, we can apply Lundberg's inequality which states that:
\begin{align*}
\P \bb{\exists n \geq N: \sum_{i=N+1}^n (-Z_i) > 1}
= \P \bb{\exists n \geq N: \sum_{i=1}^n (-Z_i) > 1}
= \psi(1) \leq \exp(-\gamma) \,.
\end{align*}
 Now we claim that $\gamma \geq 2 \E[Z_1] = 2\delta$. This is true according to $Z_1 - \E[Z_1]$ being $1$-subgaussian and that
 \begin{equation*}
 \E[\exp \bp{-\gamma (Z_1-\E[Z_1]}] =\exp(\gamma \delta) \leq \exp(\gamma^2/2) \,,
 \end{equation*}
 which proves our claim. Combining all these results and using $p \geq p_0$ we have
 \begin{align*}
 	1 - q_{\theta}(\mu) = \P[\exists n: M_n < \mu - \delta]
	&\leq (1-p_0) + p_0 \exp(-2 \delta) \\
	&\leq (1-p_0) + p_0 (1-\delta/4) \\
	&\leq \exp(-p_0 \delta/4) \,,
 \end{align*}
 where we used the inequality $1-z \leq \exp(-z) \leq 1-z/8$ for $z = 2\delta$ which is true for any $z \leq 2$ (or equivalently, $\delta \leq 1$).Thus, using inequality $\exp(-x) \leq 1-e^{-1}x$ for $x \in [0,1]$ we have
 \begin{equation*}
 	q_\theta(\mu) \geq 1-\exp(-p_0 \delta/4) \geq e^{-1}p_0 \delta/4\,,
 \end{equation*}
 as desired.

 \textbf{Case 2.} Proof of this part is similar and relies again on Lundberg's inequality. Here, we are going to condition on the value of $Z_1$ which is independent of $Z_i$ for $i \geq 2$. Our goal is to relate the desired probability to $S'_n = -\sum_{i=2}^n Z_i$. Hence,
\begin{align*}
    1-q_\theta(\mu) = \P \bb{\exists n: \sum_{i=1}^n Z_i < 0}
    &=\P[Z_1 < 0] + \int_{Z_1 \geq 0} \P \bb{\exists n \geq 2: Z_1 + \sum_{i=2}^n Z_i > 0 \mid Z_1} dP_{Z_1} \\
    &=\P[Z_1 < 0] +\int_{Z_1 \geq 0} \P \bb{\exists n \geq 2: Z_1 - S'_n < 0} dP_{Z_1} \\
    &=\P[Z_1 < 0] +\int_{Z_1 \geq 0} \P \bb{\exists n \geq 2: S'_n > Z_1} dP_{Z_1} \\
    &=\P[Z_1 < 0] + \int \psi(Z_1) \Ind(Z_1 \geq 0) dP_{Z_1} \,,
\end{align*}
where $\psi(\cdot)$ is the ultimate ruin probability for the random walk $S_n = -\sum_{i=2}^n Z_i$. Now we want to apply Proposition \ref{prop:lundberg} to provide an upper bound on $\psi(z)$. Similar to the previous case, we can show that the conditions of Lundberg's inequality holds and hence Proposition \ref{prop:lundberg} implies that $\psi(z) \leq \exp(-\gamma z)$ where $\gamma$ is the Lundberg coefficient of distribution $-Z_1$ that satisfies $\E[\exp(-\gamma Z_1)] = 1$.
Similar to the previous case we can show that $\gamma \geq 2 \E[Z_1] \geq 2 \delta$ and therefore, $\psi(z) \leq \exp(-\gamma x) \leq \exp(-2 \delta z)$ holds for all $z \geq 0$. Hence,
\begin{equation*}
    1-q_\theta(\mu) = \P[\exists n: M_n < \theta] \leq
    \P[Z_1 < 0] + \E[\exp(-2 \delta Z_1) \Ind \bp{Z_1 \geq 0}] \,.
\end{equation*}
Now we can use the inequality $\exp(-t) \leq 1 - t/e$ which is true for $0 \leq t \leq 1$. Replacing $t = 2 \delta z$, implies that for $z \leq \frac{1}{2\delta}$ we have
\begin{equation*}
    \exp(-2 \delta z) \leq 1 - \frac{2 \delta z}{e} \,.
\end{equation*}
Therefore,
\begin{align*}
    \int \exp(-2 \delta Z_1) \Ind \bp{Z_1 \geq 0} dP_{Z_1}
    &= \int \exp(-2 \delta Z_1) \Ind \bp{0 \leq Z_1 \leq \frac{1}{2\delta}} dP_{Z_1} + \int \exp(-2 \delta Z_1) \Ind \bp{Z_1 > \frac{1}{2\delta}} dP_{Z_1} \\
    \quad&\leq P \bp{Z_1 \geq 0} - \frac{2 \delta}{e} \int \bp{0 \leq Z_1 \leq \frac{1}{2\delta}} Z dP_{Z_1}\,.
\end{align*}
Hence,
\begin{align*}
    1-q_\theta(\mu) = \P[\exists n: M_n < \theta] \leq 1 - \frac{2\delta}{e} \E\bb{Z_1 \Ind \bp{0 \leq Z_1 \leq\frac{1}{2\delta}}}\,.
\end{align*}
Our goal is to show that if $\delta \leq 0.05$, then
\begin{equation}\label{eqn:half-exp}
    \E\bb{Z_1 \Ind \bp{0 \leq Z_1 \leq\frac{1}{2\delta}}} \geq \frac{\E \bb{Z_1 \Ind \bp{Z_1 \geq 0}}}{2} \,.
\end{equation}
For proving this note that for $\delta \leq 1/2$ we have $\frac{1}{2\delta} \geq \delta + \frac{1}{4\delta}$. Hence, according to $1$-subgaussianity of $Z_1 - \delta$ we have
\begin{align*}
    \E\bb{Z_1 \Ind \bp{Z_1 > \frac{1}{2\delta}}}
    &=\frac{1}{2\delta} \P \bb{Z_1 > \frac{1}{2\delta}} +\int_{\frac{1}{2\delta}}^\infty \P[Z_1 > t] dt \\
    &\leq \frac{1}{2 \delta} \exp\bp{-\frac{1}{2}\frac{1}{16 \delta^2}} + \int_{\delta+\frac{1}{4\delta}}^\infty \P \bb{Z_1 > t} dt \\
    &\leq \frac{1}{2 \delta} \exp \bp{-\frac{1}{32\delta^2}} + \int_{\frac{1}{4\delta}} \exp(-t^2/2) dt \\
    &\leq \bp{\frac{1}{2 \delta}+\sqrt{2 \pi}} \exp \bp{-\frac{1}{32\delta^2}}\,.
\end{align*}
We claim that the above probability is less than $\delta/2$. In fact, a simple numerical calculation shows that for all $\delta \leq 0.05$ we have
\begin{equation*}
    \bp{\frac{1}{2 \delta}+\sqrt{2 \pi}} \exp \bp{-\frac{1}{32\delta^2}} \leq \frac{\delta}{2}\,.
\end{equation*}
Hence, this implies that
\begin{align*}
    \E\bb{Z_1 \Ind \bp{0 \leq Z_1 \leq\frac{1}{2\delta}}}
    &= \E \bb{Z_1 \Ind \bp{Z_1 \geq 0}} - \E \bb{Z_1 \Ind \bp{Z_1 > \frac{1}{2\delta}}} \\
    &\geq \E \bb{Z_1\Ind \bp{Z_1 \geq 0}} - \frac{\delta}{2} \,.
\end{align*}
The final inequality we need to show is that $\E \bb{Z_1 \Ind \bp{Z_1 \geq 0}} \geq \delta$ which is obvious as $\E[Z_1] = \mu - \theta = \delta$, meaning
\begin{align*}
    \delta = \E \bb{Z_1 \Ind \bp{Z_1 \geq 0}} + \E \bb{Z_1 \Ind \bp{Z_1 < 0}} \leq \E \bb{Z_1 \Ind \bp{Z_1 \geq 0}} \,,
\end{align*}
as $\E \bb{Z_1 \Ind \bp{Z_1 < 0}} \leq 0$. Putting all these results together we have proved our claim in Eq. \eqref{eqn:half-exp}. Hence, using inequality $1-t \leq \exp(-t)$ and that $Z_1 = X_1 - \theta$ we have
\begin{align*}
1-q_\theta(\mu) = \P [\exists n: M_n < \theta]
&\leq 1 - \frac{2\delta}{e} \E\bb{Z_1 \Ind \bp{0 \leq Z_1 \leq\frac{1}{2\delta}}}
\leq 1 - \frac{\delta}{e} \E \bb{(X_1- \theta) \Ind \bp{X_1 \geq \theta}} \\
&\leq \exp \bp{-\frac{p_0}{e} \delta}\,.
\end{align*}
As $1/e \geq 1/4$, and that $\exp(-x) \leq 1-e^{-1} x$ for all $x \in [0,1]$, the conclusion follows.
\Halmos

\subsubsection{Proof of Theorem \ref{thm:greedy-reg}}
	Note that the result of Lemma \ref{lem:max-crossing} holds for all upward-looking distributions. Note that as $\mathcal{F}$ is uniformly $(p_0, \delta_0)$ upward-looking, this result holds if $\delta \leq \delta_1 = \min \bp{\delta_0, 0.05}$. Further, if $\delta_0 > 0.05$, the distribution $\mathcal{F}$ is upward-looking for $(p_0, 0.05)$ as well. Suppose that $\mu \geq 1 -\delta$. Then, we can apply Lemma \ref{lem:greedy-gen} for the choice $q_{1-2\delta}(\mu) \geq 1 - \exp(-p_0 \delta/4) \geq (p_0 e^{-1}/4) \delta$ according to the inequality $\exp(-x) \leq 1 - e^{-1}x$, for any $\delta < \delta_1$. Therefore, combining this with Lemma \ref{lem:1gaps} and inequality $\exp(-x) \geq 1-x$ implies that
	\begin{equation*}
	\BR_{T,k}(\text{Greedy}) \leq T \exp\bp{-\frac{k p_0 c_0 \delta^2 e^{-1}}{4}} + 3T \delta + \frac{3k}{C_1} C_0 (5 + \log(1/\delta))\,.
	\end{equation*}
	Picking $\delta =  \bp{(4e \log T)/(k p_0 c_0)}^{1/2}$ finishes the proof. Note that the condition on $k$ also implies that $\delta < \delta_1$, as desired. The second part follows from the second part of Lemma \ref{lem:greedy-gen} together with $m = \Theta \bp{T^{2/3}}.$

\section{Additional Simulations}\label{app:add-sim}

In this section, we provide a wide range of simulations to validate our claims in the paper. We first start by the stochastic case (which is the main focus of the paper) and then provide some additional simulations with the contextual case as well.

\subsection{Stochastic reward}\label{app:add-sim-stoch}

\paragraph{Setting.} We repeat the analysis described in the introduction and Figure \ref{fig:alph_75} for the wide range of beta priors and for both Gaussian and Bernoulli rewards. We fix $T=20000$ in all our simulations and use two different values for $k$. For each setting, we generate $100$ instances, where in each instance the means $\mu_1, \mu_2, \ldots, \mu_k \sim \Gamma$. Denote our general prior to be $\mathcal{\beta}(a, b)$ (note that uniform corresponds to $a=b=1$). The subsampling rates chosen for various algorithms are based on our theory and Table \ref{tab:gen-beta}. For SS-TS, we use a similar rate as of SS-UCB. Finally, TS and SS-TS use the correct prior information (i.e., $\mathcal{\beta}(a,b)$) for the Bernoulli setting. For the Gaussian case, TS and SS-TS suppose that the prior distribution of all arms are $\mathcal{N}(1/2,1/16)$. In other words, for Gaussian rewards, TS uses a mismatched (but narrow) prior for all arms. The following algorithms are included in our simulations:
\begin{itemize}
	\item UCB: Asymptotically optimal UCB (Algorithm \ref{alg:ucb-asymp}).
	\item SS-UCB: Subsampled UCB algorithm in Algorithm \ref{alg:subs-ucb}, with $m = T^{b/2}$ if $b < 1$ and $m = T^{b/(b+1)}$ if $b \geq 1$.
	\item Greedy: Greedy (Algorithm \ref{alg:greedy}).
	\item SS-Greedy: Subsampled Greedy (Algorithm \ref{alg:subs-greedy}) with $m = T^{b/2}$ if $b < 1$ and $m = T^{b/(b+1)}$ if the reward is Bernoulli (suggested by Theorem \ref{thm:greedy-opt-bern}). For Gaussian rewards, $m=T^{(b+1)/3}$ for $b < 1$ and $m=T^{(b+1)/(b+2)}$ for $b \geq 1$  and (suggested by Theorem \ref{thm:greedy-reg-1sg}).
	\item UCB-F algorithm \cite{wang2009algorithms} with the choice of confidence set $\mathcal{E}_t = 2 \log (10 \log t)$. Note that UCB-F also subsamples $m = T^{b/2}$ if $b < 1$ and $m=T^{b/(b+1)}$ if $b \geq 1$ arms.
	\item TS: Thompson Sampling algorithm \cite{thompson1933likelihood, russo2014learning, agrawal2012analysis}.
	\item SS-TS: Subsampled TS with $m = T^{b/2}$ if $b < 1$ and $m=T^{b/(b+1)}$ if $b \geq 1$.
\end{itemize}

\paragraph{Priors.} For each of Bernoulli and Gaussian case, we consider $\mathcal{\beta}(a,b)$ priors with $a \in \bc{0.5, 1, 2}$ and $b \in \bc{0.8, 1, 1.5}$, leading to $9$ different choices of priors.

\paragraph{Results.} The following two tables summarize the results of these simulations. In fact, Table \ref{tab:add-sim-gauss} contains the result for Gaussian rewards and Table \ref{tab:add-sim-bern} contains the result for Bernoulli rewards. For each simulation, we normalize the mean per-instance regret (an estimate of Bayesian regret) of all algorithms by that of SS-Greedy. In other words, each entry gives an estimate of $\BR_{T,k}(\text{Algorithm})/\BR_{T,k}(\text{SS-Greedy})$.

We also generate $6$ figures (Figures 3-8), similar to Figure \ref{fig:alph_75}, where we plot the distribution of per-instance regret of algorithms discussed above. As can be observed from Table \ref{tab:gen-beta}, only the choice of $b$ changes the rates of various algorithms. For this reason, we only generate the plots for $a=1$ when $b$ is in $\bc{0.8, 1, 1.5}$ for Bernoulli and Gaussian rewards ($6$ in total).

As can be observed from Tables \ref{tab:add-sim-gauss} and \ref{tab:add-sim-bern}, SS-Greedy is the best performer in almost all the cases. Two other competitive algorithms are Greedy and SS-TS. It is clear from this table that subsampling leads to a great improvement (compare UCB with SS-UCB, Greedy with SS-Greedy, and TS with SS-TS) uniformly across all algorithms. For SS-TS, the performance in the Bernoulli case is much better. This is due to using correct information about prior and reward distributions. These two tables show that the superior performance of SS-Greedy is robust to the choice of prior and reward distributions.

\begin{table*}[h]
    \centering
 	\caption{Estimated ratio $\BR_{T,k}(\text{Algorithm})/\BR_{T,k}(\text{SS-Greedy})$ for Gaussian rewards.}
	\label{tab:add-sim-gauss}
    \rowcolors{1}{}{gray!10}
    \begin{tabular}{*8c}
        \toprule
		\textbf{Setting} & \textbf{UCB} & \textbf{SS-UCB} & \textbf{Greedy} & \textbf{SS-Greedy} & \textbf{UCB-F} & \textbf{TS} & \textbf{SS-TS} \\
        \midrule
			$a=0.5, b=0.8, k = 400$ & $7.44$ & $2.74$  & ${\bf 1}$ & ${\bf 1}$ & $4.17$ & $2.86$ & $1.12$ \\
			$a=0.5, b=0.8, k = 1000$ & $9.65$ & $2.64$ & $1.47$  & ${\bf 1}$ & $4.11$ & $4.34$ & $1.16$ \\
			$a=1, b=0.8, k = 400$ & $7.87$ & $3.44$ & $1.13$ & ${\bf 1}$ & $5.01$ & $3.73$ & $1.57$ \\
			$a=1, b=0.8, k = 1000$ & $8.81$  & $3.21$ & $1.62$  & ${\bf 1}$ & $4.65$ & $4.53$ & $1.29$ \\
			$a=2, b=0.8, k = 400$ & $6.21$ & $3.38$ & $1.06$ & ${\bf 1}$ & $4.56$ & $3.59$ & $1.75$ \\
			$a=2, b=0.8, k = 1000$  & $7.79$  & $3.73$  & $1.75$  & ${\bf 1}$ & $4.76$ & $4.82$ & $1.85$ \\
        \midrule
			$a=0.5, b=1, k = 1000$ & $7.39$  & $3.48$  & $1.15$  & ${\bf 1}$ & $9.26$ & $3.37$ & $1.3$ \\
			$a=0.5, b=1, k = 3000$ & $8.31$ & $3.34$  & $2.47$  & ${\bf 1}$ & $9.34$ & $5.01$ & $1.24$ \\
			$a=1, b=1, k = 1000$ & $6.74$ & $3.79$ & $1.2$  & ${\bf 1}$ & $5.56$ & $3.61$ & $1.61$ \\
			$a=1, b=1, k = 3000$ & $7.51$ & $3.63$ & $2.68$  & ${\bf 1}$ & $5.32$ & $4.81$ & $1.55$ \\
			$a=2, b=1, k = 1000$ & $5.35$ & $3.48$ & $1.2$  & ${\bf 1}$ & $6.41$ & $3.35$ & $1.83$ \\
			$a=2, b=1, k = 3000$ & $5.99$ & $3.44$ & $2.68$  & ${\bf 1}$ & $6.71$ & $4.05$ & $1.81$ \\
        \midrule
			$a=0.5, b=1.5, k = 1000$ & $5.54$ & $4.31$ & $1.01$  & ${\bf 1}$ & $6.52$ & $2.53$ & $1.47$ \\
			$a=0.5, b=1.5, k = 3000$ & $5.89$ & $4.05$ & $1.71$  & ${\bf 1}$ & $6.36$ & $3.82$ & $1.49$ \\
			$a=1, b=1.5, k = 1000$ & $5.4$ & $4.43$ & $1.03$  & ${\bf 1}$ & $6.39$ & $2.96$ & $1.94$ \\
			$a=1, b=1.5, k = 3000$ & $5.26$ & $3.83$ & $1.73$  & ${\bf 1}$ & $5.76$ & $3.62$ & $1.73$ \\
			$a=2, b=1.5, k = 1000$ & $4.49$ & $3.82$ & ${\bf 0.95}$  & $1$ & $5.23$ & $2.86$ & $2.13$ \\
			$a=2, b=1.5, k = 3000$ & $4.64$ & $3.57$ & $1.9$  & ${\bf 1}$ & $5.08$ & $3.37$ & $2.03$ \\        
        \bottomrule
    \end{tabular}
\end{table*}

\begin{table*}[h]
    \centering
 	\caption{Estimated ratio $\BR_{T,k}(\text{Algorithm})/\BR_{T,k}(\text{SS-Greedy})$ for Bernoulli rewards.}
	\label{tab:add-sim-bern}
    \rowcolors{1}{}{gray!10}
    \begin{tabular}{*8c}
        \toprule
		\textbf{Setting} & \textbf{UCB} & \textbf{SS-UCB} & \textbf{Greedy} & \textbf{SS-Greedy} & \textbf{UCB-F} & \textbf{TS} & \textbf{SS-TS} \\
        \midrule
			$a=0.5, b=0.8, k = 400$ & $8.13$ & $2.41$ & ${\bf 0.94}$  & $1$ & $3.09$ & $1.12$ & $0.97$ \\
			$a=0.5, b=0.8, k = 1000$ & $14.24$ & $2.67$ & $2.34$  & ${\bf 1}$ & $3.32$ & $2.64$ & $1.14$ \\
			$a=1, b=0.8, k = 400$ & $15.12$ & $5.29$ & $1.87$  & ${\bf 1}$ & $25.87$ & $2.28$ & $1.42$ \\
			$a=1, b=0.8, k = 1000$ & $19.54$ & $4.59$ & $3.69$  & ${\bf 1}$ & $23.04$ & $4.44$ & $1.17$ \\
			$a=2, b=0.8, k = 400$ & $16.2$ & $6.8$ & $2.13$  & ${\bf 1}$ & $25.24$ & $2.76$ & $1.05$ \\
			$a=2, b=0.8, k = 1000$ & $21.12$ & $6.58$ & $3.91$  & ${\bf 1}$ & $30.86$ & $6.28$ & $1.19$ \\
        \midrule
           $a=0.5, b=1, k = 1000$ & $12.19$ & $3.75$ & $1.9$  & ${\bf 1}$ & $20.22$ & $2.2$ & $1.02$ \\
			$a=0.5, b=1, k = 3000$ & $16.47$ & $3.55$ & $5.14$  & $1$ & $21.39$ & $5.35$ & ${\bf 0.94}$ \\
			$a=1, b=1, k = 1000$ & $17.47$ & $6.74$ & $3.22$  & ${\bf 1}$ & $40.16$ & $3.61$ & $1.14$ \\
			$a=1, b=1, k = 3000$ & $26.85$ & $7.56$ & $9.29$  & ${\bf 1}$ & $34.13$ & $10.93$ & $1.53$ \\
			$a=2, b=1, k = 1000$ & $20.52$ & $10.12$ & $4.66$  & ${\bf 1}$ & $29.29$ & $5.33$ & $1.52$ \\
			$a=2, b=1, k = 3000$ & $21.63$ & $8.45$ & $7.25$  & ${\bf 1}$ & $26.84$ & $11.79$ & $1.38$ \\
        \midrule
           $a=0.5, b=1.5, k = 1000$ & $8.34$ & $4.45$ & $1.15$  & ${\bf 1}$ & $13.27$ & $1.51$ & $1.06$ \\
			$a=0.5, b=1.5, k = 3000$ & $11.17$ & $4.06$ & $2.84$  & ${\bf 1}$ & $13.4$ & $3.17$ & $1.19$ \\
			$a=1, b=1.5, k = 1000$ & $10.63$ & $6.62$ & $1.6$  & ${\bf 1}$ & $15.62$ & $2.14$ & $1.35$ \\
			$a=1, b=1.5, k = 3000$ & $12.66$ & $5.93$ & $4.18$  & ${\bf 1}$ & $15.68$ & $4.49$ & $1.31$ \\
			$a=2, b=1.5, k = 1000$ & $10.67$ & $7.8$ & $1.99$  & ${\bf 1}$ & $14.63$ & $2.55$ & $1.47$ \\
			$a=2, b=1.5, k = 3000$ & $11.93$ & $6.86$ & $5.11$  & ${\bf 1}$ & $14.1$ & $5.36$ & $1.38$ \\
        \bottomrule
    \end{tabular}
\end{table*}
	
 \begin{figure}[h]
	\centering
	\includegraphics[width = .85\textwidth]{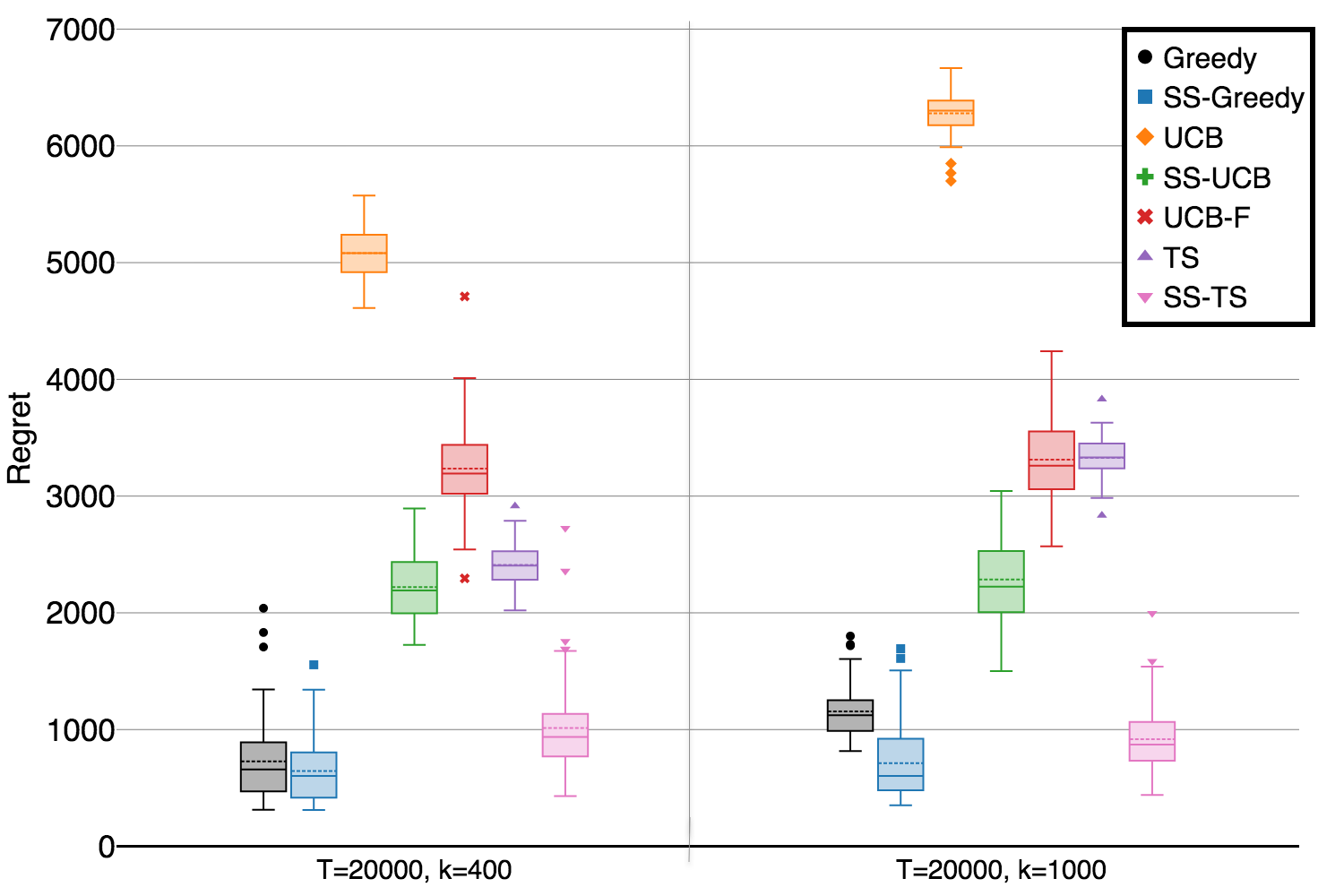}
	\label{fig:norm_b_8}
	\caption{Distribution of the per-instance regret for Gaussian rewards and prior $\Gamma = \mathcal{\beta}(1,0.8).$}
\end{figure}

 \begin{figure*}[h]
	\centering
	\includegraphics[width = .85\textwidth]{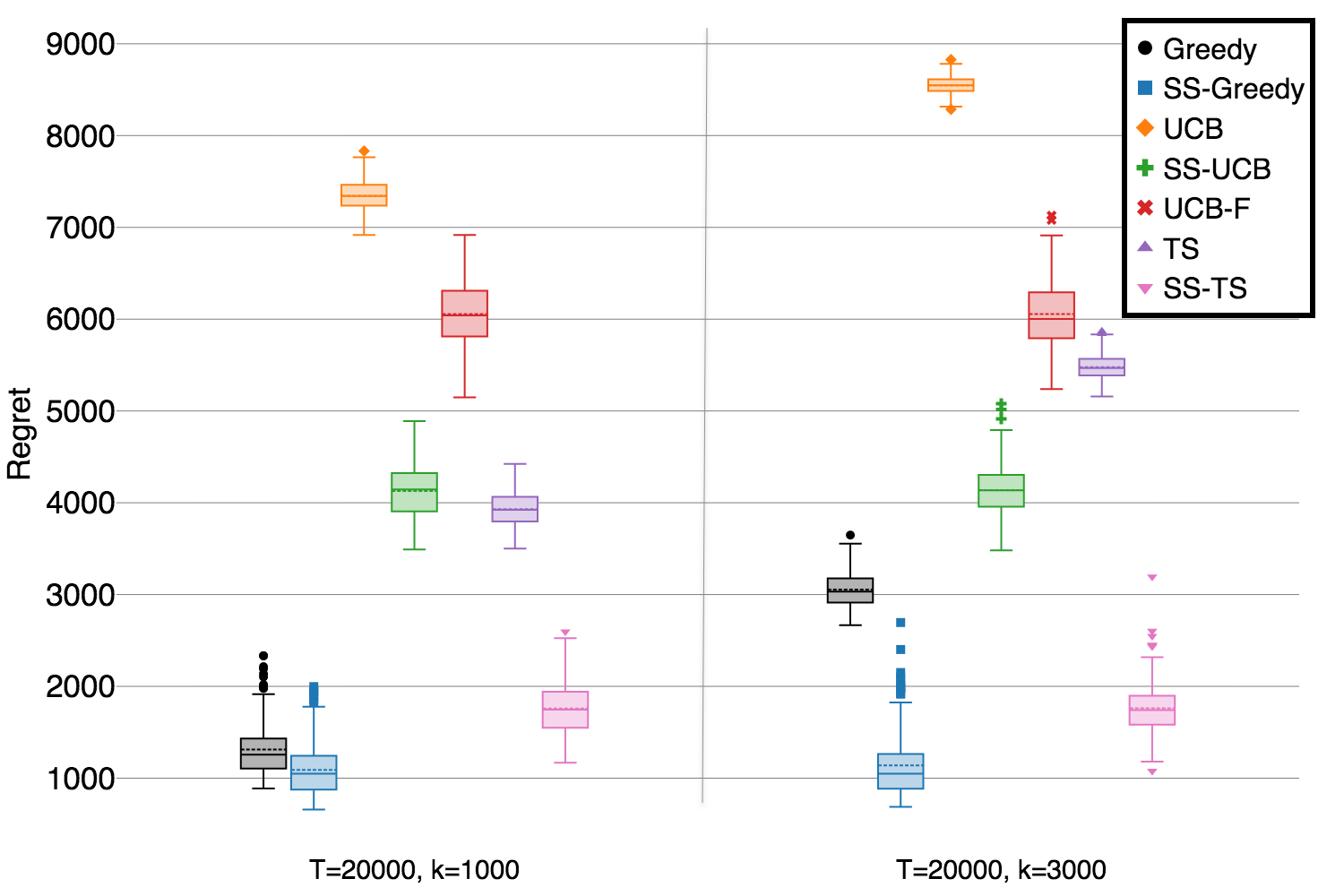}
	\label{fig:norm_b_1}
	\caption{Distribution of the per-instance regret for Gaussian rewards and prior $\Gamma = \mathcal{\beta}(1,1).$}
\end{figure*}

 \begin{figure}[h]
	\centering
	\includegraphics[width = .85\textwidth]{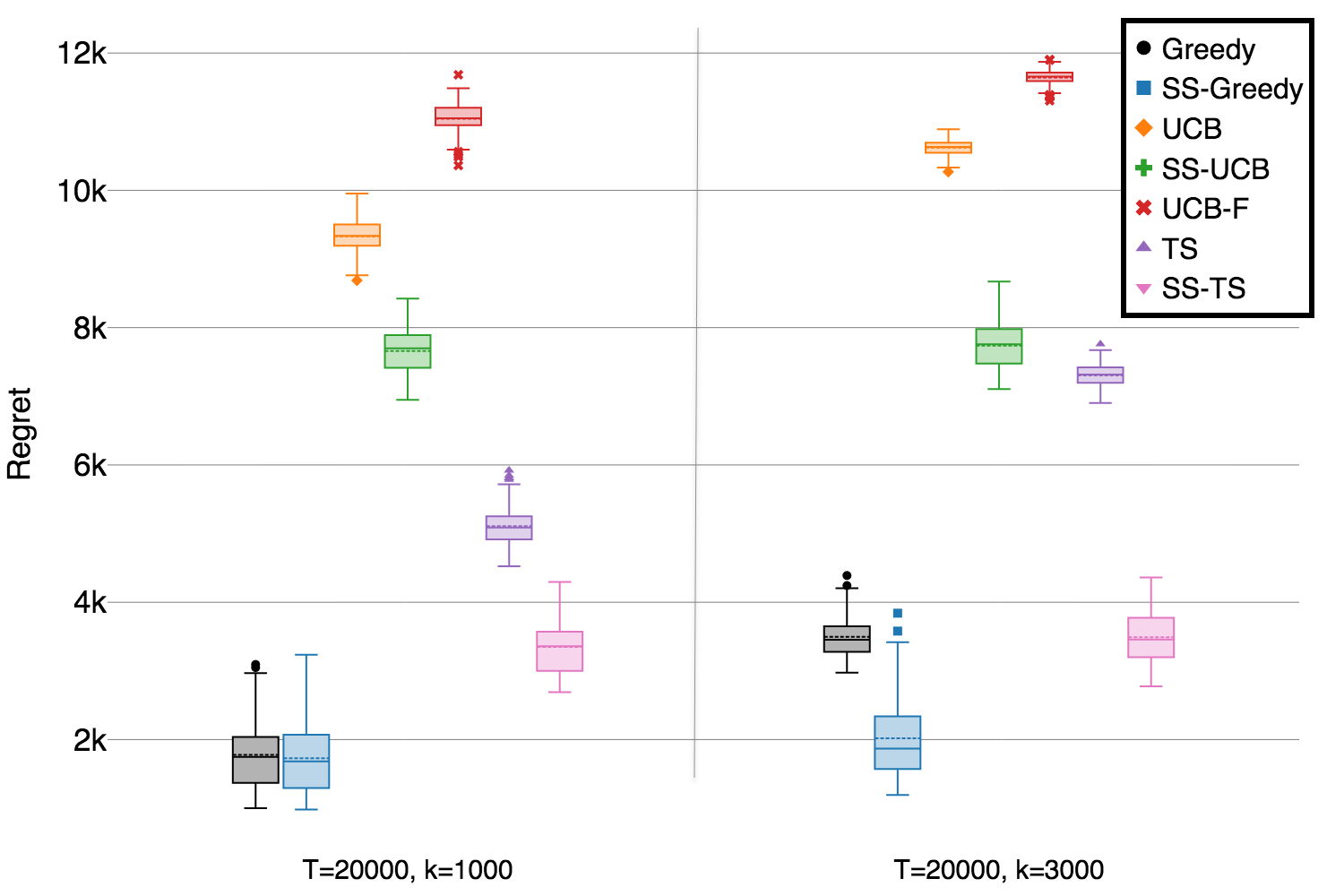}
	\label{fig:norm_b_15}
	\caption{Distribution of the per-instance regret for Gaussian rewards and prior $\Gamma = \mathcal{\beta}(1,1.5).$}
\end{figure}

 \begin{figure}[h]
	\centering
	\includegraphics[width = .85\textwidth]{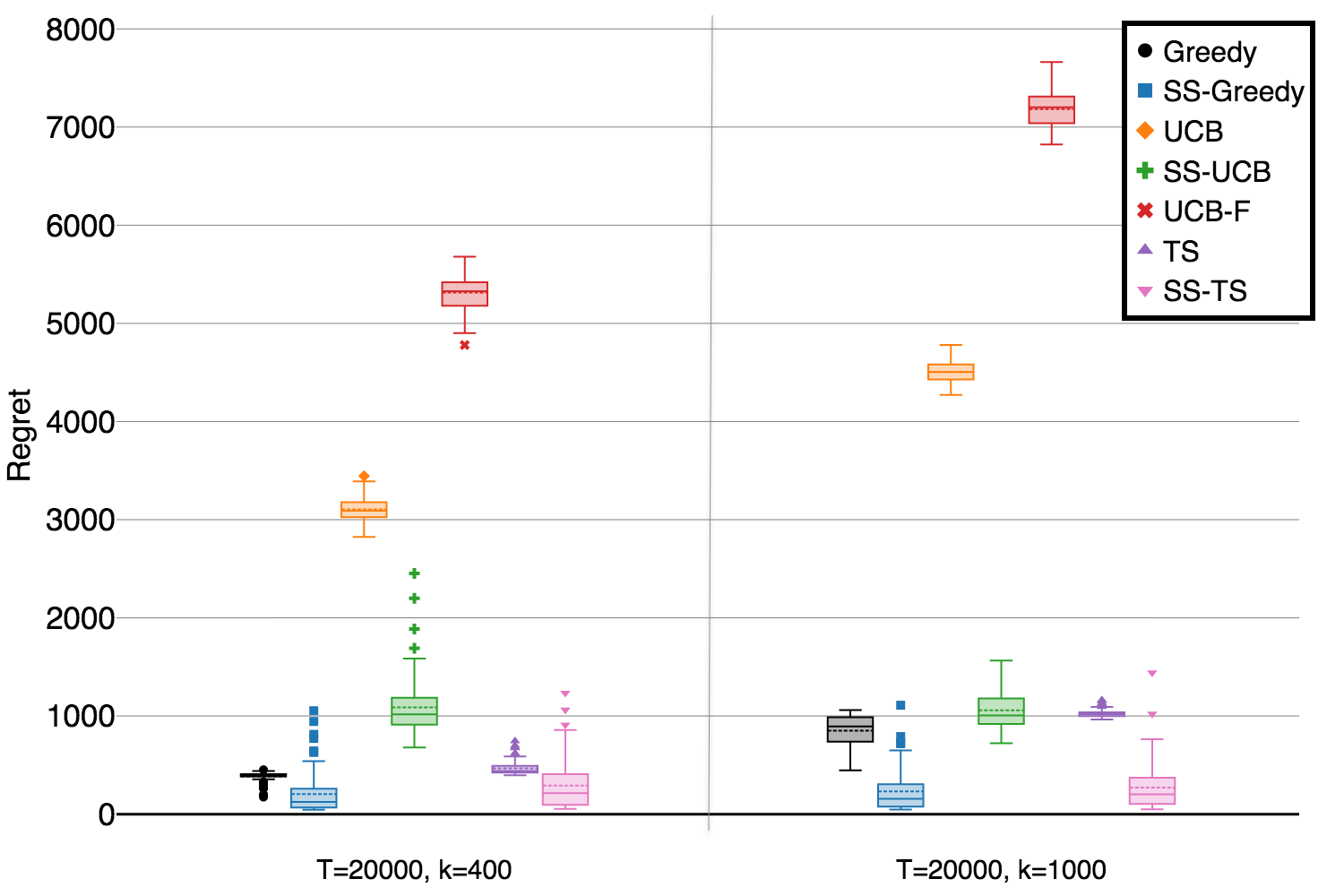}
	\label{fig:bern_b_8}
	\caption{Distribution of the per-instance regret for Bernoulli rewards and prior $\Gamma = \mathcal{\beta}(1,0.8).$}
\end{figure}

\begin{figure}[h]
	\centering
	\includegraphics[width = .85\textwidth]{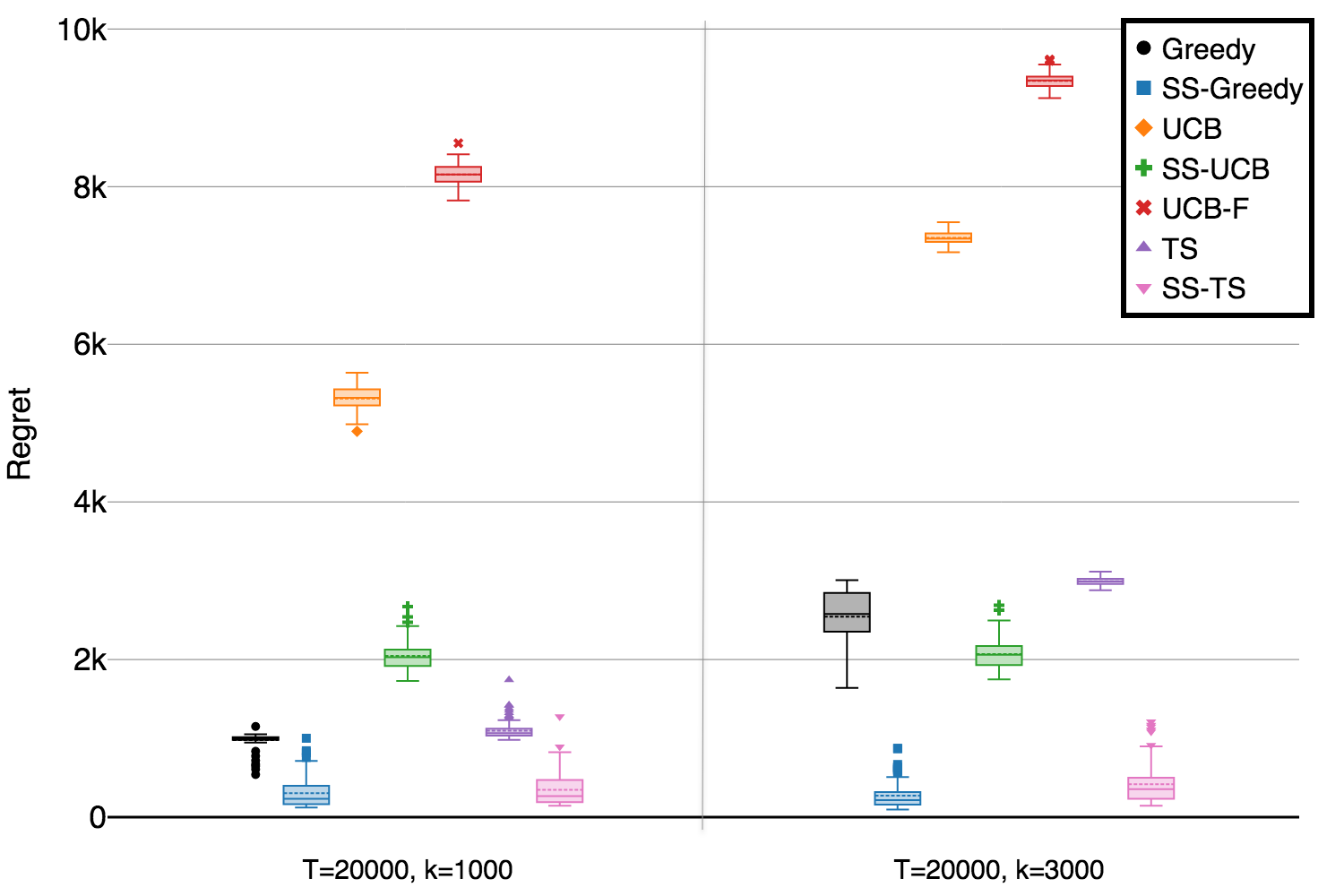}
	\label{fig:bern_b_1}
	\caption{Distribution of the per-instance regret for Bernoulli rewards and prior $\Gamma =\mathcal{\beta}(1,1).$}
\end{figure}

\begin{figure}[h]
	\centering
	\includegraphics[width = .85\textwidth]{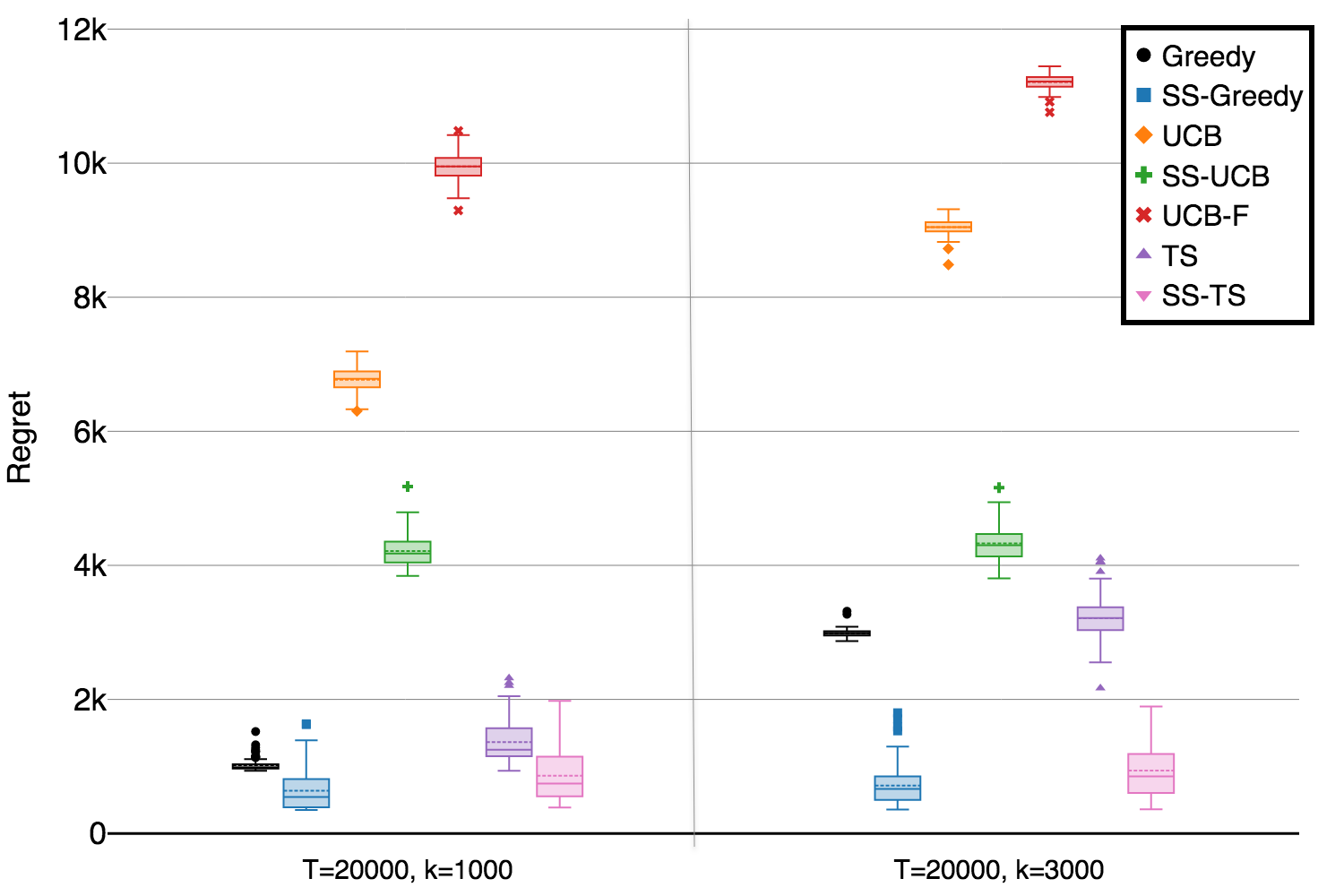}
	\label{fig:bern_b_15}
	\caption{Distribution of the per-instance regret for Bernoulli rewards and prior $\Gamma = \mathcal{\beta}(1,1.5).$}
\end{figure}

\subsection{Contextual reward}

\paragraph{Setting.} We repeat the analysis described in \S \ref{sec:simulations}, where instead of real data, we work with synthetically generated data. We consider the contextual rewards given by $Y_{it} = X_t^\top \theta_i + \vep_{it}$, where $X_t \in \IR^d$ is the (shared) context observed at time $t$ and $\theta_i \in \IR^d$ is the parameter of arm $i$. We fix $T=8000$ and $d=2$ and consider two settings: $k = 200$ and $k=400$. We suppose that $\theta_i \sim \Unif_d = \{u \in \mathbb{R}^d: \|u\|_2 \leq 1\}$ and draw all contexts according to $X_t \sim \mathcal{N}(0,I_d/\sqrt{d})$. Note that the normalization with $\sqrt{d}$ implies that $\E[ \|X_t\|_2^2] = 1$. Also, the noise terms $\vep_{it} \sim \mathcal{N}(0,\sigma^2)$, where $\sigma = 0.5$. We again compare, OFUL, Greedy, and TS, and their subsampled versions. We generate $50$ instances of the problem described above and compare the per-instance regret of these algorithms.

\begin{figure}[hptb]
	\centering
	\includegraphics[height = 6cm]{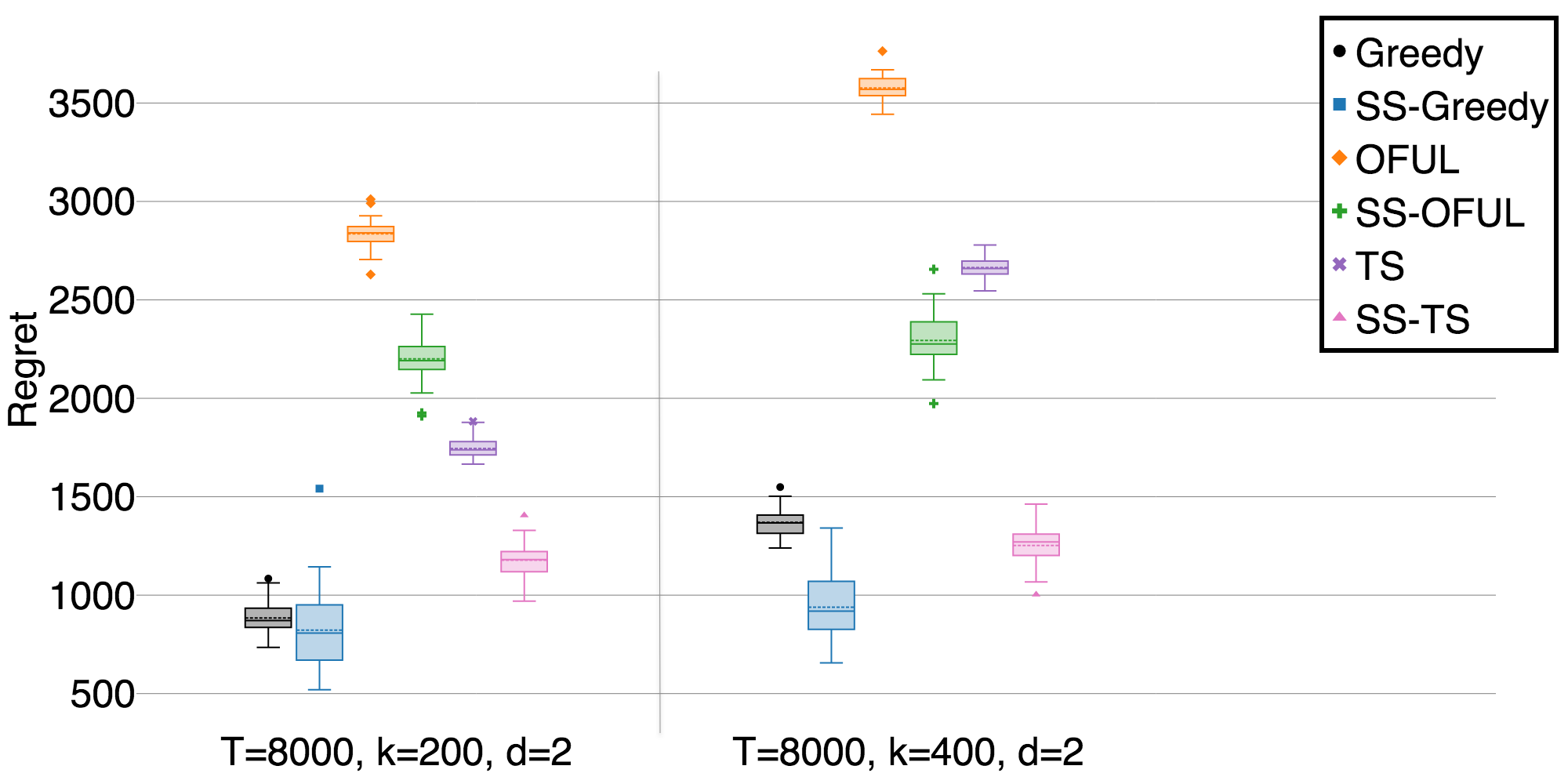}
	\label{fig:context_synt}
	\caption{Distribution of the per-instance regret for the synthetic contextual simulation}
\end{figure}

\paragraph{Results.} The results are depicted in Figure 9. As can be observed: (1) SS-Greedy is the best algorithm in both cases, and (2) subsampling improves the performance of all algorithms. These results are consistent with our findings throughout the paper.

\section{Useful Lemmas}\label{app:useful-lem}

\begin{lemma}[Gautschi's Inequality]\label{lem:gautschi}
	Let $x$ be a positive real number and let $0 < s < 1$. Then,
	\begin{equation*}
		x^{1-s} \leq \frac{\Gamma(x+1)}{\Gamma(x+s)} \leq (x+1)^{1-s}\,.
	\end{equation*}
\end{lemma}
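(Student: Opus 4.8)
The plan is to obtain both inequalities purely from the logarithmic convexity of the Gamma function together with the functional equation $\Gamma(z+1)=z\,\Gamma(z)$. First I would record the log-convexity of $\Gamma$: for $a,b>0$ and $\lambda\in(0,1)$, the integral representation $\Gamma(z)=\int_0^\infty t^{z-1}e^{-t}\,dt$ factors as $t^{\lambda a+(1-\lambda)b-1}e^{-t}=\bigl(t^{a-1}e^{-t}\bigr)^{\lambda}\bigl(t^{b-1}e^{-t}\bigr)^{1-\lambda}$, so H\"older's inequality with exponents $1/\lambda$ and $1/(1-\lambda)$ yields $\Gamma\bigl(\lambda a+(1-\lambda)b\bigr)\le \Gamma(a)^{\lambda}\,\Gamma(b)^{1-\lambda}$; equivalently, $\log\Gamma$ is convex. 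This single inequality, applied at two suitably chosen triples of points, will produce both bounds.

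For the lower bound I would write the interpolation point as the convex combination $x+s=(1-s)\,x+s\,(x+1)$ and apply log-convexity to get $\Gamma(x+s)\le \Gamma(x)^{1-s}\Gamma(x+1)^{s}$. Dividing $\Gamma(x+1)$ by this and using $\Gamma(x+1)=x\,\Gamma(x)$ gives $\Gamma(x+1)/\Gamma(x+s)\ge \bigl(\Gamma(x+1)/\Gamma(x)\bigr)^{1-s}=x^{1-s}$, which is the left inequality. For the upper bound I would instead use $x+1=s\,(x+s)+(1-s)\,(x+1+s)$, so that log-convexity gives $\Gamma(x+1)\le \Gamma(x+s)^{s}\Gamma(x+1+s)^{1-s}$; dividing by $\Gamma(x+s)$ and using $\Gamma(x+1+s)=(x+s)\,\Gamma(x+s)$ yields the sharper estimate $\Gamma(x+1)/\Gamma(x+s)\le (x+s)^{1-s}$. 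Since $0<s<1$ and $x>0$ force $(x+s)^{1-s}\le (x+1)^{1-s}$, the stated right inequality follows.

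The only real content is the log-convexity step; everything after it is bookkeeping with the functional equation. The main thing to get right is spotting the two convex-combination identities — that $x+s$ splits across $\{x,x+1\}$ with weights $\{1-s,s\}$, while $x+1$ splits across $\{x+s,x+1+s\}$ with weights $\{s,1-s\}$ — since these are precisely what convert statements about $\Gamma$ at adjacent arguments into the target ratio $\Gamma(x+1)/\Gamma(x+s)$. If one prefers not to invoke log-convexity as a black box, the H\"older computation above makes the whole argument self-contained.
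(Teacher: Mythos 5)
Your proof is correct, but there is nothing in the paper to compare it against: the paper states Gautschi's inequality as a known classical fact (Lemma \ref{lem:gautschi} in the appendix of useful lemmas) and never proves it, simply invoking it later in the analysis of $\E[1/\Delta_{(i)}]$ for $\beta$-regular priors. What you have supplied is the standard self-contained derivation via logarithmic convexity of $\Gamma$: the H\"older step establishing $\Gamma\bigl(\lambda a+(1-\lambda)b\bigr)\le\Gamma(a)^{\lambda}\Gamma(b)^{1-\lambda}$ is sound (the exponent bookkeeping $\lambda(a-1)+(1-\lambda)(b-1)=\lambda a+(1-\lambda)b-1$ checks out), both convex-combination identities $x+s=(1-s)x+s(x+1)$ and $x+1=s(x+s)+(1-s)(x+1+s)$ are verified by direct expansion, and the functional equation converts each log-convexity statement into the desired ratio bound. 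Note also that your argument in fact proves the strictly stronger upper bound $\Gamma(x+1)/\Gamma(x+s)\le(x+s)^{1-s}$, from which the paper's stated bound $(x+1)^{1-s}$ follows trivially since $x+s\le x+1$ and $1-s>0$; this sharper form is the version that appears in the literature on Gautschi--Kershaw inequalities. In short: correct, complete, and it turns a black-box citation in the paper into a two-step argument whose only analytic input is H\"older's inequality applied to the Euler integral.
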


\begin{lemma}[Rearrangement Inequality]\label{lem:rearg-ineq} Let $a_1 \leq a_2 \leq \cdots \leq a_n$ and $b_1 \leq b_2 \leq \cdots \leq b_n$ be two sequence of real numbers. Then for any permutation $\sigma:[n] \rightarrow [n]$ the following holds:
	\begin{equation*}
	a_1 b_n + a_2 b_{n-1} + \cdots + a_n b_1 \leq a_1 b_{\sigma(1)} + a_2 b_{\sigma(2)} + \cdots + a_n b_{\sigma(n)} \leq a_1 b_1 + a_2 b_2 + \cdots a_nb_n\,.
	\end{equation*}
	
\end{lemma}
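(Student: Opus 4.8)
The plan is to establish the upper bound $\sum_{i=1}^n a_i b_{\sigma(i)} \le \sum_{i=1}^n a_i b_i$ by a swapping (exchange) argument, and to obtain the matching lower bound by the mirror-image version of the same argument. Write $S(\sigma) := \sum_{i=1}^n a_i b_{\sigma(i)}$. It suffices to show that the identity permutation maximizes $S$ and that the reversing permutation $\sigma(i)=n+1-i$ minimizes it over all permutations of $[n]$.

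The core step is a single adjacent exchange. Suppose $\sigma$ is not the identity; then there is an index $i$ with an adjacent inversion $\sigma(i) > \sigma(i+1)$. Let $\sigma'$ agree with $\sigma$ except that it swaps the values at positions $i$ and $i+1$. All terms except two cancel in $S(\sigma') - S(\sigma)$, leaving
\begin{equation*}
S(\sigma') - S(\sigma) = \big(a_{i+1} - a_i\big)\big(b_{\sigma(i)} - b_{\sigma(i+1)}\big) \ge 0,
\end{equation*}
because $a_{i+1} \ge a_i$ (positions are ordered) and $b_{\sigma(i)} \ge b_{\sigma(i+1)}$ (the inversion together with $b$ nondecreasing). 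Thus removing an adjacent inversion never decreases $S$.

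Next I would iterate this step. Each adjacent swap that fixes an inversion reduces the total number of inversions of $\sigma$ by exactly one, so after finitely many swaps (this is precisely bubble sort) we reach the inversion-free permutation, namely the identity; since $S$ is nondecreasing along the way, $S(\sigma) \le S(\mathrm{id}) = \sum_i a_i b_i$, which is the upper bound. The lower bound is symmetric: now swapping an adjacent \emph{ascent} $\sigma(i) < \sigma(i+1)$ yields $S(\sigma')-S(\sigma) = (a_{i+1}-a_i)(b_{\sigma(i)}-b_{\sigma(i+1)}) \le 0$, and repeating drives $\sigma$ to the fully decreasing (reversing) permutation, giving $S(\sigma) \ge \sum_i a_i b_{n+1-i}$. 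Equivalently, one may deduce the lower bound by applying the upper bound to the two nondecreasing sequences $(a_i)$ and $(-b_{n+1-i})$.

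There is no genuine obstacle here; the only point needing a line of care is the termination and bookkeeping of the iteration—confirming that an adjacent inversion exists whenever $\sigma \ne \mathrm{id}$ and that the inversion count is a strictly decreasing nonnegative-integer measure—but this is entirely routine.
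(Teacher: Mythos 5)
Your proof is correct. The paper states this classical lemma (Lemma \ref{lem:rearg-ineq}) without any proof, treating it as a known fact used later in Lemma \ref{lem:1gaps}, so there is no paper argument to compare against; your adjacent-transposition (bubble-sort) exchange argument is the standard one, and each ingredient checks out: if $\sigma \neq \mathrm{id}$ an adjacent inversion must exist, the swap changes the sum by exactly $(a_{i+1}-a_i)\bigl(b_{\sigma(i)}-b_{\sigma(i+1)}\bigr) \geq 0$, the inversion count guarantees termination at the identity, and the lower bound follows either by the mirror argument or, as you note, by applying the upper bound to $(a_i)$ and the nondecreasing sequence $(-b_{n+1-i})$.
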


\begin{lemma}\label{lem:1gaps}
Let $\delta > 0$ be given and let $\Gamma$ be $\beta$-regular (See Definition \ref{def:beta-prior}).Then, the following holds:

\[
	\E_\Gamma \bb{\Ind \bp{\mu < 1-3\delta} \min \bp{1+ \frac{1}{1-2\delta-\mu}, T(1-\mu)}} \leq C_0 \begin{cases}
	5 + \log(1/\delta), \text{~~~if~} \beta = 1, \\
	C(\beta), \text{~~if~} \beta > 1, \\
	C(\beta) \min \bp{\sqrt{T}, 1/\delta}^{1-\beta} \text{~~if~} \beta < 1
	\end{cases}\,.
\]
In above, $C(\beta)$ is a constant that only depends on $\beta$.

\end{lemma}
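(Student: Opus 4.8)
The plan is to reduce the whole quantity to a one-dimensional Stieltjes integral and exploit the tail bound that $\beta$-regularity provides. First I would substitute $u = 1-\mu$, so that the indicator becomes $\Ind(u > 3\delta)$, the two arguments of the minimum become $1 + \frac{1}{u-2\delta}$ and $Tu$, and the law of $u$ satisfies the clean bound $F(\eps) := \P(u \le \eps) = \P_\Gamma(\mu \ge 1-\eps) \le C_0 \eps^\beta$ for every $\eps$, directly from Definition \ref{def:beta-prior}. Only the upper half of $\beta$-regularity is used, which is why the final bound involves $C_0$ alone. I may assume $\delta \le 1/3$, since otherwise $\{\mu < 1-3\delta\}$ is empty and the expectation vanishes; this also guarantees the threshold introduced below lies in $[0,1]$.

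The quantity to control is then $I = \int_{3\delta}^1 \min(1 + \frac{1}{u-2\delta},\, Tu)\, dF(u)$. Using the elementary inequality $\min(1+a,b) \le 1 + \min(a,b)$ I peel off the additive constant, which contributes at most $\P(u>3\delta)\le 1$, leaving $J = \int_{3\delta}^1 \min(\frac{1}{u-2\delta},\, Tu)\, dF(u)$. The crux is to split the range of integration at $u^\ast = \max(3\delta,\, 1/\sqrt{T})$: on $[3\delta, u^\ast]$ I bound the minimum by its increasing branch $Tu$, and on $[u^\ast, 1]$ by its decreasing branch $\frac{1}{u-2\delta}$. This is precisely the point where the two branches are comparable (both of size $\approx \sqrt{T}$ when $3\delta \le 1/\sqrt T$), and it is what manufactures the $\min(\sqrt{T}, 1/\delta)$ appearing in the statement.

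Each piece I would then estimate by Riemann–Stieltjes integration by parts against $F(u) \le C_0 u^\beta$. For the lower piece, $\int_{3\delta}^{u^\ast} Tu\, dF(u) \le T\, u^\ast F(u^\ast) \le C_0\, T\, (u^\ast)^{\beta+1}$; when this interval is nonempty (i.e.\ $u^\ast = 1/\sqrt T$) it equals $C_0\, T^{(1-\beta)/2}$, which is $O(1)$ for $\beta \ge 1$ and equals $C_0(\sqrt T)^{1-\beta} = C_0\,\min(\sqrt T, 1/\delta)^{1-\beta}$ for $\beta < 1$. For the upper piece, integration by parts gives
\[
\int_{u^\ast}^1 \frac{dF(u)}{u-2\delta} \le \frac{1}{1-2\delta} + C_0 \int_{u^\ast}^1 \frac{u^\beta}{(u-2\delta)^2}\, du .
\]
Since $u > 3\delta$ forces $u-2\delta \ge u/3$, this last integral is controlled, up to a factor depending only on $\beta$, by $\int_{u^\ast}^1 u^{\beta-2}\,du$, whose value is exactly what produces the three advertised regimes: it is $\log(1/u^\ast) \le \log(1/\delta)$ for $\beta = 1$, bounded by $1/(\beta-1)$ for $\beta > 1$, and of order $(u^\ast)^{\beta-1} = \min(\sqrt T, 1/\delta)^{1-\beta}$ for $\beta < 1$. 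Collecting all pieces and using $C_0 \ge 1$ to absorb the additive constants (such as $1/(1-2\delta) \le 3$) into the constant $5$ or into $C(\beta)$ yields the three cases.

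The one step requiring genuine care — and where I expect the main difficulty — is securing the \emph{unit} coefficient on $\log(1/\delta)$ in the case $\beta = 1$. The crude comparison $\frac{1}{u-2\delta} \le \frac{3}{u}$ used above for the order would inflate this coefficient; the sharp constant instead comes from integrating $\frac{u}{(u-2\delta)^2}$ directly (keeping the shift $2\delta$), for which the substitution $v = u-2\delta$ gives $\int \frac{v+2\delta}{v^2}\,dv = \log v - 2\delta/v$, the logarithmic term carrying coefficient exactly one while the $2\delta/v$ term stays bounded because $v = u-2\delta \ge \delta$ throughout the region of integration. Once this refinement is in place for $\beta = 1$, the remainder is routine bookkeeping of constants across the three values of $\beta$.
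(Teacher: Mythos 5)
Your proposal is correct in substance, and it is essentially the continuous counterpart of the paper's argument: both proofs are a ``summation by parts against the regular tail bound.'' The paper discretizes $[0,1]$ into bins of width $\delta$, bounds the cumulative bin probabilities by $p_4+\cdots+p_j \leq C_0 (j\delta)^\beta$, and applies the rearrangement inequality (Lemma \ref{lem:rearg-ineq}, i.e.\ Abel summation) against the decreasing weights $a_j = 1 + \tfrac{1}{(j-3)\delta}$, so that the three regimes of $\beta$ emerge from sums of $\tfrac{j^\beta - (j-1)^\beta}{j-3}$; the $T(1-\mu)$ branch is invoked only in the case $\beta<1$, $\delta < 1/\sqrt{T}$, by switching to the bound $Tj\delta$ on the first $l \approx 1/(\delta\sqrt{T})$ bins. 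You do the same thing in continuous form: Stieltjes integration by parts against $F(u) \leq C_0 u^\beta$ after the substitution $u = 1-\mu$, with a single split at $u^\ast = \max(3\delta, 1/\sqrt{T})$ that treats all values of $\beta$ uniformly (for $\beta \geq 1$ the lower piece is simply $O(C_0)$). Your version buys a shorter argument with no discretization bookkeeping and no auxiliary rearrangement lemma, and it makes the origin of $\min(\sqrt{T},1/\delta)^{1-\beta}$ transparent; the paper's discrete version has the cosmetic advantage of reusing the same $\delta$-bin notation as the proof of Lemma \ref{lem:greedy-gen}, and it happens to produce the exact constant in the statement.

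Two bookkeeping remarks. First, your refinement via $v = u - 2\delta$ does secure coefficient exactly $C_0$ on $\log(1/\delta)$ (the term $2\delta/v$ stays below $2$ since $v \geq \delta$ on the integration range), which is the part that matters; but your additive constant for $\beta = 1$ comes out as roughly $C_0\bp{7 + \log(1/\delta)}$ — the peeled-off $1$, the boundary term $\tfrac{1}{1-2\delta} \leq 3$, the lower piece $\leq C_0$, and the $+2C_0$ from the shift — rather than the stated $5$. This is immaterial for every application of the lemma in the paper (Theorems \ref{thm:upp-large-k}, \ref{thm:greedy-opt-bern}, \ref{thm:greedy-reg} only need some absolute constant), but it means your proof establishes the statement with a slightly larger absolute constant, not the literal one. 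Second, your use of $C_0 \geq 1$ to absorb constants deserves a one-line justification: it follows from taking $\eps = 1$ in Definition \ref{def:beta-prior}, since $\P_\Gamma(\mu \geq 0) = 1 \leq C_0$.
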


\proof{Proof of Lemma \ref{lem:1gaps}.}
	Divide the interval $[0,1]$ into subintervals of size $\delta$. In particular, let $h = \lceil {1/\delta \rceil}$ and suppose intervals $I_1, I_2, \cdots, I_h$ are defined according to $I_1 = [1-\delta, 1]$, and for $j \geq 2, I_j = [1-j \delta, (j-1)\delta)$. Let $p_j = \P[\mu_i \in I_j]$, then the following holds:
	\begin{align*}
		p_1 &\leq C_0 \delta^\beta \\
		p_1 + p_2 &\leq C_0 (2\delta)^\beta \\
		\vdots & \vdots \\
		p_1 + p_2 + \cdots+ p_h &\leq C_0 (h\delta)^\beta\,.
	\end{align*}
	Now we can write
	\begin{equation*}
		\E_\Gamma \bb{\Ind \bp{\mu < 1-3\delta} \bp{1+\frac{1}{1-2\delta-\mu}}} \leq \sum_{j=4}^h p_j \bp{1+\frac{1}{(j-3)\delta}} \,.
	\end{equation*}
  	Let $a_j = 1 + 1/( (j-3) \delta)$ and note that the sequence $a_j$ is decreasing in $j$. Hence, applying Rearrangement Inequality (Lemma \ref{lem:rearg-ineq}) to sequences $(a_4-a_5, a_5 - a_6, \cdots, a_{h-1}-a_h, a_h)$ and $(p_4, p_4+p_5, \cdots, p_4+p_5+\cdots+p_h)$, together with inequalities above on probability terms $p_i$ implies
  	\begin{equation}\label{eqn:gen-beta}
  		\sum_{j=4}^h p_j a_j \leq C_0 (4 \delta)^{\beta}  a_4 + \sum_{j=5}^h C_0 \bp{j^\beta - (j-1)^\beta} \delta^\beta a_j \leq C_0 + C_0 \delta^{\beta-1} \bp{4^\beta + \sum_{j=5}^h \frac{j^\beta - (j-1)^\beta}{j-3}} \,.
  	\end{equation}
  	Note that for $\beta = 1$ the right hand-side of above inequality turns into
  	\begin{equation*}
  		C_0 + C_0 \bp{4 + \sum_{j=5}^h 1/(j-3)} \leq C_0 + C_0 \bp{4 + \log (h-1)} \leq C_0 \bp{5 + \log(1/\delta)}\,.
  	\end{equation*}
  	For $\beta > 1$, we can write $j^\beta - (j-1)^\beta \leq \beta j^{\beta-1}$ using the mean-value theorem on the function $f(x) = x^\beta$. Hence, by using $j^{\beta-1}/(j-3) \leq 4 j^{\beta-2}$ which holds for any $j \geq 5$, the right hand-side of Eq. \ref{eqn:gen-beta} is at most
  	\begin{equation*}
  		C_0 + C_0 \delta^{\beta-1} \bp{4^\beta + 4 \beta \sum_{j=5}^h j^{\beta-2}} \leq C_0 + C_0 \delta^{\beta-1} \bp{4^\beta + 4 \beta D(\beta) (1/\delta)^{\beta-1}}\,,
  	\end{equation*}
  	where we used the fact that $\sum_{j=1}^n j^{\beta-2}$ is bounded above by $D(\beta) n^{\beta-1}$. Hence, letting $C(\beta) = 1 + 4^\beta + 4 D(\beta) \beta$, implies that our quantity of interest is bounded from above by $C(\beta)$.
  	
  	For $\beta < 1$, we can still use the mean-value theorem for the function $f(x)=x^\beta$ (here $j^\beta - (j-1)^\beta \leq \beta (j-1)^{\beta-1}$), together with $(j-1)^{\beta-1}/(j-3) \leq 2 (j-1)^{\beta-2}$ which is true for $j \geq 5$. Hence, the bound on the right hand-side of Eq. \ref{eqn:gen-beta} turns into
  	\begin{equation*}
  	C_0 + C_0 \delta^{\beta-1} \bp{4^\beta + 2 \beta \sum_{j=5}^h (j-1)^{\beta-2}} \leq C_0 + C_0 \delta^{\beta-1} \bp{4^\beta + 2 D(\beta) \beta}\,, 	
  	\end{equation*}
  	where we used the fact that when $\beta <1$, the summation $\sum_{j=1}^{\infty} j^{\beta-2}$ is bounded above by a constant $D(\beta)$. Hence, letting $C'(\beta) = 1 + 4^\beta + 2 D(\beta) \beta$ implies half of the result for $\beta < 1$.
  	
  	The only remaining part is to show that if $\delta < 1/\sqrt{T}$ we can show a similar bound for $\beta<1$ where $1/\delta$ is replaced with $\sqrt{T}$. In this case, the idea is to use the second term $T(1-\mu)$ for $\mu$ terms that are closer to zero. In particular, we can write
  	\begin{equation*}
  		\E_\Gamma \bb{\Ind \bp{\mu < 1-3\delta} \bp{1+\frac{1}{1-2\delta-\mu}}} \leq \sum_{j=4}^l p_j (T j \delta) + \sum_{j=l+1}^h p_j \bp{1 + \frac{1}{(j-3) \delta}}\,,
  	\end{equation*}
  	where $l$ is chosen as the largest integer for which $T l (l-3) \delta^2 \geq 1$. Note that the first sum is upper bounded by
  	\begin{equation*}
  	T l \delta \bp{\sum_{j=1}^l p_j} \leq C_0 T l^{\beta+1} \delta^{\beta+1}.
  	\end{equation*}
  	For the second term, we can use a similar analysis to the one we used above. The only difference here is that the summation of terms $j^{\beta-2}$ is from $l+1$ to $h$ (or $\infty$) and hence is upper bounded by $D(\beta) l^{\beta-1}$.
  	\begin{equation*}
  		\sum_{j=l+1}^h p_j \bp{1 + \frac{1}{(j-3)\delta}} \leq C_0 + C_0 \delta^{\beta-1} \bp{ 4 (l+1)^{\beta-1} + 2 \beta D(\beta) (l+1)^{\beta-1}}\,.
  	\end{equation*}
  	Now using the fact that $l \delta \approx 1/\sqrt{T}$ implies the the quantity of interest is at most
  	\begin{equation*}
  		C_0 \bp{T T^{-(\beta+1/2)} + 1 + 4 T^{(1-\beta)/2} + 2 \beta D \beta T^{(1-\beta)/2}} \leq C_0 C'(\beta) T^{(1-\beta)/2}\,,
  	\end{equation*}
  	Letting $C(\beta) = \max \bp{C'(\beta), C''(\beta)}$ finishes the proof.\Halmos
  	\endproof

\begin{lemma}\label{lem:Deltas}
Suppose that $\bmu = (\mu_1, \mu_2, \ldots, \mu_k)$ and $\mu_i \sim \Gamma$ which satisfies the conditions given in Assumption \ref{ass:prior}. Let $\mu_{(k)} \leq \mu_{(k-1)} \leq \cdots \leq \mu_{(1)}$ denote the order statistics of $\bmu$ and define $\Delta_{(i)} = \mu_{(1)} - \mu_{(i)}$. Then, we have
\begin{enumerate}
    \item $U_i =\Delta_{(i)}$ has the following density function
    \begin{align*}
        g_{U_i}(u) = k(k-1) \binom{k-2}{i-2} \int_0^{1-u} & \bp{G(u+z)-G(z)}^{i-2} \\
        &g(u+z) g(z) G(z)^{k-i} dz\,,
    \end{align*}
    where $G(\cdot)$ is the cumulative distribution function of $\Gamma$, defined as $G(z) = P_{Z \sim \Gamma}\bb{Z \leq z} = \int_0^z g(x) dx$
    \item If $g(x) \leq D_0$ for all $x \in [0,1]$, then any $i \geq 3$,
    \begin{equation*}
        \E[1/\Delta_{(i)}] \leq \frac{D_0k}{i-2}\,.
    \end{equation*}
\end{enumerate}
\end{lemma}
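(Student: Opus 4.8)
The plan is to treat the two parts in order, since part~2 is obtained by integrating the density of part~1 against $1/u$, and the only real subtlety is the integrability of that singular weight.

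For part~1, I would invoke the standard joint density of two order statistics. Writing the samples in increasing order, $\mu_{(i)}$ is the $(k-i+1)$-th smallest while $\mu_{(1)} = \max_j \mu_j$ is the $k$-th smallest. For $x < y$ the joint density of this pair is
\[
\frac{k!}{(k-i)!\,(i-2)!}\, G(x)^{k-i}\,[G(y)-G(x)]^{i-2}\, g(x)\, g(y),
\]
which comes from the usual multinomial count: $k-i$ samples fall below $x$, exactly $i-2$ fall strictly between $x$ and $y$, one sits at $x$ and one at $y$, and none exceed $y$ since $y$ is the maximum. To obtain the density of $U_i = \mu_{(1)}-\mu_{(i)}$, I would change variables $z = x$, $u = y-x$ (unit Jacobian), integrate over $z \in [0,1-u]$, and use the identity $k!/((k-i)!\,(i-2)!) = k(k-1)\binom{k-2}{i-2}$, which reproduces the stated formula for $g_{U_i}$.

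For part~2 the key point, and the only delicate step, is controlling the singularity of $1/u$ at $u=0$, which is integrable precisely because $i \ge 3$. Starting from $\E[1/\Delta_{(i)}] = \int_0^1 u^{-1} g_{U_i}(u)\,du$ and changing back to $(x,y)$ coordinates, I would bound the singular factor using the density upper bound $G(y)-G(x) = \int_x^y g \le D_0(y-x)$, which yields
\[
\frac{[G(y)-G(x)]^{i-2}}{y-x} \le D_0\,[G(y)-G(x)]^{i-3}.
\]
Here $i \ge 3$ guarantees the exponent $i-3 \ge 0$, so exactly one factor of $G(y)-G(x)$ is spent cancelling the denominator and the remaining integrand is bounded.

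It then remains to evaluate a clean double integral. Fixing $y$ and substituting $w = G(x)$ turns the inner integral into a Beta integral,
\[
\int_0^{G(y)} w^{k-i}\,(G(y)-w)^{i-3}\,dw = G(y)^{k-2}\, B(k-i+1,\,i-2),
\]
and integrating $G(y)^{k-2} g(y)$ over $y \in [0,1]$ gives $1/(k-1)$ since $G(1)=1$. Collecting the prefactor $k(k-1)\binom{k-2}{i-2} D_0$ together with $B(k-i+1,i-2) = (k-i)!\,(i-3)!/(k-2)!$ and the factor $1/(k-1)$, the factorials collapse via $\binom{k-2}{i-2}(k-i)!\,(i-3)!/(k-2)! = 1/(i-2)$, leaving exactly $D_0 k/(i-2)$. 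The main obstacle is thus purely the integrability at $u=0$; once the $i \ge 3$ bound is in place, the Beta-integral evaluation and factorial bookkeeping are routine.
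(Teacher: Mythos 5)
Your proposal is correct and follows essentially the same route as the paper: part 1 via the joint density of the pair of order statistics $(\mu_{(i)},\mu_{(1)})$ followed by the change of variables $(z,u)=(x,y-x)$, and part 2 via the same key bound $G(u+z)-G(z)\le D_0 u$, which spends one factor of $G(u+z)-G(z)$ to cancel the singular weight $1/u$ — exactly the step where $i\ge 3$ enters in the paper as well. The only cosmetic difference is the final bookkeeping: you evaluate the remaining double integral directly as a Beta integral, whereas the paper recognizes it as a constant multiple of the density of $\Delta_{(i-1)}$ for a sample of size $k-1$ (hence integrating to $1$) and takes a ratio of prefactors; both computations collapse to $D_0 k/(i-2)$.
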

\proof{Proof of Lemma \ref{lem:Deltas}.} The proof is as follows.
\begin{enumerate}
\item The first part follows from basic probability calculations. In fact, conditioned on $\mu_{(i)} = z$, the density of $U_i$ around $u$ can be computed according to the fact that we need $k-i$ of $\mu$s to be less than $z$ and $i-2$ of them to belong to $[z, u+z]$. Note that here $\mu_{(1)}$ is equal to $u+z$. Considering all the different permutations that lead to the same realization of the order statistics and integrating $z$ from $0$ to $1-u$ (possible values for $z$) yields the desired formula.

\item Note that for any $u > 0$ we have
\begin{equation*}
    G(u+z) - G(z) \leq D_0 u \rightarrow \frac{1}{u} \leq \frac{D_0}{G(u+z)-G(z)}\,.
\end{equation*}
Now using this inequality we can write
\begin{align*}
    \E[\frac{1}{\Delta^{(k)}_{(i)}}]
    &= \int_0^1 \frac{1}{u} \int_0^{1-u} k(k-1) \binom{k-2}{i-2}
    \bp{G(u+z)-G(z)}^{i-2} g(u+z) g(z) G(z)^{k-i} dz du \\
    &\leq D_0 k(k-1) \binom{k-2}{i-2} \int_0^1 \int_0^{1-u}
    \bp{G(u+z)-G(z)}^{i-3} g(u+z) g(z) G(z)^{k-i} dz du  \\
    &= D_0 \frac{k(k-1) \binom{k-2}{i-2}}{(k-1)(k-2) \binom{k-3}{i-3}} (k-1)(k-2) \binom{k-3}{i-3} \\
    &\int_0^1 \int_0^{1-u} \bp{G(u+z)-G(z)}^{i-3}
    g(u+z) g(z) G(z)^{k-i} dz du
    = \frac{D_0\,k}{i-2}\,.
\end{align*}
where we used the fact that the last integral is equivalent to $\int_{0}^1 g_V(v) dv = 1$ for $V = \Delta^{(k-1)}_{(i-1)}$ which is true for $i \geq 3$.
\end{enumerate}
\Halmos
\endproof

\section{Proofs of \S \ref{sec:generalizations}: $\beta$-Regular Priors and Sequential Greedy }\label{app:generalizations}

We prove results presented in \S \ref{sec:generalizations} here. This section is divided into two parts. In the first part, we prove results for general $\beta$-regular priors and in the second part, we prove results for the sequential greedy (Seq-Greedy) algorithm.

\subsection{General $\beta$-regular priors}\label{app:gen-prior-details}
Here, we prove results presented in Table \ref{tab:gen-beta}. This includes generalizing the result of Theorem \ref{thm:lb} on lower bounds, Theorems \ref{thm:upp-small-k} and \ref{thm:upp-large-k} for UCB and SS-UCB, and finally Theorems \ref{thm:greedy-opt-bern}, \ref{thm:greedy-reg}, and \ref{thm:greedy-reg-1sg} for Greedy and SS-Greedy. For each situation, we highlight the steps in the proofs that require significant changes and prove such modified results.

\subsubsection{Lower Bounds}

Note that the proof presented in \S \ref{app:prelim-lb} goes through without much change. Indeed, consider the large $k$ regime defined by $k>c_D(\beta) T^{\beta/(\beta+1)}$, where $c_D(\beta)$ only depends on $c_0, C_0$ and $\beta$. Our goal is to prove that there exists a class of ``bad orderings'' that happen with a constant probability, for which the Bayesian is $\Omega\bp{T^{\beta/(\beta+1)}}$. Let $m=\lfloor{T^{\beta/(\beta+1)} \rfloor}$ and define the following class over orderings of arms:

\begin{itemize}
	\item[(i)] $\max_{i=1}^k \mu_i \geq 1-\bp{c_0 k}^{-1/\beta}$,
	\item[(ii)] $\max_{i=1}^m \mu_i \leq 1-\bp{2C_0}^{-1/\beta} T^{-1/(\beta+1)},$
	\item[(iii)] $\sum_{i=1}^m \mu_i \leq \bp{1-\frac{1}{2^{\beta+2}C_0^\beta}} m$\,.
\end{itemize}

Now one can follow the proof steps of Theorem \ref{thm:lb} to prove the result. Indeed, it is not difficult to see that the upper bound on the probability of the events remain the same. For example, the first term happens with probability
\begin{align*}
	\P \bb{\max_{i=1}^k \mu_i \geq 1-\bp{c_0k}^{-1/\beta}} \geq 1 - \P \bb{\mu \leq 1-\bp{c_0k}^{-1/\beta}}^k \geq 1 - \bp{1-c_0 (c_0k)^{-1}}^k \geq 1-e^{-1}\,.
\end{align*}
Other events also hold with similar probabilities as in the proof Theorem \ref{thm:lb} and one can follow the exact line of proof. Similarly, two things can happen: either the decision-maker only pulls from the first $m$ arms during the $T$-period horizon or starts pulling some (or all) arms in the set $[k] - [m]$. In the former case, the regret is at least
\begin{equation*}
T \bb{\bp{2C_0}^{-1/\beta} T^{-1/(\beta+1)} - \bp{c_0 k}^{-1/\beta} } \geq c_L(\beta) T^{\beta/(\beta+1)}\,.
\end{equation*}
In the latter case, the regret is at least
\begin{equation*}
	\sum_{i=1}^m \bp{\max_{j=1}^k \mu_j - \mu_i} \geq m \bb{1-\bp{c_0 k}^{-1/\beta} -\bp{1-\frac{1}{2^{\beta+2}C_0^\beta}} } \geq c_L(\beta) T^{\beta/(\beta+1)}\,,
\end{equation*}
as desired. Note that the values of $c_D(\beta)$ and $c_L(\beta)$ can be calculated from the above inequalities.

\subsubsection{Upper Bounds}

We divide the discussion of upper bounds into two parts. We first focus on the case of UCB and then consider SS-UCB. Note that for UCB, under the assumption that the density $g$ is bounded above by $D_0$, the same proof goes through without any modification. However, an upper bound on the density is only possible when $\beta \geq 1$. In what follows we prove the bounds provided in Table \ref{tab:gen-beta} for small $k$ and large $k$, when $\beta < 1$.

\paragraph{UCB with small $k$.} We here provide a proof for $\beta < 1$ under the assumption that $g(x) \leq D_0 (1-x)^{\beta-1}$ for a constant $D_0$. Our goal is to prove that an analogous form of Theorem \ref{thm:upp-small-k} holds, which leads to $O\bp{k^{1/\beta}}$ regret. Recalling the proof of Theorem \ref{thm:upp-small-k} we have
\begin{equation*}
\BR_{T,k}(\text{UCB}) \leq k + \sum_{i=2}^k \E \bb{\min \bp{\frac{20+36\log f(T)}{\Delta_{(i)}}, T \Delta_{(i)} } }\,.
\end{equation*}
Now, we first need to derive a bound on $\E[1/\Delta_{(i)}]$ when $\beta < 1$. In particular, we want to see how the result of Lemma \ref{lem:Deltas} would change for $\beta < 1$ under the assumption that $g(x) \leq D_0 (1-x)^{\beta-1}$. Suppose that integer $q$ exists such that $q \leq 1/\beta < q+1$. We claim that there exists a constant $D(\beta)$ such that for any integer $i \geq q + 3$ we have
\begin{equation*}
\E[1/\Delta_{(i)}] \leq D_0(\beta) \bp{\frac{k}{i-q-2}}^{1/\beta}\,.
\end{equation*}
Note that part 1 of Lemma \ref{lem:Deltas} remains valid. However, for part 2, instead of $G(u+z)-G(z) \leq D_0 u$ we can write
\begin{equation*}
	G(u+z) - G(z) \leq D_0 \int_{1-u}^1 (1-x)^{\beta-1} \leq \frac{D_0}{\beta} u^\beta.
\end{equation*}
This means that for $D_0(\beta) = D_0/\beta$ we have $1/u \leq D_0(\beta) (G(u+z)-G(u))^{-1/\beta}$. Replacing this in the upper bound and following same ideas as the proof of Lemma \ref{lem:Deltas} implies
\begin{align*}
\E[\frac{1}{\Delta_{(i)}}]
&\leq D_0(\beta) k(k-1) \binom{k-2}{i-2} \int_0^1 \int_0^{1-u}
\bp{G(u+z)-G(z)}^{i-2-1/\beta} g(u+z) g(z) G(z)^{k-i} dz du  \\
&= D_0(\beta) \frac{k(k-1)\binom{k-2}{k-i}}{(k-1/\beta)(k-1-1/\beta) \binom{k-2-1/\beta}{k-i}} \\
&= D_0(\beta) \frac{\frac{\Gamma(k+1)}{\Gamma(i-1)}}{\frac{\Gamma(k-1/\beta+1)}{\Gamma(i-1-1/\beta)}}\,,
\end{align*}
where $\Gamma$ is the gamma function defined by. Note that Gautschi's Inequality (see Lemma \ref{lem:gautschi}) implies that if $q \leq 1/\beta < q+1$ for an integer $q$
\begin{align*}
\frac{\Gamma(k+1)}{\Gamma(k+1-1/\beta)}
&= \frac{\Gamma(k+1)}{\Gamma(k-q+1)} \cdot \frac{\Gamma(k-q+1)}{\Gamma(k-q+(1+q-1/\beta))} \\
&\leq k(k-1) \cdots (k-q+1) (k-q+1)^{1/\beta-q} \,.
\end{align*}
Similarly, using the other direction in the Gautschi's inequality one can show that
\begin{equation*}
\frac{\Gamma(i-1)}{\Gamma(i-1-1/\beta)} \leq (i-2) (i-3) \cdots (i-1-q)  (i-2-q)^{1/\beta - q}
\end{equation*}
 Therefore, for any index $i \geq q+3$ we have
 \begin{equation*}
 \E[\frac{1}{\Delta_{(i)}}] \leq \bp{\frac{k}{i-q-2}}^{1/\beta}\,.
 \end{equation*}
 Now moving back to the proof of Lemma \ref{thm:upp-small-k}, we can divide the arms into two sets, integer indices $i \leq q + 2$ and $i \geq q + 3$. Hence,
\begin{align*}
\BR_{T,k}(\text{UCB})
&\leq k + \sum_{i \leq q+2} \E \bb{\min \bp{\frac{20+36\log f(T)}{\Delta_{(i)}}, T \Delta_{(i)} } } \\
&+ D_0(\beta) (20 + 36 \log f(T)) k^{1/\beta} \sum_{i \geq q+3} \bp{\frac{1}{i-q-2}}^{1/\beta} \,.
\end{align*}
As $\beta <1$, the term $\sum_{i \geq 1} (1/i)^{1/\beta}$ is bounded from above by a constant. Hence, it only remains to bound the first term. For the first term we use the same technique as of proof of Theorem \ref{thm:upp-small-k}. In particular, we first want to establish an upper bound on the density. Note that for $i=2$, using Lemma \ref{lem:Deltas}, the density of $\Delta_{(2)} = g_{U_2}$ is upper bounded by
\begin{align*}
	g_{U_2}(u) &= k(k-1) \int_{0}^{1-u} g(u+z) g(z) G(z)^{k-2} dz \leq D_0 u^{\beta-1} \int_0^{1-u} g(u+z) G(z)^{k-2} dz \\
	&\leq D_0 u^{\beta-1} \int_0^{1-u} g(u+z) G(u+z)^{k-2} \leq D_0 k u^{\beta-1}\,.
\end{align*}
For $i \geq 3$, we can write
\begin{align*}
g_{U_i}(u)
&= k(k-1) \binom{k-2}{i-2} \int_0^{1-u} \bp{G(u+z) - G(z)}^{i-2} g(u+z) g(z) G(z)^{k-i} dz\\
&\leq k(k-1) \binom{k-2}{i-2} \frac{D_0}{\beta} u^\beta \int_0^{1-u} \bp{G(u+z)-G(z)}^{i-3} g(u+z) g(z) G(z)^{k-i} dz \\
& = D_0(\beta) u^\beta \frac{k(k-1) \binom{k-2}{i-2}}{(k-1)(k-2)\binom{k-3}{i-3}} \\
& \leq D_0(\beta) \frac{k}{i-2} \\
& \leq D_0(\beta) k \,.
\end{align*}
Having these upper bounds on the densities $g_{U_i}$ it is not difficult to see that the proof of Theorem \ref{thm:upp-small-k} for indices $3 \leq i \leq q+2$ goes through similarly and the contribution of each term is bounded above by $D_0(\beta) k (10 + 18 \log f(T)) \log T$. For $i=2$ the situation is a bit different, but we still can write
\begin{align*}
\E \bb{\min \bp{\frac{20+36\log f(T)}{\Delta_{(i)}}, T \Delta_{(i)} } }
&\leq  D_0 k \bb{\int_0^{1/\sqrt{T}} Tz z^{\beta-1}  dz+ (20+36 \log f(T)) \int_{1/\sqrt{T}}^1 \frac{1}{z} z^{\beta-1} dz} \\
 &\leq \frac{D_0 k T^{(1-\beta)/2} (40 + 72 \log f(T))}{1-\beta} \\
 &=\tilde{O} \bp{k T^{(1-\beta)/2}}\,.
\end{align*}
Putting all the terms together and factoring all constants together as $D_{\text{all}}(\beta)$ we have
\begin{equation*}
	\BR_{T,k}(\text{UCB}) \leq D_{\text{all}}(\beta) \bp{k + \log f(T) k^{1/\beta} + k \log f(T) \log T + k T^{(1-\beta)/2}}\,.
\end{equation*}
Note that for small $k$, the term $k^{1/\beta}$ is dominant in above bound. However, once $k \geq \sqrt{T}$, the term $k T^{(1-\beta)/2}$ is the dominant term.

\paragraph{UCB with large $k$.} From the above regret bound it is clear that for large values of $k$ (specifically, $k \geq \sqrt{T}$) the term $k T^{(1-\beta)}$ is dominant which finishes the proof.

\paragraph{SS-UCB.} For SS-UCB, the results for small $k$ (in both cases $\beta \geq 1$ and $\beta < 1$) are under similar assumption as UCB, i.e., having a bounded density for $\beta \geq 1$ and being bounded from above by $D_0 (1-x)^{\beta-1}$, for $\beta < 1$. Hence, we only focus on proving that if the subsampling rate is chosen correctly, for large values of $k$, $\BR_{T,k}(\text{SS-UCB})$ is of order $T^{\beta/(\beta+1)}$ for $\beta \geq 1$ and of order $\sqrt{T}$ for $\beta < 1$. These results can be proved without putting an assumption on density, but rather under (more relaxed) $\beta$-regular priors. The proof steps are similar to those of Theorem \ref{thm:upp-large-k}. We briefly explain which parts need to be changed below.
\begin{itemize}
	\item $\beta \geq 1$: In this case, we wish to prove that if $m = T^{\beta/(\beta+1)}$ then Bayesian regret of SS-UCB is upper bounded by $\tilde{O}(T^{\beta/(\beta+1)})$. Similar to Theorem \ref{thm:upp-large-k}, we can upper bound $\BR_{T,k}(\text{SS-UCB})$ as sum of $\BR_{T,m}(\text{SS-UCB})$ and $T(1-\max_{l=1}^m \mu_{i_l})$. Pick $\theta = \bb{\log T/(mc_0)}^{1/\beta}$ which implies again that
	\begin{equation*}
		T(1-\max_{l=1}^m \mu_{i_l}) \leq 1 + T \theta = \tilde{O} \bp{T^{\beta/(\beta+1)}}\,.
	\end{equation*}
	For the first term, we can follow the rest of the proof. In the last line, however, we need to use
\[
P(\mu \geq 1 - 3\theta) \leq 3^\beta C_0 \theta^\beta
\]
and the result of Lemma \ref{lem:1gaps} for a general $\beta$. Note that the result  follows from the result of Lemma \ref{lem:1gaps} for general $\beta \geq 1$ instead of $\beta = 1$. In particular, Lemma \ref{lem:1gaps} shows that the difference between the upper bound on the term
	\begin{equation*}
	\E \bb{\Ind(\mu < 1- 3\theta) \bp{1+\frac{1}{1-2\theta-\mu}}}
	\end{equation*}
	for the cases $\beta = 1$ and $\beta > 1$ is only in $\log(1/\theta)$, which translates to a factor $\log T$. This finishes the proof.
	\item $\beta < 1$: Proof is similar to the proof above (case $\beta \geq 1$). Indeed, let $m = T^{\beta/2}$ and follow the steps provided in Theorem \ref{thm:upp-large-k}. Following these steps, we can see that $\theta = (\log T/mc_0)^{1/\beta}  = \tilde{O}(T^{-1/2})$ and hence, $T(1-\max_{l=1}^m \mu_{i_l}) = \tilde{O}(\sqrt{T})$. Our goal is to show that the other term, coming from $\BR_{T,m}(\text{SS-UCB})$ is also $\tilde{O}(\sqrt{T})$. Note that again the last two lines of the proof of Theorem \ref{thm:upp-large-k} need to be modified. Indeed, replacing $P(\mu \geq 1 - 3\theta) \leq 3^\beta C_0 \theta^\beta$ and noting that $\theta = \tilde{O}(T^{-1/2})$ shows that the term $3 \theta T m \P(\mu \geq 1 - 3\theta)$ is $\tilde{O} (\sqrt{T})$. Finally, for the last term we can use Lemma \ref{lem:1gaps}. Here, note that as $1/\theta = T^{-\beta/2} > T^{-1/2}$, this last term is upper bounded by
	\begin{equation*}
	C_0  C(\beta) (20 + 36 \log f(T)) m T^{(1-\beta)/2} = \tilde{O}(\sqrt{T})\,,
	\end{equation*}
	as desired.
\end{itemize}

\subsubsection{Results for Greedy and SS-Greedy}
	Note that the proof of results for Greedy all follow from the generic bound presented in Lemma \ref{lem:greedy-gen}. Note that as discussed in \S \ref{sec:generalizations}, $\betaF$ is lower bound for the exponent of $\delta$ in $\E_\Gamma \bb{ \Ind \bp{ \mu \geq 1-\delta} q_{1-2\delta}(\mu)}$. In particular, Lemmas \ref{lem:bern-crossing}, \ref{lem:max-crossing}, and \ref{lem:subg-crossing} show that:
	\begin{itemize}
	\item For Bernoulli rewards: $\inf_{\mu \geq 1-\delta} q_{1-2\delta}(\mu) = \Omega(1)$ and hence for $\beta$-regular priors we have
	\begin{equation*}
	\E_\Gamma \bb{ \Ind \bp{ \mu \geq 1-\delta} q_{1-2\delta}(\mu)} = \tilde{\Omega}(\delta^\beta) \,.
	\end{equation*}
	Hence, $\betaF = \beta$ satisfies \eqref{eq:gen-prior-delta-scaling}.
	\item For uniformly upward-looking rewards: $\inf_{\mu \geq 1-\delta} q_{1-2\delta}(\mu) = \tilde{\Omega}(\delta)$ and hence for $\beta$-regular priors we have
	\begin{equation*}
	\E_\Gamma \bb{ \Ind \bp{ \mu \geq 1-\delta} q_{1-2\delta}(\mu)} = \tilde{\Omega}(\delta^{\beta+1}) \,.
	\end{equation*}
	Hence, $\betaF  = \beta+1$ satisfies \eqref{eq:gen-prior-delta-scaling}.
	\item For general $1$-subgaussian rewards: $\inf_{\mu \geq 1-\delta} q_{1-2\delta}(\mu) = \Omega(\delta^2)$ and hence for $\beta$-regular priors we have
	\begin{equation*}
	\E_\Gamma \bb{ \Ind \bp{ \mu \geq 1-\delta} q_{1-2\delta}(\mu)} = \tilde{\Omega}(\delta^{\beta+2}) \,.
	\end{equation*}
	Hence, $\betaF = \beta+2$ satisfies \eqref{eq:gen-prior-delta-scaling}.
	\end{itemize}
	Having these bounds on the first term and using inequality $(1-x) \leq \exp(-x)$, the result of Lemma \ref{lem:greedy-gen} can be written as:
	\begin{align}\label{eqn:gen-beta-greedy}
	\BR_{T,k}(\text{Greedy})
	&\leq  T \exp \bp{-k c_{\mathcal{F}}(\beta) \delta^{\betaF}}+3T \delta \nonumber \\
	&+ k \E_\Gamma \bb{\Ind \bp{\mu < 1-3\delta} \min \bp{1+ \frac{3}{C_1(1-2\delta-\mu)}, T(1-\mu)}}\,,
	\end{align}
	where $c_{\mathcal{F}}(\beta)$ is a constant depending on the distribution $\mathcal{F}$ and also $\beta$. Furthermore, Lemma \ref{lem:greedy-gen} also implies that for SS-Greedy the same upper bound holds, with $k$ being replaced with $m$ in Eq. \ref{eqn:gen-beta-greedy}. From here, the results can be proved by choosing the optimal value for $\delta$. We explain these choices for $\beta \geq 1$ and $\beta < 1$.
 	
 	\paragraph{Case $\beta \geq 1$.} In this case, the best choice of $\delta$ is given by
 	\begin{equation*}
 		\delta =  \bp{\frac{\log T}{k c_{\mathcal{F}}(\beta)}}^{1/\betaF}\,.
 	\end{equation*}
Since $\beta \geq 1$, Lemma \ref{lem:1gaps} implies that the last term is also upper bounded by $k C_0 C(\beta)\log(1/\delta) = \tilde{O}(k)$. Therefore, the bound in Eq. \ref{eqn:gen-beta} turns into
 	\begin{align*}
 		\BR_{T,k}(\text{Greedy})
 		&\leq 1 + 3 T \bp{\frac{\log T}{k c_{\mathcal{F}}(\beta)}}^{1/\betaF}
 		+ k  \E_\Gamma \bb{\Ind \bp{\mu < 1-3\delta} \min \bp{1+ \frac{3}{C_1(1-2\delta-\mu)}, T(1-\mu)}} \\
 		&= 1 + \tilde{O}(Tk^{-1/\beta_F}) + \tilde{O}(k) \,.
 	\end{align*}
 	Now note that for small $k$ the first term is dominant, while for the large $k$ the second term is dominant. This proves the result presented in Table \ref{tab:gen-beta} for Greedy, for $\beta \geq 1$. For SS-Greedy, note that the same result applies, with $k$ being replaced with $m$. Note that for $\beta \geq 1$, the optimal $m$ is given by $m = \Theta(T^{\beta_F/(\beta_F+1)})$, which proves the result for the SS-Greedy, when $k$ is large and $\beta \geq 1$.
 	
 	 \paragraph{Case $\beta < 1$.} In this case, the regret bound of Greedy has three regimes. Again pick $\delta$ according to
 	\begin{equation*}
 	\delta =  \bp{\frac{\log T}{k c_{\mathcal{F}}(\beta)}}^{1/\betaF}\,.
 	\end{equation*}
 	As $\beta < 1$, Lemma \ref{lem:1gaps} implies that the last term in Eq. \eqref{eqn:gen-beta-greedy} is also upper bounded by $k C_0 C(\beta) \min \bp{\sqrt{T}, 1/\delta}^{1-\beta}$. Therefore, the bound in Eq. \eqref{eqn:gen-beta} turns into
 	\begin{align*}
 	\BR_{T,k}(\text{Greedy})
 	&\leq 1 + 3 T \bp{\frac{\log T}{k c_{\mathcal{F}}(\beta)}}^{1/\betaF}
 	+ k  \E_\Gamma \bb{\Ind \bp{\mu < 1-3\delta} \min \bp{1+ \frac{3}{C_1(1-2\delta-\mu)}, T(1-\mu)}} \\
 	&= 1 + \tilde{O}(Tk^{-1/\beta_F}) + \tilde{O}\bb{\min \bp{k T^{(1-\beta)/2},  k^{(1-\beta+\beta_F)/\beta_F}}} \,.
 	\end{align*}
 	Now note that for small $k$ the first term is dominant and regret is $\tilde{O} (Tk^{-1/\beta_\mathcal{F}})$ while for the large $k$, the second term is dominant. Note that between these two terms inside the minimum, for mid-range values for $k$, the regret is $k^{(1-\beta+\betaF)/\betaF}$, while for larger values of $k$ the term $k T^{(1-\beta)/2}$ is smaller.
 	
 	For SS-Greedy, note that as the regret in large $k$ is minimum of two terms, we need to carefully check at what $m$ the best regret is achieved. However, as $\beta_F \geq \beta$, it is not difficult to see that the best regret is achieved when $T m^{-1/\betaF} = m^{(\beta_\mathcal{F}-\beta+1)/\beta_F}$, or
 	\begin{equation*}
 		m = \Theta \bp{T^{\beta_F/(\beta_F-\beta+2)}}\,,
 	\end{equation*}
 	which proves $\BR_{T,k}(\text{SS-Greedy}) = \tilde{O} \bp{T^{(\beta_F-\beta+1)/(\beta_F-\beta+2)}}$, as desired.
 \subsection{Sequential Greedy}

\proof{Proof of Lemma \ref{lem:seq-greedy-gen}}
The proof is very similar to the proof of Lemma \ref{lem:greedy-gen}. In particular, the goal is to prove that, if $k_1$ is selected large enough (so that the probability of undesired events are much smaller than $1$), then with a very high probability the algorithm only tries the first $k_1$ and hence the term $k$ in the regret will be replaced with $k_1$. To formally prove this, we show that on most problem instances only first $k_1$ arms are tried and in those cases we can replace $k$ in the bound of Lemma \ref{lem:greedy-gen} with $k_1$, and the contribution of problems where more than $k_1$ arms are pulled is at most $T$ times the probability of such events. While, the proof steps are very similar to that of Lemma \ref{lem:greedy-gen}, we provide a proof for completeness below.

Fix a realization $\bmu = (\mu_1, \mu_2, \cdots, \mu_k)$  and partition the interval $(0,1)$ into sub-intervals of size $\delta$. In particular, let $I_1, I_2, \ldots, I_h$ be as $I_1 = [1-\delta, 1]$ and $I_j = [1-j\delta, 1-(j-1)\delta)$, for $j \geq 2$, where $h = \lceil{1/\delta \rceil}$. For any $1 \leq j \leq h$, let $A_j$ denote the set of arms with means belonging to $I_j$. Suppose that for each arm $i \in [k]$, a sequence of i.i.d. samples from $P_{\mu_i}$ is generated. Furthermore, for each $1 \leq j \leq h$ define the set $B_j = A_j \cap [k_1]$, which includes the set of all arms among the first $k_1$ indices that belong to $A_j$. Then, using definition of function $q_{1-2\delta}$ in Eq. \eqref{eqn:max-cross}, for any arm $i \in B_1$ we have
\begin{equation*}
\P \bb{\exists t \geq 1: \hmu_{i}(t) \leq 1-2\delta} = 1 - q_{1-2\delta}(\mu_i)\,,
\end{equation*}
where $\hmu_{i}(t)$ is the empirical average of rewards of arm $i$ at time $t$. Now, consider all arms that belong to $B_1$. Define the bad event as $\bar{G} = \cap_{i \in B_1} \bc{\exists t \geq 1: \hmu_{i}(t) \leq 1 - 2\delta}$ meaning that the sample average of all arms in $B_1$ drops below $1-2\delta$ at some time $t$. Similarly, define the good event $G$, as the complement of $\bar{G}$. We can write
\begin{equation}\label{eqn:barg}
\P \bb{\bar{G}} \leq \prod_{i \in B_1} (1-q_{1-2\delta}(\mu_i))\,.
\end{equation}
The above bounds the probability that $\bar{G}$, our bad event happens. Now note that if the (good) event $G$ happens, meaning that there exists an arm in $B_1$ such that its empirical average never drops below $1-2\delta$ we can bound the regret as follows. First note that in the case of good event, the algorithm only pulls the first $k_1$ arms and hence for all $i > k_1, N_i(T) = 0$. Therefore,
\begin{equation*}
R_T(\text{Seq-Greedy} \mid \bmu) \leq \sum_{i=1}^{k_1} (1-\mu_i) \E[N_i(T)]\,,
\end{equation*}
Our goal is to bound $\E[N_i(T)]$ for any arm that belongs to $\cup_{j=4}^h B_j$. Indeed, a standard argument based on subgaussianity of arm $i$ implies that
\begin{align*}
\P{N_i(T) \geq t + 1 \mid \bmu}
\leq
\P{\hmu_i(t) \geq 1- 2\delta \mid \bmu}
\leq
\exp \bp{-t(1-2\delta-\mu_i)^2/2},
\end{align*}
where we used the fact that a suboptimal arm $i$ (with mean less than $1-3\delta$) only would be pulled for the $t+1$ time if its estimate after $t$ samples is larger than $1-2\delta$ and that (centered version of) $\hmu_i(t)$ is $1/t$-subgaussian. Now note that for any discrete random variable $Z$ that only takes positive values, we have $\E[Z] = \sum_{t=1}^{\infty} \P(Z \geq t)$. Hence,
\begin{align*}
\E[N_i(T) \mid \bmu]
= 1 + \sum_{t \geq 1} \P(N_i(T) \geq t+1 \mid \bmu)
&\leq 1 + \sum_{t=1}^{\infty} \exp\bp{\frac{-t(1-2\delta-\mu_i)^2}{2}} \\
&= 1 + \frac{1}{1-\exp \bp{\frac{(1-2\delta-\mu_i)^2}{2}}}\,.
\end{align*}
Now note that for any $z \in [0,1]$ we have $\exp(-z) \leq 1 - 2C_1z$ where $C_1=(1-\exp(-1))/2$. Hence, we have
\begin{equation*}
\E[N_i(T) \mid \bmu] \leq 1 + \frac{1}{C_1(1-2\delta-\mu_i)^2}\,,
\end{equation*}
which implies that
\begin{align*}
(1-\mu_i) \E[N_i(T) \mid \bmu]
= (1-\mu_i) + \frac{1-\mu_i}{C_1(1-2\delta-\mu_i)^2}
\leq 1 + \frac{3}{C_1(1-2\delta-\mu_i)}\,,
\end{align*}
where we used the inequality $1-\mu_i \leq 3(1-2\delta-\mu_i)$ which is true as $\mu_i \leq 1-3\delta$. Note that the above is valid for any arm that belongs to $\cup_{j=4}^h B_j$. Furthermore, the expected number of pulls of arm $i$ cannot exceed $T$. Hence,
\begin{equation*}
(1-\mu_i) \E[N_i(T) \mid \bmu]
\leq \min \bp{1 + \frac{3}{C_1(1-2\delta-\mu_i)}, T(1-\mu_i)} \,.
\end{equation*}
As a result,
\begin{equation*}
R_T \bp{\text{Seq-Greedy} \mid \bmu} \leq T \P \bb{\bar{G}}  + 3T \delta + \sum_{j=4}^h \sum_{i \in B_j}  \min \bp{1 + \frac{3}{C_1(1-2\delta-\mu_i)}, T(1-\mu_i)}\,.
\end{equation*}
We can replace $\P \bb{\bar{G}}$ from Eq. \eqref{eqn:barg} and take expectation with respect to the prior. Note that the first term we can also be rewritten as $\prod_{i=1}^{k_1} \bb{1-\Ind(\mu \geq 1-\delta) q_{1-2\delta}(\mu_i)}$. Hence, taking an expectation with respect to $\mu_1, \mu_2, \cdots, \mu_k \sim \Gamma$ implies
\begin{align*}
\BR_{T,k} \bp{\text{Seq-Greedy}}
&\leq  T \bp{1 - \E_\Gamma \bb{\Ind \bp{\mu \geq 1-\delta} q_{1-2\delta}(\mu))}}^{k_1}
+3T \delta \\
&+ k_1 \E_\Gamma \bb{\Ind \bp{\mu < 1-3\delta} \min \bp{1+ \frac{3}{C_1(1-2\delta-\mu)}, T(1-\mu)}}\,,
\end{align*}
as desired. \Halmos
\endproof

\proof{Proof of Theorem \ref{thm:seq-greedy}}
	We prove the result only for the Bernoulli case, the other cases use similar arguments. We want to show that when $k < \sqrt{T}$, Bayesian regret of Seq-Greedy is at most $\tilde{O}(T/k)$ and if $k > \sqrt{T}$ it is at most $\tilde{O}(\sqrt{T})$. Consider the first case and note that in this case $\delta = \tilde{\Theta} \bp{k^{-1}}$. We claim that for this $\delta$ by selecting $k_1 = k$ in Lemma \ref{lem:seq-greedy-gen}, we have
	\begin{align*}
	\BR_{T,k}(\text{Seq-Greedy})
	&\leq T \bp{1 - \E_\Gamma \bb{\Ind \bp{\mu \geq 1-\delta} q_{1-2\delta}(\mu))}}^k
	+3T \delta \\
	&+ k\E_\Gamma \bb{\Ind \bp{\mu < 1-3\delta} \min \bp{1+ \frac{3}{C_1(1-2\delta-\mu)}}}\,,
	\end{align*}
	and the conclusion follows similar to the proof of Theorem \ref{thm:greedy-opt-bern}. Indeed, one can show that if $\delta = 5 \log T/(kc_0)$, then the above rate becomes $\tilde{O}(T/k)$.
	Now consider the other case that $k > \sqrt{T}$, meaning that $\delta = \tilde{\Theta} \bp{T^{-1/2}}$. In this case, by selecting $k_1 = \lfloor{ \sqrt{T} \rfloor} < k$, we have
	\begin{align*}
		\BR_{T,k}(\text{Seq-Greedy})
		&\leq T \bp{1 - \E_\Gamma \bb{\Ind \bp{\mu \geq 1-\delta} q_{1-2\delta}(\mu))}}^{k_1}
		+3T \delta \\
		&+ k_1 \E_\Gamma \bb{\Ind \bp{\mu < 1-3\delta} \min \bp{1+ \frac{3}{C_1(1-2\delta-\mu)}}}\,.
	\end{align*}
	which for the choice $\delta = \Theta \bp{T^{-1/2}}$ translates to the upper bound on $\BR_{T, k_1}(\text{Greedy}) \leq \tilde{O} (\sqrt{T})$, according to Theorem \ref{thm:greedy-opt-bern}. The proof for $1$-subgaussian and uniformly upward-looking rewards is similar. \Halmos
\endproof 
\end{APPENDICES}

\end{document}